\documentclass{article}
\usepackage[dvipsnames]{xcolor}

\usepackage{graphicx}
\usepackage[export]{adjustbox}
\usepackage[numbers,sort&compress]{natbib}
\usepackage{iclr2027_conference,times}
\usepackage{tikz}
\usepackage{harpoon}
\usepackage{hyperref}
\usetikzlibrary{shapes.geometric, arrows.meta}
\usepackage[utf8]{inputenc}
\usepackage[T1]{fontenc}
\usepackage{amsmath,amssymb,amsthm,mathtools}
\usepackage{hyperref}
\usepackage{graphicx}
\usepackage{booktabs}
\usepackage{algorithm}
\usepackage{algpseudocode}
\usepackage{bm}
\usepackage{xcolor}
\usepackage{empheq}
\usepackage{enumitem}
\usepackage{placeins}
\usepackage{multirow}
\usepackage{dsfont}
\usepackage{microtype}
\hypersetup{colorlinks=true, linkcolor=blue!50!black, citecolor=blue!50!black}
\usepackage{makecell}
\usepackage{wrapfig2}

\newtheorem{theorem}{Theorem}

\newtheorem{lemma}{Lemma}
\theoremstyle{remark}
\newtheorem{remark}{Remark}
\theoremstyle{plain}

\newif\ifarxiv
\arxivfalse

\newcommand{\R}{\mathbb{R}}
\newcommand{\E}{\mathbb{E}}
\newcommand{\N}{\mathcal{N}}
\newcommand{\Var}{\mathrm{Var}}
\newcommand{\dd}{\mathrm{d}}
\newcommand{\ESS}{\mathrm{ESS}}

\newcommand{\ind}[1]{\mathbf{1}\{#1\}}
\newcommand{\inds}[1]{\mathbf{1}_{#1}}

\newcommand{\Ev}{E}

\newcommand{\pev}[1]{p_0[#1]}
\newcommand{\hpev}[1]{\hat{p}_0[#1]}

\newcommand{\pE}{\pev{\Ev}}

\newcommand{\hpE}{\hpev{\Ev}}

\newcommand{\pz}{p_0}
\newcommand{\pt}{p_t}
\newcommand{\piz}{\pi_0}
\newcommand{\Zz}{Z_0}

\newcommand{\qz}{q_0}
\newcommand{\pit}{\pi_t}

\makeatletter
\newcommand{\dirharp@gap}{0ex}
\newcommand{\dirharp@scale}{0.8}
\newcommand{\dirharp}[2]{\mathpalette{\dirharp@{#1}}{#2}}
\newcommand{\dirharp@}[3]{%
  \sbox\z@{$\m@th#2#3$}%
  \ooalign{%
    \hfil$\m@th#2#3$\hfil\cr
    \hfil\raisebox{\dimexpr\ht\z@+\dirharp@gap\relax}{%
      \scalebox{\dirharp@scale}{$\m@th\scriptstyle#1$}}\hfil\cr}}
\makeatother
\newcommand{\fwdh}[1]{\dirharp{\rightharpoonup}{#1}}
\newcommand{\bwdh}[1]{\dirharp{\leftharpoonup}{#1}}

\newcommand{\mut}{\mu_t}

\newcommand{\Wb}{\bwdh{W}}
\newcommand{\sth}{s_\theta}

\newcommand{\xht}{\hat{x}_{0|t}}
\newcommand{\xh}{\xht}
\newcommand{\Sig}{\Sigma_t}
\newcommand{\sbar}{\bar\sigma}

\newcommand{\pbwdk}[2]{\bwdh{P}_{#1\mid #2}}

\newcommand{\pdata}{p_{\mathrm{data}}}
\newcommand{\pref}{p_{\mathrm{ref}}}
\newcommand{\pEdata}{\pdata[\Ev]}
\newcommand{\hpMC}{\hat{p}^{\mathrm{MC}}_0[\Ev]}
\newcommand{\Pk}{\bwdh{P}_{k-1\mid k}}
\newcommand{\Qk}{\bwdh{Q}_{k-1\mid k}}
\newcommand{\Pkr}{\bwdh{P}^{\star}_{k-1\mid k}}

\newcommand{\xhk}{\hat{x}_{0\mid k}}
\newcommand{\Sigk}{\Sigma_k}

\newcommand{\om}{\zeta}

\newcommand{\rz}{r_0}
\newcommand{\rt}{r_t}
\newcommand{\rstar}{r^{\star}_t}
\newcommand{\rs}[1]{r^{\star}_{#1}}
\newcommand{\rsoft}{r_{0,\delta}}
\newcommand{\pisoft}{\pi_{0,\delta}}
\newcommand{\Zsoft}{Z_{0,\delta}}

\newcommand{\rd}{\rsoft}

\newcommand{\hg}[1]{h_{#1}}

\newcommand{\Fk}{\fwdh{F}_{k\mid k-1}}
\newcommand{\Bk}{\bwdh{B}_{k-1\mid k}}
\newcommand{\gk}{g_{k-1\mid k}}

\newcommand{\Pm}{\mathbb{P}}
\newcommand{\Qm}{\mathbb{Q}}

\newcommand{\Pmr}{\mathbb{P}_{\rd}}

\newcommand{\Pmk}{\mathbb{P}_{0:K}}
\newcommand{\Qmk}{\mathbb{Q}_{0:K}}
\newcommand{\Pms}{\mathbb{P}^{\star}_{0:K}}
\newcommand{\PmE}{\mathbb{P}^{\Ev}_{0:K}}
\newcommand{\pis}[1]{\pi^{\star}_{#1}}

\newcommand{\dPQ}{\frac{\dd\Pm}{\dd\Qm}}

\newcommand{\taul}{\tau_\ell}
\newcommand{\tauh}{\tau_h}

\newcommand{\pfwd}{\fwdh{p}}
\newcommand{\pbwd}{\bwdh{p}}
\newcommand{\qfwd}{\fwdh{q}}
\newcommand{\qbwd}{\bwdh{q}}
\newcommand{\Ez}{\mathcal{E}_0}
\newcommand{\Eg}{\mathcal{E}_\gamma}
\newcommand{\Dmap}{D_{\mathrm{map}}}
\arxivtrue
\iclrfinalcopy
\let\cite\citep
\usepackage{comment}
\ificlrfinal
  \newenvironment{ack}{\section*{Acknowledgments}}{}
\else
  \excludecomment{ack}
\fi

\title{Steering Diffusion Models to Rare Events with Sequential Monte Carlo}
\author{%
  Aavash Subedi\thanks{Correspondence to \texttt{subedi@stats.ox.ac.uk}.} \\
  Department of Statistics \\
  University of Oxford \\
  \And
  Tim Reichelt \\
  Department of Physics \\
  University of Oxford \\
  \And
  Christopher Williams \\
  Department of Statistics \\
  University of Oxford \\
  \And
  Philip Stier \\
  Department of Physics \\
  University of Oxford \\
  \And
  Yee Whye Teh \\
  Department of Statistics \\
  University of Oxford \\
  \And
  Saifuddin Syed \\
  Department of Statistics \\
  University of British Columbia \\
}

\begin{document}

\addtocontents{toc}{\protect\setcounter{tocdepth}{-5}}

\maketitle
\lhead{Preprint.}
\begin{abstract}

Diffusion models are increasingly used as surrogates for expensive simulators in weather prediction, molecular dynamics, and materials design. In these models, computing the probability $p_0[E]$ of an event $E$ is difficult, especially when the event of interest is \emph{rare}. A stable estimate using Monte Carlo becomes computationally intractable, requiring a growing sample size $\propto\!1/p_0[E]$ to compensate for an increasing rarity. In this paper, we present \textbf{Di}ffusion \textbf{r}are \textbf{e}vent sampling with \textbf{S}equential \textbf{M}onte \textbf{C}arlo (\textbf{DireSMC}), a sequential Monte Carlo scheme that guides a population of weighted samples towards the rare event, giving access not only to samples but also to a calibrated estimate of its probability. We set up our guidance using an analytical relaxation of the event set, allowing the method to easily extend to a wide range of user-defined rare events. We validate our method on a toy problem with analytical solutions and on a score-based climate emulator, where we obtain accurate rare-event probabilities on a range of rarities from $10^{-3}$ to $10^{-5}$, achieving net speed-ups of $9\times$ to $1413\times$ over Monte Carlo.
\end{abstract}
\section{Introduction}
Score-based diffusion models \cite{song2021scorebasedgenerativemodelingstochastic} are a powerful tool for generative modelling, and have become standard in the quantitative sciences as cheap surrogates for expensive simulators in numerical weather prediction \cite{couairon2026archesweathergen, price2024gencastdiffusionbasedensembleforecasting}, molecular simulation \cite{abramson2024accurate} and materials design \cite{zeni2025generative}. Trained on data from a distribution $\pz$, they generate new representative samples by running a denoising process $\pt$ that anneals from a Gaussian reference $p_1$ down to the data $p_0$, driven by a learned score $\nabla_x\log\pt$. 

Often, scientists are interested in events $\Ev$ lying in regions where
the data distribution assigns little mass $\pE$. 
Such \emph{rare events} carry outsized downstream impacts, and arise across a wide range of domains, from the environmental sciences \cite{manshausen2026accurateextremeeventlikelihoods, b1gc-9c2q, peard2025simulating} and bioinformatics \cite{xie2026enhanced} to finance \cite{glasserman2005importance}, engineering \cite{au2014engineering} and autonomous driving \cite{o2018scalable}.
Practitioners typically need two things: representative samples of the event, to characterise possible failure modes, and a calibrated estimate of its probability $\pE$, to inform planning and resource allocation \cite{kaplan1981quantitative, rubino2009rare}. 
The simplest approach is to draw $N$ independent samples and form a Monte Carlo (MC) average to estimate $\pE$, which is difficult precisely because the event is rare. The relative error is $1/\sqrt{N\pE}$, hence stabilising the variance requires  $N\!\propto\!1/\pE$ samples, which is prohibitively expensive \cite{morio2014survey, rubino2009rare}. This has motivated a large body of research designed specifically to tackle the poor scaling of standard MC  \cite{Kahn1951SplittingParticleTransmission, morio2014survey, glasserman2005importance}. 

Diffusion model steering offers a natural method for generating samples from $E$ without  the high  costs of simple MC. A reward $r_0$ defines a tilted target $\pi_0 \propto p_0\, e^{r_0}$, and inference-time guidance steers a pre-trained sampler towards it. At each denoising step, the gradient of a time-dependent reward $r_t$ is added, so that the sampler targets $\pi_t \propto p_t\, e^{r_t}$ \cite{uehara2025inference, chung2023diffusion, song2023pseudoinverseguided}. 
However, designing a good reward is difficult as the ideal $r_t$ is often intractable, and any approximations used can guide the sampler towards a different distribution to $\pi_0$. For rare-event sampling specifically, setting the tilting term $e^{r_0} \equiv\mathbf{1}_E$ recovers the rare-event tilted distribution $p_0(\cdot |E) \propto p_0 \mathbf{1}_E$ that we wish to sample from. However, this reward is a poor object for steering, as its gradient cannot be used to steer particles towards $E$. 

\begin{figure}
    \centering
    \vspace{-1em}
    \includegraphics[width=0.9\linewidth]{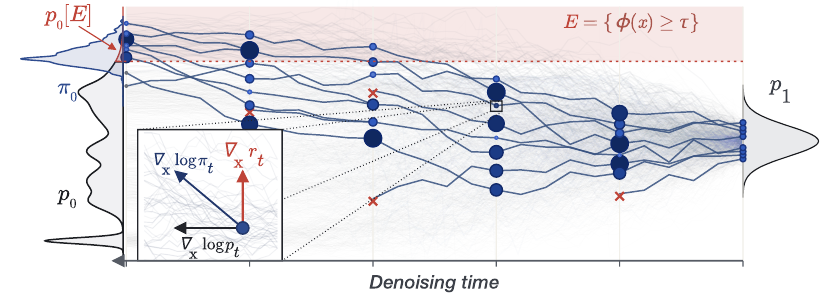}
    \vspace{-1em}
    \caption{We steer particles (\textcolor{Blue}{$\bullet$}) from $\pref$ towards \textcolor{Blue}{$\pi_{0}$}, which concentrates mass in the rare-event region of interest, using a guidance drift $\textcolor{red}{\nabla_x r_k}$. Particle size reflects the importance weights accumulated along the trajectory, which are used to resample (\textcolor{red}{$\times$}) and form a rare event estimate, $\pE$.}
    \label{fig:begining_shot_placeholders}
    \vspace{-14pt}
\end{figure}
Recent works have attempted to address these problems.
The error of using an imperfect reward $r_t$ can be addressed using sequential Monte Carlo (SMC) by tracking importance weights along the denoising trajectory and resampling particles to focus on high probability regions \cite{wu2023practical, he2026rne, singhal2025general, skreta2025feynman}.
In diffusion-based molecular simulation, \citet{xie2026enhanced} adapt ideas from \citet{abrams2013enhanced,shirts2008statistically} to estimate the occupancy of a rare metastable state using collective variables and a sequence of events $E_i$ converging to $E$. 
\citet{manshausen2026accurateextremeeventlikelihoods} instead steer a diffusion model using the probability-flow ODE (PF-ODE) \cite{song2020denoising}, and estimate $p_0[E]$ using importance sampling.
These methods have high compute costs, which become especially problematic in the rare event settings of
interest here. 

\textbf{Contributions.} 
\emph{First}, we present DireSMC where we use a sequence of tilted marginals $\pi_t$ to construct an unbiased sequential Monte Carlo estimator of $p_0[E]$ that runs within the denoising process, obtaining both samples and probability estimates. We show that the variance is governed by the quality of our steering rather than the rarity. \emph{Second}, we show how to design a reward for rare-event simulation, and in particular for threshold-exceedance events $\Ev\!=\!\{x : \phi(x)\ge\tau\}$, giving a smooth surrogate whose look-ahead guidance is closed-form under a Tweedie posterior approximation \cite{song2023pseudoinverseguided, chung2023diffusion}. \emph{Third}, we empirically validate on: (1) a Gaussian mixture where ground truth is available and we obtain probability estimates that do not degrade across $10^{-3}$--$10^{-5}$ orders of rarity. (2) a score-based climate emulator, where we draw samples and estimate probabilities of  compound hot-and-dry extremes over the region of the 2021 Pacific Northwest heatwave, with net speed-ups of $9\times$ to $1413\times$ over Monte Carlo across rarities from $10^{-3}$ to $10^{-5}$.

\section{Background}
\label{sec:background}

\textbf{Diffusion models.}
Let $\pdata$ be the true data generating distribution. A diffusion model \cite{song2021scorebasedgenerativemodelingstochastic} progressively noises the data along a path $p_t$ interpolating from $\pdata$ at $t = 0$ to a Gaussian reference $\pref$ at $t=1$. Sampling reverses this process: a $X_1 \sim \pref$ draw is evolved to $t=0$, under the reverse-time SDE,
\begin{equation}
\label{eq:reverse_sde_sigma}
    dX_t = \big[\mut(X_t)-\sigma_t^2\, \nabla_x \log \pt(X_t)\big]\,dt + \sigma_t\,d\Wb_t,\qquad t\in[0, 1],
\end{equation}
to generate from $\pdata$. The drift $\mut$ and the diffusion coefficient $\sigma_t$ are fixed by the noise schedule, and the score $\nabla_x \log \pt$ is approximated by a pre-trained network. In practice, Eq.~\eqref{eq:reverse_sde_sigma} cannot be simulated directly. Instead, we fix a discretisation schedule $0=t_0<\cdots<t_K=1$ and a numerical integrator whoese $k$-th step $t_k\!\rightarrow\!t_{k-1}$ defines a Markov transition kernel $\Pk(\dd x_{k-1}\mid x_k)$. Sampling corresponds to composing the $K$-step chain,
\begin{equation}
\label{eq:base_chain}
    X_K\sim \pref, \qquad X_{k-1}\mid X_k \sim \Pk(\dd x_{k-1}\mid X_k), \qquad k=K,\dots,1.
\end{equation}
Let $p_k$ denote the marginal law of $X_k$ under this chain. Hence $p_K\!=\!\pref$ and $p_{k-1}(x_{k-1})\!=\!\int p_k(x_k)\,\Pk(x_{k-1}\mid x_k)\,\dd x_k$. Henceforth, we refer to the end-point of the discretised chain as $p_0$, with $X_0\! \sim\!p_0$. As the discretisation is refined both $p_k\!\rightarrow\!p_{t= t_{k}}$ and $p_0\!\rightarrow\!\pdata$ at a rate given by the weak-order convergence of the integrator \cite{milstein2004stochastic, chen2023sampling, benton2024nearly}. Throughout, we assume that we can simulate from the transition kernels and have access to the score, but not to the densities $p_t$ and $p_k$, which are generally intractable.

\textbf{Rare Event Probabilities.}
Given a rare-event set $\Ev \subset \R^d$ and the endpoint law of the chain $p_0$, we seek to estimate its probability $\pE\!=\!\int \mathbf{1}_E(x)  p_0(x) \dd x $. A natural estimator of $\pE$ is obtained by drawing $N$ independent samples, $X^n_0 \sim p_0$ and forming the Monte Carlo (MC) estimate,
\begin{equation}
\label{eq:mc_estimator}
    \hpMC := \frac1N\sum_{n=1}^N \inds{\Ev}\big(X_0^n\big), \qquad \E\big[\hpMC\big] = \pE.
\end{equation}
 However, its relative variance is $(1-\pE)/(N\pE)$, so when the event is rare, an impractical number $N\propto 1/\pE$ of runs are needed for a fixed relative error \cite{rubino2009rare}.

\vspace{-5pt}
\section{Rare-event sampling and probability estimation}
\vspace{-5pt}
\label{sec:method}
\textbf{Notation.} For a measure $\mu$, $\mu[f]\!:=\!\int f(x)\,\mu(\dd x)$ for a function $f$ and $\mu[A]\!:=\!\mu[\inds{A}]$ for a set $A$. Square brackets denote action on a function or set, and parentheses a density or kernel evaluation. 

We seek to construct a low variance estimator of $p_0[E]$ with the same expectation as Eq.~\eqref{eq:mc_estimator}.
We do so by reward tilting \cite{rubino2009rare, uehara2025inference}. 
A user-specified reward $\rz:\R^d \rightarrow \R$ biases the base density to $\gamma_0(x) \coloneqq \pz(x)\,e^{\rz(x)}$ with a normalised form given by $\pi_0 := \gamma_0/\gamma_0[1]$. 
Since $\pz = \gamma_0\,e^{-\rz}$, both samples and estimates are functionals of the tilted law,
\begin{equation}
\label{eq:tilt-identity}
    \pE = \gamma_0\big[e^{-\rz}\inds{\Ev}\big],
    \qquad
    \pz(x\mid\Ev) = e^{-\rz(x)}\inds{\Ev}(x)\,\pi_0(x) \big/ \pi_0\big[e^{-\rz}\inds{\Ev}\big].
\end{equation}
In the remainder of the section, we detail how to efficiently sample from $p_0(\cdot |E)$, construct an estimator $\hat{p}_0[E]$ using Sequential Monte Carlo (SMC), and design a reward that reduces its variance. While we present the following for discretised transition kernels, DireSMC holds automatically for the continuous time construction of a diffusion model.

\subsection{Steering The diffusion Models towards Rare Events}
\label{sec:sequential}
 The marginals of the diffusion model, $p_k$, tilted by the reward $r_k$, define a sequence of densities,
\begin{equation}
\label{eq:target_sequence}
    \gamma_k(x) := p_k(x)\,e^{r_k(x)}, \qquad \pi_k(x) := {\gamma_k(x)}/{\gamma_k[1]}, \qquad k = K, \dots, 0, 
\end{equation}
pinned at both ends. At $k=0$ (data), we set $r_0$ to be the reward of Eq.~\eqref{eq:tilt-identity}, and at $k=K$ (noise) we impose $r_K = 0$, such that $\gamma_K = \pref$. We defer reward path $r_{K-1},\dots, r_1$ selection to Section~\ref{sec:reward_selection}.

\textbf{Sequential Monte Carlo Sampler.} We sample from the sequence of marginals of Eq.~\eqref{eq:target_sequence} using a sequential Monte Carlo sampler \cite{del2006sequential, doucet2022scorebased, vargas2023denoising}. We refer readers to App.~\ref{app:generic_smc_algorithm} for its general formulation; below we give a version adapted to our setting. 
Firstly, initialise $N$ particles, $X_K^n\sim \pref$ with weights $w_K^n=1$. At each step $k$, (i) propagate each particle through a backward proposal kernel, $\Qk$, and (ii) update its weight by,
\begin{equation}
\label{eq:smc_incremental_weight_bg}
  X_{k-1}^n\sim \Qk(\cdot \mid X_k^n),\qquad w_{k-1}^n = w_k^n\,\gk\big(X_{k-1}^n, X_k^n\big),
\end{equation}
where $\gk$ is the incremental weight function built from an auxiliary forward kernel $\Fk$,
\begin{equation}
\label{eq:generic_weight}
    \gk(x_{k-1}, x_k)
= \Bigg(\frac{\exp\big(r_{k-1}(x_{k-1})\big)}
       {\exp\big(r_{k}(x_k)\big)} \Bigg)
\cdot
\Bigg(
\underbrace{
\frac{p_{k-1}(x_{k-1})}
     {p_{k}(x_k)}
}_{\text{intractable}}
\cdot
\frac{\Fk(x_k\mid x_{k-1})}
     {\Qk(x_{k-1}\mid x_k)} \Bigg).
\end{equation}
The first factor is the ratio of reward-tilts at consecutive steps; the second is a likelihood ratio of the base process, involving the marginals $p_{k}$ we cannot evaluate. The choice of both kernels is flexible, provided we can sample from $\Qk$ and evaluate $\gk$ pointwise \cite{del2006sequential}. Optionally, when the effective sample size, $\ESS(w_{k-1}) := \|w_{k-1}\|_1^2 / \|w_{k-1}\|_2^2$ drops below a threshold $\eta N$ for $\eta \in [0, 1]$, we can resample, replacing each particle by an ancestor drawn in proportion to its weight and resetting the weights to the average weights of the ancestors. This is done through setting $X_{k-1}^n \gets X_{k-1}^{a^n}$ and $w_{k-1}^n \gets \frac1N\sum_n w^n_{k-1}$ with the ancestor index $a^n$ drawn, e.g. systematically, from $\Pr[a^n = m \mid w_{k-1}] \propto w_{k-1} ^m$ \cite{chopin2022resampling}.
As resampling duplicates high-weight particles and depletes diversity, we follow it with a rejuvenation step, $X_{k-1}^n \sim M_{k-1}(\cdot \mid X_{k-1}^n)$, using a $\pi_{k-1}$-invariant MCMC kernel $ M_{k-1}$. We use the Metropolis-adjusted diffusion model (MADM) kernel of \cite{lam2026metropolis}, which approximates a Metropolis-adjusted Langevin kernel using only the score and $r_k$; both quantities that we already have access to (App.~\ref{sec:madm_details}).

\textbf{Choice of Kernels.} We construct our kernels to, (i) circumvent the intractable marginals and (ii) concentrate samples on $E$. For the first, we set $\Fk$ as the time reversal of the base chain, 
\begin{equation}
    \Fk(x_k\mid x_{k-1}) := p_k(x_k)\,\Pk(x_{k-1}\mid x_k)/p_{k-1}(x_{k-1}),
\end{equation}
where $\Pk$ is the reverse diffusion kernel. This removes the intractable ratio and is a Markov kernel by the marginal recursion of $p_{k-1}$ of Section~\ref{sec:background}. Our backward proposal, $\Qk$ is defined by adding the gradient of the reward, or the \emph{guidance}, to the reverse drift of Eq.~\eqref{eq:reverse_sde_sigma},
\begin{equation}
\label{eq:guided_sde}
    dX_t = \big[\mut(X_t)-\sigma^2_t\,\big(\nabla_x\log \pt(X_t) + \nabla_x\rt(X_t)\big)\big]\,dt + \sigma_t\,d\Wb_t,
\end{equation}
so that steering is exactly a tilt of the score, $ \nabla_x \log p_{k} \rightarrow \nabla_x \log p_{k} + \nabla_{x} r_k$ and $\Qk$ and $\Pk$ share the \emph{same} integration steps but differ in the score. In our construction the \emph{guidance strength} is defined within $r_k$ \cite{uehara2025inference}. Substituting into Eq.~\eqref{eq:generic_weight} gives the update function,
\begin{equation}
\label{eq:kernel_ratio_weight}
    \gk(x_{k-1}, x_k)
    = \frac{e^{r_{k-1}(x_{k-1})}}{e^{r_k(x_k)}} \cdot \frac{\Pk(x_{k-1}\mid x_k)}{\Qk(x_{k-1}\mid x_k)}.
\end{equation}

\textbf{Computing the Incremental Weights.} Eq.~\eqref{eq:kernel_ratio_weight} requires the ratio $\Pk/\Qk$ pointwise. For one-step \emph{explicit} integrators \cite{milstein2004stochastic} with Gaussian transitions and a mean affine in the score, guidance shifts the mean but leaves the covariance invariant. So, for integrator dependent scalars $c_k, v_k>0$,
\begin{equation}
\label{eq:gaussian_kernels}
\begin{aligned}
    \Pk(\dd x_{k-1}| x_k) &= \N\big(m_k(x_k),\, v_k I\big), \\
    \Qk(\dd x_{k-1}| x_k) &= \N\big(m_k(x_k) + c_k\nabla_x r_k(x_k),\, v_k I\big).
\end{aligned}
\end{equation}
This holds for first-order stochastic integrators such as Euler-Maruyama or SEEDS-1 of \cite{gonzalez2023seeds}, but not for higher-order schemes that evaluate the score at a noisy intermediate point, e.g. the stochastic Heun integrator of \cite{karras2022elucidatingdesignspacediffusionbased}.

\begin{lemma}
\label{prop:omega_sufficent}
Under Eq.~\eqref{eq:gaussian_kernels}, let $\om_k := c_k/\sqrt{v_k}$, $\epsilon_k\sim\N(0,I)$ for sampling $x_{k-1}$ from $\Qk$, then
\begin{equation}
\label{eq:dlogw}
    \log \gk(x_{k-1},x_k) =  r_{k-1}(x_{k-1}) - r_k(x_k) - (\om_k^2/2)\big\|\nabla_x r_k(x_k)\big\|^2 - \om_k \big\langle \nabla_x r_k(x_k), \epsilon_k\big\rangle .
\end{equation}
\end{lemma}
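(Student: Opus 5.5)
Starting from Eq.~\eqref{eq:kernel_ratio_weight}, take logarithms. The reward factor contributes $r_{k-1}(x_{k-1}) - r_k(x_k)$ directly, so the whole task is to compute $\log\big(\Pk(x_{k-1}\mid x_k)/\Qk(x_{k-1}\mid x_k)\big)$ using the Gaussian forms in Eq.~\eqref{eq:gaussian_kernels}. Both kernels share the covariance $v_k I$, so their normalising constants $(2\pi v_k)^{-d/2}$ cancel. Only the quadratic forms in the exponents remain:
\begin{equation*}
\log\frac{\Pk}{\Qk} = -\frac{1}{2v_k}\big\|x_{k-1}-m_k(x_k)\big\|^2 + \frac{1}{2v_k}\big\|x_{k-1}-m_k(x_k)-c_k\nabla_x r_k(x_k)\big\|^2 .
\end{equation*}

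Next, reparametrise by the noise used when sampling from $\Qk$:
\begin{equation*}
x_{k-1} = m_k(x_k) + c_k\nabla_x r_k(x_k) + \sqrt{v_k}\,\epsilon_k .
\end{equation*}
Then $x_{k-1}-m_k(x_k)-c_k\nabla_x r_k(x_k) = \sqrt{v_k}\,\epsilon_k$, so the second term becomes $\|\epsilon_k\|^2/2$. In the first term, $x_{k-1}-m_k(x_k) = c_k\nabla_x r_k + \sqrt{v_k}\,\epsilon_k$. Expanding its squared norm and dividing by $2v_k$ gives
\begin{equation*}
\frac{c_k^2}{2v_k}\|\nabla_x r_k\|^2 + \frac{c_k}{\sqrt{v_k}}\langle\nabla_x r_k,\epsilon_k\rangle + \frac{1}{2}\|\epsilon_k\|^2 .
\end{equation*}
Subtracting, the $\|\epsilon_k\|^2/2$ terms cancel. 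Writing $\om_k = c_k/\sqrt{v_k}$, so that $c_k^2/v_k = \om_k^2$, what remains is $-(\om_k^2/2)\|\nabla_x r_k(x_k)\|^2 - \om_k\langle\nabla_x r_k(x_k),\epsilon_k\rangle$. Adding back the reward difference gives Eq.~\eqref{eq:dlogw}.

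There is no real obstacle here. The only point needing care is the sign convention: $\epsilon_k$ must be defined as the standardised residual of $x_{k-1}$ under $\Qk$, not under $\Pk$, since that choice is what makes the cross term appear with a minus sign. A second point worth stating is that the identity relies on the two kernels having identical covariance. Otherwise log-determinant terms would survive, and this is exactly why the lemma excludes higher-order integrators such as stochastic Heun.
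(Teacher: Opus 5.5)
Your proof is correct and follows the paper's: cancel the shared-covariance normalisers, write the log-ratio as a difference of quadratic forms, and substitute the $\Qk$-reparametrisation $x_{k-1}=m_k(x_k)+c_k\nabla_x r_k(x_k)+\sqrt{v_k}\,\epsilon_k$. The paper first simplifies the quadratics to $\|\Delta\|^2/(2v_k)-\langle\Delta,x_{k-1}-m_k(x_k)\rangle/v_k$ with $\Delta:=c_k\nabla_x r_k(x_k)$ before substituting, which is only cosmetic; one small correction to your closing aside is that stochastic Heun is excluded not because of surviving log-determinant terms but because its second score evaluation at a noise-dependent intermediate point makes the transition non-Gaussian, so Eq.~\eqref{eq:gaussian_kernels} fails outright.
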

The proof is given in App.~\ref{app:proofs}; we use the analytical form for a ratio of Gaussians with a shared covariance alongside a discrete Cameron-Martin correction for the guidance term \cite{oksendal2003stochastic}. The lemma states that the choice of numerical integrator only enters the weights via the scalar $\om_k$. Additionally, tracking the weights requires no additional network calls as both $\nabla_x r_k$ and $\epsilon_k$ are readily available when the particle is steered. Under this formulation, for the Euler-Maruyama integrator with a VP-SDE noise schedule: $c_k = v_k = \sigma_{t_k}^2\Delta t_k$ gives $\om_k = \sigma_{t_k}\sqrt{\Delta t_k}$, and Eq.~\eqref{eq:dlogw} recovers the weights of \cite{he2026rne}. App.~\ref{app:gamma_derieved} derives $\om_k$ for different integrators.

\textbf{Estimators.} With the weights reset to the average at each resampling, the particle system at $k=0$ is a weighted approximation of $\gamma_0$, and substituting the weights into Eq.~\eqref{eq:tilt-identity} gives the two outputs we are after: the probability estimate $\hpE = \hat\gamma_0[e^{-\rz}\inds{\Ev}]$ and weighted samples from the conditional,
\begin{equation}
\label{eq:pE_estimator}
    \hpE = \,\frac{1}{N}\sum_{n=1}^N w^n_0\,e^{-\rz(X^n_0)}\inds{\Ev}(X^n_0),
    \,
    \hat p_0[\dd x\mid\Ev] = \frac{\sum_{n=1}^N w^n_0\,e^{-\rz(X^n_0)}\inds{\Ev}(X^n_0)\,\delta_{X^n_0}[\dd x]}{\sum_{n=1}^N w^n_0\,e^{-\rz(X^n_0)}\inds{\Ev}(X^n_0)}.
\end{equation}

\begin{algorithm}[t]
\caption{DireSMC: Steering diffusion models to rare events.}
\label{alg:rne_ais}
\begin{algorithmic}[1]
\Require pretrained diffusion model, transition kernels, $\Pk$ and $\Qk$, rare event $\Ev$, reward path $r_k$, particles $N$, discretisation schedule $1 = t_K > \cdots > t_0 = 0$, $\ESS$ threshold $\eta\in[0,1]$.

\State $X^n_K \sim \pref$, $w^n_K \gets 1$ for $n=1,\dots,N$,
\For{$k = K, \dots, 1$}
    \State $X^n_{k-1}\sim \Qk(\dd x_{k-1}\mid X^n_k)$ \Comment{One integrator step with the guided score Eq.~\eqref{eq:gaussian_kernels}}
    \State $w^n_{k-1} \gets w^n_{k}\, \gk(X^n_{k-1},X^n_{k})$  \Comment{Update weights using Eq.~\eqref{eq:dlogw}}
    \If{$\ESS(w_{k-1}) < \eta N$}
        \State $\bar w \gets \frac{1}{N}\sum_{m} w^m_{k-1}$;
        \State
               $X^n_{k-1}\gets X^{a^n}_{k-1}$, $w^n_{k-1}\gets \bar w$
               \Comment{Draw $a^n$ using systematic resampling}
        \State $X^n_{k-1} \sim M_{k-1}(\dd x_{k-1}\mid X^n_{k-1})$ \Comment{Rejuvenate with a $\pi_{k-1}$ invariant MCMC kernel}
    \EndIf
\EndFor
\State \Return $\hpE$ and $\hat p_0[\dd x\mid\Ev]$ from Eq.~\eqref{eq:pE_estimator}
\end{algorithmic}
\end{algorithm}

\begin{theorem}
\label{thm:main} Suppose $\Pk\ll\Qk$ for all $k$, and the rejuvenation kernels $M_k$ are $\pi_k$-invariant. Then, for any $K$,  $N$,  rewards $r_0,\dots,r_K$ with $r_K=0$ and $\gamma_0[1]<\infty$, and any adaptive resampling schedule, $\E[\hpE] = \pE = \E[\hpMC]$, and $\hat p_0[\cdot\mid\Ev]$ converges almost surely to $\pz[\cdot\mid\Ev]$ as $N\to\infty$. Moreover, if the  score is exact and $\pdata[\partial\Ev]=0$, then $\E[\hpE]\to\pEdata$ as $K\to\infty.$ 
\end{theorem}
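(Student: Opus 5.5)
The plan is to view Algorithm~\ref{alg:rne_ais} as a standard SMC sampler on the extended path space and to invoke the classical unbiasedness of the SMC normalising-constant estimator \cite{del2006sequential}. It has three parts: an importance-sampling identity on path space, a martingale argument over resampling and rejuvenation, and a weak-convergence limit in $K$.

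\textbf{Step 1 (path-space identity).} First define the guided path measure
\[
\Qmk(\dd x_{0:K})=\pref(\dd x_K)\prod_k \Qk(\dd x_{k-1}\mid x_k).
\]
Take the target to be $\gamma_0(\dd x_0)\prod_k \Fk(x_k\mid x_{k-1})$. Since $\Fk$ is the time reversal of the base chain, this target equals $e^{\rz(x_0)}\Pmk(\dd x_{0:K})$, where $\Pmk$ is the base chain law. By telescoping Eq.~\eqref{eq:kernel_ratio_weight} and using $r_K=0$,
\[
\prod_k \gk=e^{\rz(x_0)}\,\frac{\dd\Pmk}{\dd\Qmk}.
\]
This derivative is well defined because $\Pk\ll\Qk$. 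Hence, without resampling, for any $f\ge0$,
\[
\E_{\Qmk}\Bigl[w_0\,f(X_0)\,e^{-\rz(X_0)}\Bigr]=\pz[f].
\]
Taking $f=\inds{\Ev}$ gives unbiasedness. The condition $\gamma_0[1]<\infty$ ensures the weights are integrable.

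\textbf{Step 2 (resampling and rejuvenation).} To handle resampling, work with the unnormalised particle measures $\hat\gamma_k=\frac1N\sum_n w_k^n\delta_{X_k^n}$. Let $\mathcal F_k$ be the filtration generated by the particle system. I will show that for each test function $\varphi$, the quantity $\hat\gamma_k[\mathcal L_k\varphi]$ is an $\mathcal F$-martingale as $k$ decreases. Here $\mathcal L_k\varphi(x_k)=\E[\varphi(X_0)e^{-\rz(X_0)}\prod_{j\le k}g_{j-1\mid j}\mid X_k=x_k]$ under $\Qm$, and it simplifies to $e^{-r_k(x_k)}\E_{\Pm}[\varphi(X_0)\mid X_k=x_k]$.

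The three operations each preserve this martingale property:
\begin{itemize}[leftmargin=*]
\item The mutation-reweighting step preserves it by Step~1 applied to a single transition.
\item Unbiased resampling (systematic, multinomial, and so on) preserves $\E[\hat\gamma_k[\psi]\mid\text{pre-resampling}]$, because the new weights equal the average weight. This holds for any adaptive, $\mathcal F$-measurable ESS trigger.
\item Rejuvenation by a $\pi_k$-invariant $M_k$ preserves it only if $\mathcal L_k\varphi$ is $M_k$-harmonic in expectation, which is not automatic.
\end{itemize}
I expect rejuvenation to be the main obstacle. The way around it is the standard resample-move trick: treat $M_k$ as an extra SMC step between identical targets $\pi_k\to\pi_k$, with backward kernel equal to the $\pi_k$-reversal of $M_k$. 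The incremental weight is then exactly $1$, so the weights are unchanged, which matches the algorithm. The extended path-space construction of Step~1 then goes through with $M_k$ inserted. Here invariance is precisely what makes the weight equal $1$.

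Once the martingale is established, evaluating it at $k=K$ (where the expectation is $\pz[\varphi]$) and at $k=0$ gives $\E[\hpE]=\pE$. Equality with $\E[\hpMC]$ is Eq.~\eqref{eq:mc_estimator}.

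\textbf{Step 3 (consistency as $N\to\infty$).} The almost-sure convergence $\hat\gamma_0[\varphi]\to\gamma_0[\varphi e^{0}]$ for bounded measurable $\varphi e^{-\rz}$ follows from standard SMC strong laws \cite{del2006sequential}, applied inductively over the finitely many steps. The ratio in Eq.~\eqref{eq:pE_estimator} is a ratio of two such limits, and its denominator converges to $\pE>0$. This gives convergence of $\hat p_0[\cdot\mid\Ev]$, weakly and almost surely, via a countable convergence-determining class.

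\textbf{Step 4 ($K\to\infty$).} By Steps~1--2, $\E[\hpE]=\pE$ exactly for every $K$, so only the limit of $\pE$ is needed. With an exact score, the chain law $p_0$ converges weakly to $\pdata$ as the discretisation refines, by the weak-order convergence cited in Section~\ref{sec:background}. Since $\pdata[\partial\Ev]=0$, the portmanteau theorem gives $p_0[\Ev]\to\pEdata$.
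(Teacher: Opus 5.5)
\textbf{The rejuvenation step does not go through.} Steps 1, 3 and 4 follow the paper's proof. The first two bullets of Step 2 are also correct: $\hat\gamma_k[\mathcal L_k\varphi]$ is preserved by mutation and by resampling under any measurable ESS trigger. So your argument does prove $\E[\hpE]=\pE$ for adaptive resampling \emph{without} rejuvenation. You rightly flag rejuvenation as the obstacle, but the resample-move fix does not cover Algorithm~\ref{alg:rne_ais}.

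Inserting $\pi_k$-invariant moves with unit weight gives a valid martingale only when the steps carrying a move are fixed in advance. In that case the test functions are $\tilde{\mathcal L}_k\varphi=T_kM_{k-1}\tilde{\mathcal L}_{k-1}\varphi$, starting from $\tilde{\mathcal L}_0\varphi=e^{-\rz}\varphi$, with $M_{k-1}$ the identity at steps without a move and $T_kh(x_k)=\int\Qk(\dd x_{k-1}\mid x_k)\,\gk(x_{k-1},x_k)\,h(x_{k-1})$. The identities $\gamma_kT_k=\gamma_{k-1}$ and $\gamma_{k-1}M_{k-1}=\gamma_{k-1}$ then give $\pref[\tilde{\mathcal L}_K\varphi]=\pz[\varphi]$.

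In the algorithm, however, $M_{k-1}$ is applied only on the event $\{\ESS(w_{k-1})<\eta N\}$. The kernel inserted at each step is therefore chosen by the particle system, and the test function your martingale needs at step $k$ would depend on future, particle-dependent decisions. On that event the move replaces the conditional mean $\hat\gamma_{k-1}[h]$ by $\hat\gamma_{k-1}[M_{k-1}h]$. The expectation $\E\bigl[\mathbf{1}\{\ESS(w_{k-1})<\eta N\}\,\hat\gamma_{k-1}[M_{k-1}h-h]\bigr]$ need not vanish, even though $\gamma_{k-1}[M_{k-1}h-h]=0$, because the trigger is correlated with where the large weights sit.

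\textbf{The unbiasedness claim itself fails, so this cannot be patched.} Take the following instance, which satisfies every hypothesis of the theorem:
\begin{itemize}
\item $K=1$, so $r_1=0$, the single step has $\Qk=\Pk$, and $X_0^n\sim\pz$ i.i.d.\ with $w_0^n=e^{\rz(X_0^n)}$;
\item $N=2$ and $e^{\rz}=\lambda$ on $\Ev$ and $1$ off it, a finite reward; write $q:=\pE$;
\item $M_0(x,\cdot)=\piz$, which is $\piz$-invariant;
\item $\eta$ with $(\lambda+1)^2/(\lambda^2+1)<2\eta\le 2$.
\end{itemize}
With this $\eta$, resampling and rejuvenation occur exactly when one particle is in $\Ev$ and the other is not. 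Conditioning on the three configurations gives
\[
\E[\hpE]-\pE=\frac{q(1-q)(2q-1)}{\lambda q+1-q},
\]
which is nonzero unless $q=\tfrac12$. For $q=10^{-2}$, $\lambda=10$, $\eta=0.7$, the estimator's mean is about $1.1\times10^{-3}$ rather than $10^{-2}$.

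What your argument does establish is unbiasedness under adaptive resampling either with no rejuvenation, or with moves on a deterministic schedule decoupled from the ESS trigger. The paper's own proof covers this step only by citing \cite{del2012adaptive} for ``adaptive resampling and invariant moves'', so it has the same hole.
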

The proof is given in App.~\ref{app:proofs} and follows from the unbiasedness of the SMC normalisation constant estimator under adaptive resampling and $\pi_{k-1}$ invariant moves \cite{del2012adaptive, syed2026optimized}. The estimator is well suited to the rare-event setting as it remains unbiased for \emph{any} reward, however poorly it may steer. The normalisation constant, $\hat{p}_0[E]$ is accumulated along the denoising process rather than estimated separately \cite{vargas2023denoising, doucet2022scorebased, del2006sequential} (App.~\ref{sec:why_not_snis}). The rewards only affects the variance, which is the subject of Section~\ref{sec:reward_selection} below.

\subsection{Reward selection}
\label{sec:reward_selection}
Theorem~\ref{thm:main} shows $\hat{p}_0[E]$ is unbiased independent of reward choice, however, a poorly chosen steering inflates the variance of $\hpE$ to an impractical number of required samples \cite{chatterjee2018sample}. 
We construct a reward path and associated transition kernels that reduce the variance of $\hpE$. 
Let $\Pmk$ and $\Qmk$ be the joint laws over the base and guided chains, $(X_0,\dots,X_K)$. The ideal law to simulate from is the base chain reweighted by $e^{r_0(X_0)}/\gamma_0[1]$, since its endpoint is $\pi_0$. 
This is Doob's \emph{h}-transform, $\Pms$ of $\Pmk$  \cite{dai1991stochastic, denker2024deft}. 
Writing $\pbwdk{0}{k}$ as the denoising posterior of $X_0$ given $X_k$, the optimal reward and transitions are
\begin{equation}
\label{eq:doob_objects}
\begin{aligned}
    \rs{k}(x_k) &:= \log \int \pbwdk{0}{k}(\dd x_0\mid x_k)\, e^{\rz(x_0)}, \\
    \Pkr(\dd x_{k-1}\mid x_k) &:= \Pk(\dd x_{k-1}\mid x_k)\, e^{\rs{k-1}(x_{k-1}) - \rs{k}(x_k)},
\end{aligned}
\end{equation}
and its marginal at step $k$ is $\pis{k}\propto p_k\,e^{\rs{k}}$, with $\pis{0}=\pi_0$ (App.~\ref{app:proofs}). While we refer to $\rs{k}$ as a reward for simplicity, it is also known as the value function in stochastic optimal control; see \cite{berner2024an}. The discrepancy between $\Pms$ and $\Qmk$ determines the variance of our estimator. 
\begin{theorem}
\label{thm:variance}
Suppose there is no resampling ($\eta=0$), $\Pk\ll\Qk$ for all $k$, and $e^{\rz}\ge c$ on $\Ev$ for some $c\in(0,1]$. Then
\begin{equation}
\label{eq:variance_bound}
    \frac{\Var\big[\hpE\big]}{\pE^2} \;\le\; \frac{1}{N}\left[\Big(\frac{\gamma_0[1]}{c\,\pE}\Big)^2\Big(1+\chi^2\big(\Pms\,\|\,\Qmk\big)\Big)-1\right],
\end{equation}
with equality when $e^{\rz}=c\,\inds{\Ev}$, in which case $\Pms$ is $\Pmk$ conditioned on $X_0\in\Ev$. Moreover, the Doob transitions minimise Pearson's $\chi^2$ divergence: for any $\Qk$,
$\chi^2(\Pms\,\|\,\Qmk) \;\ge\; \chi^2(\pis{K}\,\|\pref )$ with equality when $\Qk=\Pkr$ for all $k$. The minimum is at most $\chi^2(\pi_0\|\pz)$, and it is zero if $X_K$ is independent of $X_0$ under $\Pmk$.
\end{theorem}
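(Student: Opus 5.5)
With no resampling the particles are i.i.d., so I would reduce everything to a single-trajectory computation. With $\eta=0$, each of the $N$ trajectories $(X_0^n,\dots,X_K^n)$ is an independent draw from $\Qmk$. Multiplying the incremental weights of Eq.~\eqref{eq:kernel_ratio_weight} telescopes the reward ratios, and $r_K=0$, $p_K=\pref$ give
\[
w_0=e^{\rz(X_0)}\,\frac{\dd\Pmk}{\dd\Qmk}(X_{0:K}).
\]
Hence $\hpE$ is an average of i.i.d.\ copies of $Y:=\inds{\Ev}(X_0)\,\frac{\dd\Pmk}{\dd\Qmk}$. Therefore $\Var[\hpE]/\pE^2=\frac1N\big(\E_{\Qmk}[Y^2]/\pE^2-1\big)$, and it suffices to bound the second moment.

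Next I would bring in the Doob transform. By definition $\dd\Pms/\dd\Pmk=e^{\rz(X_0)}/\gamma_0[1]$, so
\[
Y=\inds{\Ev}(X_0)\,e^{-\rz(X_0)}\,\gamma_0[1]\,\frac{\dd\Pms}{\dd\Qmk}.
\]
On $\Ev$ we have $e^{-\rz}\le 1/c$, and $\inds{\Ev}\le 1$. This gives $\E_{\Qmk}[Y^2]\le(\gamma_0[1]/c)^2\,\E_{\Qmk}[(\dd\Pms/\dd\Qmk)^2]=(\gamma_0[1]/c)^2(1+\chi^2(\Pms\|\Qmk))$, which is Eq.~\eqref{eq:variance_bound}. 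For the equality case $e^{\rz}=c\inds{\Ev}$, both inequalities are tight: $\inds{\Ev}$ already multiplies the density ratio, and $e^{-\rz}=1/c$ on the support. In that case $\Pms\propto\Pmk\inds{\Ev}(X_0)$, i.e.\ the chain conditioned on $X_0\in\Ev$. Before using these identities I would verify the Doob objects of Eq.~\eqref{eq:doob_objects}. The tower property gives $\int\Pk(\dd x_{k-1}|x_k)e^{\rs{k-1}(x_{k-1})}=e^{\rs{k}(x_k)}$, so $\Pkr$ is Markov. Its product with initial law $\pis{K}\propto\pref e^{\rs{K}}$ telescopes to $\Pmk e^{\rz}/\gamma_0[1]$, so the marginal at step $k$ is $\pis{k}\propto p_ke^{\rs{k}}$.

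For the minimisation claim there is a subtlety. The guided chain $\Qmk$ starts at $\pref$, while $\Pms$ starts at $\pis{K}$. I would factor the density ratio into a start term and a transition term:
\[
\frac{\dd\Pms}{\dd\Qmk}=\frac{\dd\pis{K}}{\dd\pref}(X_K)\prod_k\frac{\dd\Pkr}{\dd\Qk}.
\]
I would then condition on $X_K$ and apply Jensen, or equivalently the chain rule for $f$-divergences or data processing onto the marginal of $X_K$. This gives $1+\chi^2(\Pms\|\Qmk)\ge \E_{\pref}[(\dd\pis{K}/\dd\pref)^2]=1+\chi^2(\pis{K}\|\pref)$. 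Equality holds iff the transition product is identically one, i.e.\ $\Qk=\Pkr$. This step is the one to write carefully, since it needs the inner conditional second moments to be at least $1$ for each fixed $X_K$. Those conditional moments are $\E[(\dd\Pms(\cdot|X_K)/\dd\Qmk(\cdot|X_K))^2]\ge1$, by Cauchy–Schwarz on a probability density ratio.

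The bound $\chi^2(\pis{K}\|\pref)\le\chi^2(\pi_0\|\pz)$ follows from data processing. We have $\dd\pis{K}/\dd\pref(x_K)=\E[\dd\pi_0/\dd\pz(X_0)\mid X_K=x_K]$ under $\Pmk$, and conditional Jensen then gives the inequality. Finally, if $X_K$ is independent of $X_0$, then $\rs{K}$ is constant, so $\pis{K}=\pref$ and the minimum is zero. The main obstacle is the equality case in the variance bound when $c<1$: one must check that the reward with $e^{\rz}=c\inds{\Ev}$ (so $\rz=-\infty$ off $\Ev$) is still admissible. I would handle this by reading $\gamma_0=c\,\pz\inds{\Ev}$ directly, without taking logarithms.
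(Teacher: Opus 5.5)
Your proposal is correct and follows essentially the same route as the paper. The shared steps are the i.i.d.\ reduction via the telescoped weight, the pointwise bound $e^{-\rz}\le 1/c$ on $\Ev$ applied against $\dd\Pms/\dd\Qmk$ (the paper phrases this through $\PmE$ and $\dd\PmE/\dd\Pms\le \gamma_0[1]/(c\,\pE)$), and data processing onto the $X_K$ marginal for both the floor and its upper bound $\chi^2(\pi_0\|\pz)$. Your conditioning-on-$X_K$ argument additionally gives the floor's equality case as an ``iff'' (up to null sets), which slightly strengthens the paper's ``equality when''.
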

The proof is in App.~\ref{app:proofs} and follows from the chain rule of measures and the data-processing inequality. 
The bound separates two design choices: (1) the endpoint reward affects $\gamma_0[1]/c\pE$, which equals to 1 when $e^{r_0} \equiv 1_E$; 
(2) the second factor is minimised when the optimal Doob's transitions of Eq.~\eqref{eq:doob_objects} are used throughout the generation process. To elaborate, suppose $e^{r_0} = c1_E$, then the two distinct effects materialise. 
First, turning the steering off, the bound is exact with a relative variance equalling the Monte Carlo estimator keeping the first term. 
Second, if the Doob's transition kernels are instead used, then the relative variance is the initialisation error between the ideal starting distribution, $\pis{K}$ and the reference $\pref$. 
The bound also gives us a practical target. 
With resampling turned off, we have $\text{ESS}/{N}$$ \rightarrow
1/(1+\chi^2(\Pms\|\Qmk))$ as $N \to \infty$ \cite{agapiou2017importance} which allows us to evaluate the quality of the simulated trajectories. 
Under resampling and sufficient rejuvenation, a more elaborate error analysis is needed, see Theorem 1(a) of \cite{syed2026optimized}. 
Intuitively, errors from a poor transition kernel $\Qk$ can be pruned by resampling instead of compounding.

\textbf{Rewards for threshold exceedance events.}
We now construct $r_0$ and $r_1,\dots,r_{K-1}$ for events of the form $\Ev = \{x : \phi(x)\ge\tau\}$, for an observable $\phi:\R^d\to\R$ and threshold $\tau$; App.~\ref{app:multiple_guidances} extends $r_0$ and $r_k$ to interval and multi-constraint events. 
Theorem~\ref{thm:variance} identifies the ideal terminal reward $e^{r_0} \!=\!c1_E$. However, it lacks the regularity needed to be steered with. We require a finite continuously differentiable reward so the guidance $\nabla_x \rt$ is well defined, and whoese gradient is non-zero outside $\Ev$, so particles outside the event set are actively pushed towards it. 
Further, we require that $r_0$ is bounded below $\Ev$, so that the factor $\gamma_0[1]/cp_0[E]$ in Theorem~\ref{thm:variance} is controlled. Any CDF $\Phi$ with an everywhere-positive density gives such a regularity; writing $\delta>0$ for a smoothing width,
\begin{equation}
  \rsoft(x) \;:=\; \log\Phi\!\left((\phi(x)-\tau) \, \big/ \, \delta\right),
  \label{eq:soft-reward}
\end{equation}
with $e^{\rsoft}\ge\Phi(0)=\tfrac12$ on $\Ev$. So $c=\tfrac12$ in Theorem~\ref{thm:variance}, and the induced $\pi_0$ tends to $\pz[\cdot\mid\Ev]$ as $\delta\downarrow 0$. We adopt the Gaussian CDF because, for a linear observable $\phi(x) = a^\top x + b$, it is closed under convolution with a Gaussian \cite{10.7551/mitpress/3206.001.0001}. See Fig.~\ref{fig:reward_guidance_term} for an illustration.

\textbf{The reward sequence.} Fixing $\rsoft$ leaves the Doob's rewards of Eq.~\eqref{eq:doob_objects} intractable through the posterior $\pbwdk{0}{k}$. Following \cite{chung2023diffusion, song2023pseudoinverseguided}, we approximate it by a Gaussian centred at its mean $\xhk(x_k) = \E[X_0\mid X_k=x_k]$,
\begin{equation}
\label{eq:pigdm_gaussian}
    \pbwdk{0}{k}(\dd x_0\mid x_k) \;\approx\; \N\big(\xhk(x_k),\, \Sigk\big),
\end{equation}
with $\xhk$ approximated by Tweedie's formula from the score at $t_k$ \cite{Efron01122011} (Eq.~\eqref{eq:tweedie_appendix}) and $\Sigk$ is approximated, e.g. via a diagonal $\Pi$GDM posterior-covariance proxy \cite{song2023pseudoinverseguided}; we give  the explicit forms of both in App.~\ref{app:diffusion_details}. For the Gaussian-CDF reward and a linear observable $\phi$, the resulting integral of Eq.~\eqref{eq:doob_objects} is closed-form, $\rs{k}(x_k) \approx \log \Phi\!( \, (\phi(\xhk(x_k)) - \tau) \, \big/ \, \sqrt{\delta^2 + a^\top \Sigk a} \, )$,
with the smoothing width inflated by the convolution (App.~\ref{app:guidance_gradient}). The approximation is accurate near the data end and poor at high noise, where $\xhk$ carries little information about $X_0$, and it does not vanish at $k=K$ as the boundary condition requires. We therefore temper $\rs{k}$
by a guidance strength $\lambda_k\ge0$, which switches the guidance on gradually as the look-ahead becomes reliable, and optionally approach the event in stages through a threshold $\tau_k$. This gives us our reward sequence,
\begin{equation}
\label{eq:lookahead_reward}
    r_k(x_k) = \lambda_k\,\log\Phi\!\left( \, ( \phi(\xhk(x_k))-\tau_k) \big/ \sqrt{\delta^2 + a^\top\Sigk a} \, \right),
\end{equation}
pinned at both ends as Eq.~\eqref{eq:target_sequence} requires: $\lambda_0=1$, $\tau_0=\tau$ give $r_0=\rsoft$ and $\lambda_K=0$ gives $r_K=0$. The schedules $\lambda_k$ and $\tau_k$ are the two parameters of our method; App.~\ref{app:annealing_splitting} relates them to classical rare-event samplers. By Theorem~\ref{thm:main}, none of these approximations affect what $\hpE$ estimates; by Theorem~\ref{thm:variance}, they determine the variance of our estimator.
\subsection{Related work}
Several works use a pre-trained diffusion sampler to target a tilted distribution without retraining \cite{chung2023diffusion, song2023pseudoinverseguided, denker2024deft, berner2024an, uehara2025inference}, but these do not compute probabilities. 
The weight construction combining diffusion models and SMC samplers \cite{skreta2025feynman, wu2023practical, singhal2025general, he2026rne} is common, but our contribution is to apply it to rare-event sampling and probability estimation. 
In diffusion-based molecular simulations, enhanced rare-event sampling ideas such as metadynamics and umbrella sampling have been adapted \cite{xie2026enhanced} to estimate the occupancy of a rare metastable state. Both methods track importance weights however, \citet{xie2026enhanced} assemble $p_0[E]$ by combining a sequence of events $E_i$ with MBAR \cite{shirts2008statistically}, rather than estimating $p_0[E]$ directly. 
\citet{manshausen2026accurateextremeeventlikelihoods} use a score-based climate emulator guidance to generate Tropical Cyclone (TC) events and use importance sampling. An ODE sampler samples the target rare event, followed by two passes of a guided and unguided Probability Flow ODE to estimate its importance weights \cite{song2021scorebasedgenerativemodelingstochastic}.  We instead estimate $p_0[E]$ by accumulating weights from a single denoising pass and benefit from resampling, which is unavailable to them; see App.~\ref{app:manshausen_comparison} for a comparison.

\section{Experiments}
\label{sec:experiments}
We validate DireSMC in two settings: a correlated two-dimensional Gaussian mixture model (GMM) and a pretrained score-based climate emulator. 
For the GMM, the score and guidance are known analytically so our method may be evaluated in the absence of approximation errors in addition to a learned score, see App.~\ref{app:corr_gmm}. 
The second is a pretrained score-based climate emulator for estimating extreme compound weather events, demonstrating the scalability and applicability of our method. Estimates are evaluated against the sampled law, $p^\star$: closed-form $\pE$ for the analytical score or a large MC for the learned score; the same references define the sample-quality metrics. The ratio $\hpE/p^*$ measures the bias and the estimator coefficient of variation, $\mathrm{CV} = \sqrt{\Var(\hpE)}/\,p^\star$ measures the relative variance computed over independent runs. 

\subsection{Correlated Gaussian Mixture Model (GMM) }
\label{sec:exp_2dgmm}

\begin{figure}[!h]
  \centering
  \includegraphics[width=0.95\textwidth]{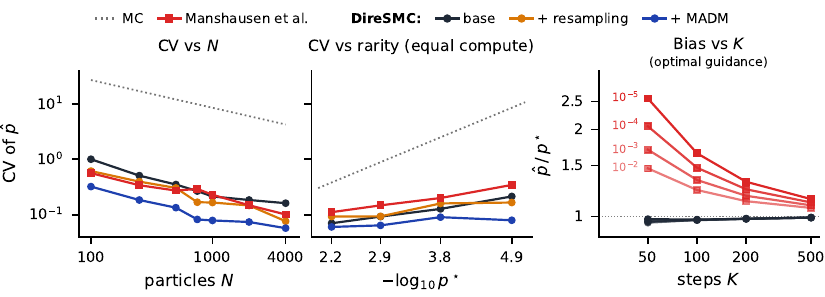}
  \vspace{-10pt}
  \caption{\emph{Left:}  CV plotted against $N$ for a  $p_0[x_1 >10]$ problem at a fixed $K$. Rejuvenation tracks the lowest CV across all $N$. \emph{Middle:} we compare our estimator against \citet{manshausen2026accurateextremeeventlikelihoods} across rarities at a compute-normalised $N$. We consistently track a lower CV. \emph{Right:} measured bias scaling with rarity using exact Doob's steering and analytical scores, highlighting the bias of \citet{manshausen2026accurateextremeeventlikelihoods}, DireSMC tracks the floor, see Tables~\ref{tab:us_v_manshausen_comparision} and App.~\ref{app:detailed_comparision_fig1}. }
  \label{fig:table_1_replacement}
  \vspace{-5pt}
\end{figure}
Let $p$ be the seven-mode GMM shown in Figure~\ref{fig:corr_gmm_base} and defined in Table~\ref{tab:corr_gmm}.
Let $\phi(x)\!=\!x_1$ and define $E=\!\{x_1\ge\tau\}$ for $\tau\!\in\!\{7,8,9,10\}$. 
This varies the analytic mass $\pE$ from $6.9\!\times\!10^{-3}$ to $1.4\!\times\!10^{-5}$, defining a sequence of increasingly rare events. 
Unless stated otherwise $N\!=\!1000$, $\delta\!=\!0.01$, $\lambda(t)\!=\!1\!-\!(t/0.9)^2\!$ with $\lambda(t)\!=\!0$ for $t >0.9$, $\tau(t)\!=\!\tau\!$ and $K=1000$, using the SEEDS-1 integrator guided by the reward of Eq.~\eqref{eq:lookahead_reward}; where $\Sigma_t$ is the $\Pi$GDM proxy (App.~\ref{app:diffusion_details}). We run Algorithm~\ref{alg:rne_ais} with adaptive resampling at $\ESS/N\!<\!0.5$, followed by three MADM rejuvenation sweeps \cite{lam2026metropolis} gated to a time $t\! \ge\!0.30$. Experiments are replicated over $24$ random seeds and validated against a 200 million MC reference pool (App.~\ref{app:more_2dgmm}). 

Our method recovers the reference exceedance probability for both analytical and learned scores (full per-threshold results and sample-quality metrics in 
Table~\ref{tab:step1_recovery}, App.~\ref{app:threshold_extra}). For a fixed  $\tau = 10$ threshold, Fig.~\ref{fig:table_1_replacement} shows that the relative variance decreases as $O(1/\sqrt{N})$ \cite{chatterjee2018sample}. Resampling and rejuvenation prevent accumulation of guidance mismatch \cite{del2006sequential, del2012adaptive} and the CV stays flat across rarities for a fixed $N$. 

DireSMC compares favourably against the importance sampling estimator of \citet{manshausen2026accurateextremeeventlikelihoods}.
Fig.~\ref{fig:table_1_replacement} (left) shows that DireSMC with resampling and MADM leads to lower CV for the same number of particles compared to their method, even though their method requires $3.5\times$ as many function evaluations as us. In this setting, rejuvenation incurs an additional $1.6\%$ network calls over the base, which we also discount. On a compute normalised particle budget, all variants of our estimator achieve a lower variance across all rarities as seen in Fig.~\ref{fig:table_1_replacement} (cost detailed in App.~\ref{app:manshausen_comparison} and Table~\ref{tab:cost}). Additionally, the estimator of \citet{manshausen2026accurateextremeeventlikelihoods} assumes that the law induced by an explicit integrator is reversible. 
However, this often does not hold in practice, leading to a discretisation bias which decreases with step count $K$.
To isolate the effects of discretisation bias, we ran experiments with the optimal Doob's transition kernels of Eq.~\eqref{eq:doob_objects} such that the true $p^*$ is known and any artefacts due to approximate guidance or score estimates are removed. We also provide the PF-ODE estimator with the true Jacobian trace term, which would otherwise have to be estimated.
The results in Fig.~\ref{fig:table_1_replacement} (right) show their discretisation bias is significantly larger than DireSMC's with the bias decreasing with increasing $K$ (cf. Table~\ref{tab:us_v_manshausen_comparision}).
We acknowledge that our method also incurs a discretisation bias which we characterise below.

\textbf{Ablations.} 
We isolate both the estimator bias
and variance 
introduced by SDE integrators (Euler-Maruyama, SEEDS-1 \cite{gonzalez2023seeds}) and the DDIM ODE integrator \cite{song2020denoising} used in \citet{manshausen2026accurateextremeeventlikelihoods}. Evaluating with both analytical and learned scores, the discretisation bias decreases approximately as $O(1/K)$ (see Fig.~\ref{fig:weak_order_integration_analytical},~\ref{fig:bias_learned_scores_rarity} in App.~\ref{app:taleytubero_expansion}). For a fixed $K$ we find the discretisation bias increases with rarity,
highlighting the sensitivity of rare-event probability estimation to small errors. Both SDE integrators track a lower bias compared to DDIM
which we postulate is due to the lower overall error in the generation process, see Theorem 3 of   \citet{xu2023restart}. Moreover, we characterise two sources of errors due to the MADM MCMC kernel \cite{lam2026metropolis}. It uses the $\nabla_x \log p_{t=t_k}$ to estimate Metropolis Hastings (MH) acceptance ratio, hence targets the continuous-time marginals rather than the law $\gamma_k$ induced by the integrator. This introduces an error decaying as $O(1/K)$ to leading order (Fig.~\ref{fig:madm_error_bias}). Additionally, MADM assumes access to a conservative score to compute the ratio
which degrades for a learned score network as $t \rightarrow 0$ \cite{vuong2025we}, see App.~\ref{app:madm_curl_exposition} for details. Heuristically, we gate rejuvenation to $t^* > 0$ to account for this assumption (Table~\ref{tab:cap_thr}). Finally, we show that the method translates without modification to interval and multi-constraint events (Table~\ref{tab:cap_box}). 

\subsection{Generating heatwaves with pre-trained climate model emulators}
\label{sec:exp_climate}
Next, we validate on the problem of estimating how the probability of extreme events evolves under a changing climate.
The high cost of running physics-based climate models \citep{eyring2016overview} has motivated the practice of training  cheap emulators on climate model output \citep{watson2022climatebench}. However, as we will show, even with these cost-savings, it can be difficult to generate targeted extreme events without explicit steering.

\textbf{Model.} We leverage the pre-trained climate emulator of \citet{https://doi.org/10.1029/2025MS005558} which is trained on data generated from the MPI-ESM1-2-LR model \citep{mauritsen2019developments} and released as part of the Coupled Model Intercomparison Project Phase 6 (CMIP6) \citep{eyring2016overview}. The emulator generates global monthly-mean fields of near-surface temperature ($\texttt{tas}$), precipitation ($\texttt{pr}$), relative humidity ($\texttt{hurs}$) and wind speed ($\texttt{sfcWind}$), conditioned on the global mean surface temperature anomaly, $\Delta T$, against a pre-industrial reference and the month of the year, $m$, to capture annual variability\footnote{As is common in the climate science research literature, the emulator outputs the change in output variables compared to a pre-industrial reference climate rather than absolute predictions.};  we refer to \citet{https://doi.org/10.1029/2025MS005558} for an in-depth motivation of the emulator setup.

\begin{figure}[t]
  \centering
  \includegraphics[width=\textwidth]{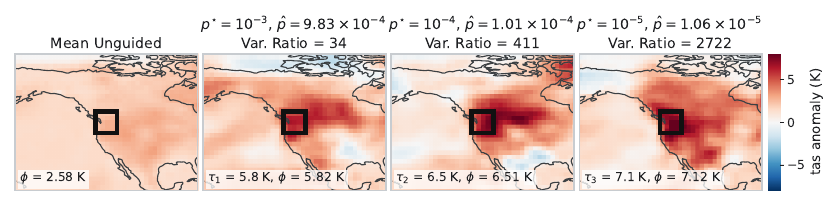}
  \vspace{-20pt}
  \caption{
    Surface-temperature anomaly (K) for July at $\Delta T = +1.5$~K over
    the Pacific Northwest (PNW) target region. \emph{Left:} average $\texttt{tas}$ field over $10^7$ samples. \emph{Remaining panels:} the highest-weight crossing
    particle produced by the guided sampler at different exceedance thresholds
    $\tau$ and regional average temperature $\phi(\cdot)$ of the sample. $p^\star$ are MC estimates; $\hat{p}$ is obtained from the guided estimator; full results in Table~\ref{tab:marginal_hot_dry}. The variance ratio (Var. Ratio) at matched compute is defined as $\Var(\hat{p}_{\text{unguided}}) / \Var(\hat{p}_{\text{guided}})$.}
  \label{fig:tas_ladder_uncond}
  \vspace{-10pt}
\end{figure}

\textbf{Setup.} We consider extremes in the temperature and humidity variables averaged within the region $R$ defined by the bounding box $42^\circ$-$52^\circ$N latitude and $235^\circ$-$245^\circ$E longitude, roughly corresponding to the region affected by the 2021 Pacific Northwest (PNW) heatwave \cite{pacificheatwave2021}. 
We use $N = 1024$ particles with $K = 200$ discretisation steps using the exponential integrator; error/variance metrics are computed using estimates from 16 seeds. 
We generate $\sim 10^7$ Monte Carlo samples to estimate ground-truth event probabilities. Further model details and hyperparameters are presented in App.~\ref{app:planned_climemu}. We select the hyperparameters once from a pilot sweep on the marginal hot and dry $\tau_1$ experiments (detailed below) and use them for all of our other experiments; proper per-problem tuning would lead to a larger variance reduction.

\textbf{Extreme temperatures.} We first verify how accurately DireSMC can estimate the probability of extreme temperatures relative to crude Monte Carlo. 
Assuming $\Delta T = 1.5$~K, for three different temperature thresholds $\tau_{1}, \tau_{2}, \tau_{3}$ 
(exact values in App.~\ref{app:planned_climemu}) we calculate the probability that the average surface temperature in the PNW exceeds a temperature threshold in the month of July.
The results in Fig.~\ref{fig:tas_ladder_uncond}, (full numbers in Table~\ref{tab:marginal_hot_dry}) show that our method provides significant improvements compared to standard MC estimates with at least an order of magnitude reduction in variance. 
For the most extreme event $\tau_{3}$, only 2/16 runs provided a single sample above the threshold; hence MC provides almost no samples with which to characterise the event. 
Figure~\ref{fig:tas_ladder_uncond} demonstrates that the generated guided samples provide physically plausible temperature maps: the maps show more warming over land masses compared to the ocean and spatially coherent warming patterns that are not restricted solely to the target region.
Additional results and ablations 
in App.~\ref{app:planned_climemu}.

\textbf{Compound hot and dry extremes.} 
Next we probe whether our method can capture extreme events impacting multiple variables by  
considering events that are both extremely hot and dry because of their outsized societal impacts \cite{https://doi.org/10.1029/2023EF003976,zscheischler2017bivariate}.
For these joint events it is important to not just consider the separate marginal probabilities of each variable but also to accurately capture the correlations between variables.
To do so, we compute the dependence ratio of the event, defined as $d = \pev{x > \tau_{\texttt{tas}}, x <\tau_{\texttt{hurs}}} \, \big/ \, \pev{x > \tau_{\texttt{tas}}}\,\pev{x < \tau_{\texttt{hurs}}}$. If the events are independent the dependence ratio would reduce to $d=1$, while $d > 1$ implies that the two events co-occur. We use the non-factorised form of the reward, $\Phi(\cdot) = \Phi_2(\texttt{hurs},\texttt{tas})$, a function of both variables; at the rarest level it reduces the CV fourfold relative to a product of marginal rewards (Table~\ref{tab:climate_exp2_full}). 
\begin{wraptable}{r}{0.65\linewidth}
\centering
\small
\setlength{\tabcolsep}{4pt}
\vspace*{-1.5em}
\caption{
Probability estimates of  three compound hot and dry extreme events over the PNW region. 
The dependence ratio $d$ is defined in the main text, $\hat{d}$ denotes that estimated by our method.
Net speed-up is the crude-MC cost of matching the CV of the guided run divided by the measured per-sample cost of guidance (Table~\ref{tab:compute_cost}); thresholds $\tau$ are given in
App.~\ref{app:model_details_climemu}. Mean $\pm$ standard error over 16 seeds.}
\label{tab:joint_model_exp}
\vspace*{0.3em}
\begin{tabular}{c l@{\;$\pm$\;}l l l l@{\;$\pm$\;}l r@{}}
\toprule
$p^\star$ & \multicolumn{2}{c}{$\hat p/p^\star$} & CV & $d^\star$
 & \multicolumn{2}{c}{$\hat d/d^\star$} & Speed Up $\uparrow$ \\
\midrule
  $3.14 \times 10^{-3}$ & $0.97$ & $0.02$ & $0.07$ & $31$ & $1.04$ & $0.02$ & $9\times$ \\
  $2.12 \times 10^{-4}$ & $0.99$ & $0.03$ & $0.11$ & $212$ & $1.02$ & $0.03$ & $47\times$ \\
  $1.39 \times 10^{-5}$ & $1.04$ & $0.02$ & $0.08$ & $1384$ & $1.06$ & $0.06$ & $1413\times$ \\
\bottomrule
\end{tabular}
\vspace*{-1.5em}
\end{wraptable}
Table~\ref{tab:joint_model_exp} shows we recover the correct joint probability and the dependence ratio against the ground-truth estimate with a positive net speed-up compared to crude MC. 
Notably, the dependence structure strengthens as the event becomes rarer from $d = 31.4 \rightarrow 1.4 \times10^{3} $. 
This suggests that a hot-and-dry July over $R$ is three orders of magnitude more likely than the product of its marginals would suggest. 

\begin{wrapfigure}{r}{0.5\linewidth}
\vspace{-2em}
  \centering
\includegraphics[width=0.95\linewidth]{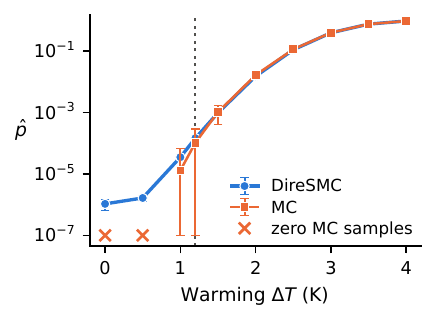} 
  \vspace{-1.5em}
  \caption{Estimated $\hat{p}_0[E]$ versus global temperature anomaly $\Delta T$ for our method versus unguided MC at equal compute. Points are the mean over 16 runs, bars are $\pm 1$ std. Bars reaching the axis floor indicate levels where some MC runs record no event; for $\Delta T \le 1.0$ each MC run records at most one event. See Tables~\ref{tab:climate_exp3} and \ref{tab:climate_exp3_sweep}.}
  \label{fig:extreme_exceedance}
  \vspace{-1em}
\end{wrapfigure}
\vspace{5pt}
\textbf{A warming climate.} As the model can generate samples conditioned on the global temperature anomaly $\Delta T$, it lets us ask \emph{how do extreme events change under climate change?} We fix the temperature threshold at $\tau_\texttt{tas} \ge 5.81$~K and sweep $\Delta T \in [0, 4]$~K, rerunning the same recipe as above. This temperature approximately corresponds to $\pE \approx 10^{-4}$ at $1.2$~K (2021) of warming, mirroring the PNW heatwave \cite{pacificheatwave2021}. The event moves from a threshold exceedance probability of $\sim\!10^{-6}$ at $\Delta T = 0$ to a near-certainty at $\Delta T = 4$~K. Our estimator tracks it across all six orders of magnitude without any modification. Fig.~\ref{fig:extreme_exceedance} reports this probability estimate. See App.~\ref{app:climemu_warming} for  details, including Table~\ref{tab:climate_exp3} and Fig.~\ref{fig:climate_change_rare_event_dT} for corresponding global temperature maps. This experiment shows that the method can be potentially used for climate attribution; see \citet{van2021pathways} for an overview.

\section{Conclusion}
\label{sec:conclusion}
We have shown that the diffusion sampler builds the same annealing bridge that traditional rare-event samplers build by hand. Using this insight, we present DireSMC and steer the generation process towards a user-defined rare event using a smooth relaxation of the event set, accumulating importance weights and estimating the probability along the way. Empirically, on synthetic examples and a score-based climate emulator, we achieve a net speedup of $9$ to $1413\times$ over crude Monte Carlo, improving with rarity. We believe that this is of value to the scientific community where there is a growing interest in investigating how well a generative model can capture rare events \cite{doi:10.1073/pnas.2420914122}.

\textbf{Limitations \& Future Work.} The variance of our estimator can be improved through better approximations of Doob's \emph{h}-transform e.g. \cite{potaptchik2026metaflowmapsenable, denker2024deft, uehara2025inference} as well as better reward schedule design \cite{syed2026optimized}; these can be readily plugged in without modification to the algorithm. Our estimates target the discretised, approximate law we sample from, and so carry a reducible discretisation bias and any errors of an approximate score. The rejuvenation kernel used \cite{lam2026metropolis} assumes a conservative score, which a trained network need not provide; distilling the network into an energy model \cite{thornton2025composition} could be a viable solution. Additionally the MADM kernel also does not target the law $\gamma_k$ induced by the numerical integrator, with this mismatch decreasing with the weak-order convergence rate of the integrator. Algorithm~\ref{alg:rne_ais} is agnostic to the choice of MCMC moves, and any alternatives to MADM can be used.

\begin{ack}
The authors would like to thank Abbas Mammadov, Kevin Lam, Alison Peard, Sam Howard, Charlotte Merchant and Lukas Seier for helpful discussions. AS is part of the Intelligent Earth CDT supported by funding from the UK Research and Innovation Council (UKRI) grant number EP/Y030907/1. TR acknowledges funding from the EU's Horizon Europe program under grant agreement number 10113184, acknowledge funding from UK Research and Innovation (UKRI), and also received funding from ARIA and DSIT and Pillar VC under the Encode: AI for Science Fellowship. CW acknowledges support from the Defence Science and Technology (DST) Group Australia. CW and YWT are supported in part by the Engineering and Physical Sciences Research
Council (EPSRC) through the AI Hub in Generative Models [grant number EP/Y028805/1]. PS acknowledges funding from European Union’s Horizon Europe projects CleanCloud and Embed2Scale with Grant Agreements 101137639 and 101131841 and their UKRI underwrite. We acknowledge the use of resources provided by JASMIN, the UK’s collaborative data analysis environment( https://www.jasmin.ac.uk) and the Isambard-AI National AI Research Resource
(AIRR) \cite{mcintosh2024isambard}. Isambard-AI is operated by the University of Bristol and is
funded by the UK Government’s Department for Science, Innovation and Technology (DSIT) via UK
Research and Innovation; and the Science and Technology Facilities Council [ST/AIRR/I-A-I/1023]
\end{ack}

\bibliographystyle{iclr2026_conference}
\bibliography{references}
\newpage
\appendix

\addtocontents{toc}{\protect\setcounter{tocdepth}{2}}
\setcounter{tocdepth}{2}
\begin{center}
  {\Huge\bfseries Appendix\par}
\end{center}
\vspace{2\baselineskip}
\begingroup
  \renewcommand{\contentsname}{}
  \vspace{-3.5\baselineskip}
  \tableofcontents
\endgroup
\newpage

\clearpage
\section{Proofs}
\label{app:proofs}

\subsection{Notation and two identities}
\label{app:proof_setup}

Throughout, $(X_0,\dots,X_K)$ is the base chain of Eq.~\eqref{eq:base_chain}, $\Pmk$ its law, $p_k$ its marginals, and $\Qmk$ the law of the guided chain with kernels $\Qk$; both start from $\pref$. All densities are with respect to Lebesgue measure. For a reward $\rz$ with $\gamma_0[1]<\infty$, write $Z_0 := \gamma_0[1]$ and
\begin{equation}
    \frac{d\Pms}{d\Pmk}(x_{K:0}) := \frac{e^{\rz(x_0)}}{Z_0},
    \qquad
    \frac{d\PmE}{d\Pmk}(x_{K:0}) := \frac{\inds{\Ev}(x_0)}{\pE},
\end{equation}
for the base chain tilted by $e^{\rz}$ and conditioned on $X_0\in\Ev$ respectively; the second is the first with $e^{\rz}=\inds{\Ev}$. The Doob rewards are
\begin{equation}
\label{eq:app_doob_reward}
    \rs{k}(x_k) := \log \int \pbwdk{0}{k}(\dd x_0\mid x_k)\, e^{\rz(x_0)}, \qquad k=0,\dots,K,
\end{equation}
so that $\rs{0}=\rz$ and $Z_0 = \int \pref(x_K)\,e^{\rs{K}(x_K)}\,\dd x_K$.

Two identities are used repeatedly. First, the \emph{tower identity}: by the Markov property,
\begin{equation}
\label{eq:app_tower}
    \int \Pk(\dd x_{k-1}\mid x_k)\, e^{\rs{k-1}(x_{k-1})} \;=\; e^{\rs{k}(x_k)}, \qquad k=1,\dots,K.
\end{equation}
Second, the \emph{telescoping identity}: the reward ratios in the incremental weight of Eq.~\eqref{eq:kernel_ratio_weight} collapse along a trajectory, since $r_K=0$:
\begin{align}
\label{eq:app_telescope}
    \prod_{k=1}^K \gk(x_{k-1},x_k)
    &= e^{\rz(x_0)} \prod_{k=1}^K \frac{\Pk(x_{k-1}\mid x_k)}{\Qk(x_{k-1}\mid x_k)} \notag\\
    &= e^{\rz(x_0)}\,\frac{d\Pmk}{d\Qmk}(x_{K:0}),
\end{align}
the last equality because both chains start from $\pref$.

\subsection{Structure of the tilted chain}
\label{app:doob_structure}

\begin{lemma}
\label{lem:doob}
$\Pms$ is a Markov chain. Its marginal at step $k$ is $\pis{k} := p_k\,e^{\rs{k}}/Z_0$, so its initial law is $\pis{K} = \pref\,e^{\rs{K}}/Z_0$, its transitions are the twisted kernels
\begin{equation}
\label{eq:app_twisted_kernel}
    \Pkr(\dd x_{k-1}\mid x_k) := \Pk(\dd x_{k-1}\mid x_k)\, e^{\rs{k-1}(x_{k-1}) - \rs{k}(x_k)},
\end{equation}
Consequently
\begin{equation}
\label{eq:app_dQstar_dQ}
    \frac{d\Pms}{d\Qmk}(x_{K:0})
    \;=\; \frac{e^{\rs{K}(x_K)}}{Z_0}\;\prod_{k=1}^K \frac{\Pkr(x_{k-1}\mid x_k)}{\Qk(x_{k-1}\mid x_k)} .
\end{equation}
\end{lemma}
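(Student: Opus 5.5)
The plan is to write the joint density of $\Pms$ explicitly and factor it. By definition, $\Pms$ has density $\pref(x_K)\prod_{k=1}^K \Pk(x_{k-1}\mid x_k)\, e^{\rz(x_0)}/Z_0$ with respect to Lebesgue measure on the path space. Since $\rs{0}=\rz$, the factor $e^{\rz(x_0)}$ can be written as the telescoping product $e^{\rs{K}(x_K)}\prod_{k=1}^K e^{\rs{k-1}(x_{k-1})-\rs{k}(x_k)}$. Distributing one exponential factor into each transition gives
\begin{equation*}
\frac{d\Pms}{d x_{K:0}} = \frac{\pref(x_K)\,e^{\rs{K}(x_K)}}{Z_0}\prod_{k=1}^K \Pkr(x_{k-1}\mid x_k).
\end{equation*}

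The key step is to check that each $\Pkr(\cdot\mid x_k)$ is a genuine Markov kernel, which is exactly the tower identity Eq.~\eqref{eq:app_tower}: integrating $\Pk(\dd x_{k-1}\mid x_k)e^{\rs{k-1}(x_{k-1})}$ gives $e^{\rs{k}(x_k)}$. I would justify that identity by noting that $\pbwdk{0}{k}(\cdot\mid x_k)=\int \Pk(\dd x_{k-1}\mid x_k)\,\pbwdk{0}{k-1}(\cdot\mid x_{k-1})$, which holds by the Markov property of the backward chain. Also, $\pref e^{\rs{K}}/Z_0$ is a probability density because $Z_0=\int\pref e^{\rs{K}}$; this follows from Fubini together with $\pbwdk{0}{K}$ composed with $\pref$ giving $p_0$.

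A density of the form ``initial law times a product of kernels'' is, by definition, the law of a Markov chain with that initial law $\pis{K}$ and those transitions. This settles the Markov property, the initial law, and the transitions.

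For the marginals, I would argue by induction downward in $k$. The base case is $\pis{K}=p_K e^{\rs{K}}/Z_0$. For the inductive step, assume the step-$k$ marginal is $p_k e^{\rs{k}}/Z_0$. The step-$(k-1)$ marginal is then
\begin{equation*}
\int \frac{p_k(x_k)\,e^{\rs{k}(x_k)}}{Z_0}\,\Pk(x_{k-1}\mid x_k)\,e^{\rs{k-1}(x_{k-1})-\rs{k}(x_k)}\,\dd x_k = \frac{e^{\rs{k-1}(x_{k-1})}}{Z_0}\int p_k\,\Pk\,\dd x_k,
\end{equation*}
and the last integral is $p_{k-1}(x_{k-1})$ by the marginal recursion of Section~\ref{sec:background}. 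This closes the induction.

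Finally, $\Qmk$ has density $\pref(x_K)\prod_k \Qk(x_{k-1}\mid x_k)$. Dividing the factored density of $\Pms$ by it cancels $\pref$ and yields Eq.~\eqref{eq:app_dQstar_dQ}. This requires $\Pms\ll\Qmk$, which follows from $\Pkr\ll\Pk\ll\Qk$.

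The main obstacle is measure-theoretic bookkeeping rather than the algebra. The ratios must be interpreted as Radon–Nikodym derivatives when the kernels lack Lebesgue densities; there the factor $\rs{k}$ is well defined and finite $p_k$-almost everywhere only because $Z_0<\infty$. The tower identity itself is also needed, and its rigorous form relies on the disintegration of $\Pmk$ into backward posteriors.
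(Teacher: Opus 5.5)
Your proof is correct and follows the paper's argument: you telescope $e^{\rz(x_0)}$ into the kernels, use the tower identity to show each $\Pkr$ is a Markov kernel, read off the Markov structure and initial law $\pis{K}$ from the factored path density, and divide by the density of $\Qmk$. The only difference is cosmetic: you obtain the marginals $\pis{k}$ by downward induction through $\Pkr$ using the recursion $p_{k-1}=\int p_k\,\Pk\,\dd x_k$, whereas the paper marginalises directly, with the coordinates above $k$ giving $p_k$ and those below $k$ giving $e^{\rs{k}}$ via $\pbwdk{0}{k}$.
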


\begin{proof}
By Eq.~\eqref{eq:app_tower}, each $\Pkr$ integrates to one and is a Markov kernel. Multiplying the kernels and telescoping the exponents,
\begin{equation}
    \prod_{k=1}^K \Pkr(x_{k-1}\mid x_k)
    \;=\; \frac{e^{\rz(x_0)}}{e^{\rs{K}(x_K)}} \prod_{k=1}^K \Pk(x_{k-1}\mid x_k),
\end{equation}
using $\rs{0}=\rz$. Hence
\begin{align}
    \frac{\pref(x_K)\,e^{\rs{K}(x_K)}}{Z_0}\prod_{k=1}^K \Pkr(x_{k-1}\mid x_k)
    &= \frac{e^{\rz(x_0)}}{Z_0}\,\pref(x_K)\prod_{k=1}^K \Pk(x_{k-1}\mid x_k) \notag\\
    &= \frac{d\Pms}{d x_{K:0}}(x_{K:0}),
\end{align}
which exhibits $\Pms$ as the chain started from $\pis{K}$ with kernels $\Pkr$. The marginal at step $k$ follows from the tower identity applied to the tilt: integrating out all coordinates but $x_k$ gives $p_k(x_k)\int\pbwdk{0}{k}(\dd x_0\mid x_k)e^{\rz(x_0)}/Z_0 = p_k(x_k)e^{\rs{k}(x_k)}/Z_0$. Dividing the density of $\Pms$ by that of $\Qmk$, $\pref(x_K)\prod_k\Qk(x_{k-1}\mid x_k)$, gives Eq.~\eqref{eq:app_dQstar_dQ}.
\end{proof}

\subsection{Proof of Theorem~\ref{thm:main}}
\label{app:proof_thm_main}

\begin{proof}
The auxiliary kernel $\Fk(x_k\mid x_{k-1}) = p_k(x_k)\Pk(x_{k-1}\mid x_k)/p_{k-1}(x_{k-1})$ integrates to one in $x_k$ by the marginal recursion of Section~\ref{sec:background}, so it is a Markov kernel. With targets $\gamma_k = p_k e^{r_k}$ and proposal $\Bk=\Qk$, the generic SMC weight of Eq.~\eqref{eq:generic_weight} is
\begin{align}
    \frac{\gamma_{k-1}(x_{k-1})\,\Fk(x_k\mid x_{k-1})}{\gamma_k(x_k)\,\Qk(x_{k-1}\mid x_k)}
    &= \frac{p_{k-1}(x_{k-1})\,e^{r_{k-1}(x_{k-1})}}{p_k(x_k)\,e^{r_k(x_k)}}
       \cdot\frac{\Fk(x_k\mid x_{k-1})}{\Qk(x_{k-1}\mid x_k)} \notag\\
    &= \frac{e^{r_{k-1}(x_{k-1})}}{e^{r_k(x_k)}}
       \cdot\frac{p_{k-1}(x_{k-1})}{p_k(x_k)}\cdot\frac{p_k(x_k)\,\Pk(x_{k-1}\mid x_k)}{p_{k-1}(x_{k-1})\,\Qk(x_{k-1}\mid x_k)} \notag\\
    &= \frac{e^{r_{k-1}(x_{k-1})}}{e^{r_k(x_k)}}\cdot\frac{\Pk(x_{k-1}\mid x_k)}{\Qk(x_{k-1}\mid x_k)},
\end{align}
which is Eq.~\eqref{eq:kernel_ratio_weight}; it is finite $\Qk$-a.s.\ because $\Pk\ll\Qk$. The boundary conditions give $\gamma_K=\pref=\pi_K$, which is normalised, and $\gamma_0=\pz e^{\rz}$ with $Z_0<\infty$. Algorithm~\ref{alg:rne_ais} is therefore the SMC sampler of Section~\ref{sec:sequential} with adaptive resampling and $\pi_k$-invariant rejuvenation, and its output is $\hat\gamma_0$ of Eq.~\eqref{eq:smc_estimators}, evaluated at $f=e^{-\rz}\inds{\Ev}$. By the unbiasedness of $\hat\gamma_0[f]$ under adaptive resampling and invariant moves \cite{del2012adaptive},
\begin{equation}
    \E[\hpE] = \gamma_0\big[e^{-\rz}\inds{\Ev}\big] = \int \pz(x)\,e^{\rz(x)}e^{-\rz(x)}\inds{\Ev}(x)\,\dd x = \pE,
\end{equation}
which is also $\E[\hpMC]$ since $X^n_0\sim\pz$ in Eq.~\eqref{eq:mc_estimator}. This holds for every $K$, $N$, reward sequence and resampling schedule. For the sample, $\hat p_0[\cdot\mid\Ev]$ is $\hat\pi_0$ reweighted by $e^{-\rz}\inds{\Ev}$ and normalised, i.e.\ the self-normalised estimator of Eq.~\eqref{eq:smc_estimators} for the target $\gamma_0 e^{-\rz}\inds{\Ev}\propto\pz[\cdot\mid\Ev]$, so it converges almost surely to $\pz[\cdot\mid\Ev]$ as $N\to\infty$ by the same result. For the limit, $\pz\Rightarrow\pdata$ as $K\to\infty$ under the standing assumptions of Section~\ref{sec:background}, and $\pdata[\partial\Ev]=0$, so $\pE\to\pEdata$ by the portmanteau theorem.
\end{proof}

\subsection{Proof of Theorem~\ref{thm:variance}}
\label{app:proof_thm_variance}

\begin{proof}
\emph{Step 1: the estimator is an importance-sampling average.}
Without resampling, $\hat Z=1$ and $\hpE=\frac1N\sum_{n=1}^N W^n$ with $W^n := W(X^n_{K:0})$ i.i.d.\ under $\Qmk$, where by the telescoping identity Eq.~\eqref{eq:app_telescope}
\begin{align}
    W(x_{K:0}) &:= \prod_{k=1}^K \gk(x_{k-1},x_k)\; \frac{\inds{\Ev}(x_0)}{e^{\rz(x_0)}} \notag\\
    &= \inds{\Ev}(x_0)\,\frac{d\Pmk}{d\Qmk}(x_{K:0}) \notag\\
    &= \pE\,\frac{d\PmE}{d\Qmk}(x_{K:0}).
\end{align}
Hence $\E_{\Qmk}[W]=\pE$ and
\begin{equation}
\label{eq:app_chi2_identity}
    \frac{\Var[\hpE]}{\pE^2} = \frac{1}{N}\Big(\E_{\Qmk}\Big[\Big(\frac{d\PmE}{d\Qmk}\Big)^2\Big]-1\Big) = \frac{\chi^2(\PmE\,\|\,\Qmk)}{N}.
\end{equation}

\emph{Step 2: change of measure through the tilted chain.}
Both $\PmE$ and $\Pms$ are tilts of $\Pmk$ by functions of $x_0$, so
\begin{equation}
\label{eq:app_PE_over_Pr}
    \frac{d\PmE}{d\Pms}(x_{K:0}) = \frac{Z_0\,e^{-\rz(x_0)}\inds{\Ev}(x_0)}{\pE} \;\le\; \frac{Z_0}{c\,\pE},
\end{equation}
using $e^{\rz}\ge c$ on $\Ev$. Therefore
\begin{align}
    1+\chi^2(\PmE\|\Qmk)
    &= \E_{\Qmk}\Big[\Big(\frac{d\PmE}{d\Pms}\Big)^2\Big(\frac{d\Pms}{d\Qmk}\Big)^2\Big] \notag\\
    &\le \Big(\frac{Z_0}{c\,\pE}\Big)^2\big(1+\chi^2(\Pms\|\Qmk)\big),
\end{align}
and substituting into Eq.~\eqref{eq:app_chi2_identity} gives Eq.~\eqref{eq:variance_bound}.

\emph{Step 3: the equality case.}
If $e^{\rz}=c\,\inds{\Ev}$, then $Z_0=c\,\pE$ and Eq.~\eqref{eq:app_PE_over_Pr} is identically one, so $\Pms=\PmE$ and Eq.~\eqref{eq:variance_bound} holds with equality.

\emph{Step 4: the floor.}
Under $\Qmk$, $X_K\sim\pref$; under $\Pms$, $X_K\sim\pis{K}$ by Lemma~\ref{lem:doob}. The data-processing inequality for the marginal at step $K$ gives
\begin{equation}
\label{eq:var_floor}
    \chi^2(\Pms\|\Qmk) \;\ge\; \chi^2(\pis{K}\,\|\,\pref).
\end{equation}
For the upper bound, $\Pms$ and $\Pmk$ share their transitions and differ only in the law of $X_0$, and $X_K\sim\pref$ under $\Pmk$, so data processing for the marginal at $K$ and then the fact that $d\Pms/d\Pmk$ depends on $x_0$ alone give
\begin{equation}
    \chi^2(\pis{K}\,\|\,\pref) \;\le\; \chi^2(\Pms\,\|\,\Pmk)
    = \E_{\Pmk}\Big[\Big(\frac{e^{\rz(X_0)}}{Z_0}\Big)^2\Big]-1 = \chi^2(\pi_0\,\|\,\pz).
\end{equation}
If $\Qk=\Pkr$ for all $k$, the product in Eq.~\eqref{eq:app_dQstar_dQ} is one, $d\Pms/d\Qmk$ is a function of $x_K$ alone, equal to $\pis{K}(x_K)/\pref(x_K)$, and the first inequality is an equality, which is Eq.~\eqref{eq:var_floor} with its equality case. If $X_K$ is independent of $X_0$ under $\Pmk$ then $\rs{K}$ is constant, so $\pis{K}=\pref$ and the minimum is zero.

Finally, for the CDF reward of Eq.~\eqref{eq:soft-reward}, $\phi\ge\tau$ on $\Ev$ gives $e^{\rsoft}\ge\Phi(0)=\tfrac12$ there, so $c=\tfrac12$, and $Z_0=\pz[e^{\rsoft}]\to\pE$ as $\delta\downarrow0$ by dominated convergence.
\end{proof}

\begin{remark}
Initialising from $\pis{K}$ instead of $\pref$, with kernels $\Pkr$ and weights $w^n_K=1$, would make every incremental weight equal to one and the estimator deterministic, the classical zero-variance statement for Doob's $h$-transform. It is circular: the normaliser of $\pis{K}$ is $Z_0$, which for the indicator reward is the quantity being estimated. The boundary condition $r_K=0$ is what allows Algorithm~\ref{alg:rne_ais} to start from $\pref$ and Eq.~\eqref{eq:var_floor} is the price this incurs.
\end{remark}

\subsection{The optimal reward recursion}
\label{app:reward_recursion}

We recall, for completeness, why the Doob rewards are the optimal intermediate targets \cite{pitt1999filtering, johansen2008note, guarniero2017iterated}. Fix $k$ and $r_{k-1}$, let $X_{k-1}\mid X_k\sim\Qk$, and define
\begin{equation}
    r^{\rm opt}_k(x_k) := \log\int\Pk(\dd x_{k-1}\mid x_k)\,e^{r_{k-1}(x_{k-1})}.
\end{equation}
From Eq.~\eqref{eq:kernel_ratio_weight}, for every proposal with $\Pk\ll\Qk$,
\begin{align}
    \E\big[\gk\mid X_k=x_k\big]
    &= e^{-r_k(x_k)}\int\Qk(\dd x_{k-1}\mid x_k)\,\frac{\Pk(x_{k-1}\mid x_k)}{\Qk(x_{k-1}\mid x_k)}\,e^{r_{k-1}(x_{k-1})} \notag\\
    &= \exp\big(r^{\rm opt}_k(x_k)-r_k(x_k)\big).
\end{align}
By the law of total variance,
\begin{equation}
    \Var[\gk] = \Var\big[\E[\gk\mid X_k]\big] + \E\big[\Var[\gk\mid X_k]\big].
\end{equation}
The reward $r_k$ enters $\gk$ only through the factor $e^{-r_k(x_k)}$, a function of $x_k$ alone, so the second term does not depend on $r_k$, while the first vanishes if and only if $r_k=r^{\rm opt}_k+\text{const}$. Thus $r^{\rm opt}_k$ minimises $\Var[\gk]$ over $r_k$ for every proposal. The second term vanishes if and only if $x_{k-1}\mapsto e^{r_{k-1}(x_{k-1})}\Pk(x_{k-1}\mid x_k)/\Qk(x_{k-1}\mid x_k)$ is $\Qk$-a.s.\ constant, i.e.\ $\Qk(\dd x_{k-1}\mid x_k)\propto\Pk(\dd x_{k-1}\mid x_k)e^{r_{k-1}(x_{k-1})}$, which with $r_k=r^{\rm opt}_k$ is the twisted kernel $\Pkr$ and gives $\gk\equiv1$. Finally, if $r_{k-1}=\rs{k-1}$ then $r^{\rm opt}_k=\rs{k}$ by the tower identity Eq.~\eqref{eq:app_tower}, so iterating from $r_0=\rs{0}$ recovers the Doob rewards at every step.

\subsection{Proof of Lemma~\ref{prop:omega_sufficent}}
\label{app:proof_prop_weight}

\begin{proof}
Write $\mu := m_k(x_k)$, $\Delta := c_k\nabla_x r_k(x_k)$ and $v:=v_k$. Under Eq.~\eqref{eq:gaussian_kernels} both kernels are Gaussian in $x_{k-1}$ with covariance $vI$, so
\begin{equation}
    \log\frac{\Pk(x_{k-1}\mid x_k)}{\Qk(x_{k-1}\mid x_k)}
    = -\frac{\|x_{k-1}-\mu\|^2}{2v} + \frac{\|x_{k-1}-\mu-\Delta\|^2}{2v}
    = \frac{\|\Delta\|^2}{2v} - \frac{\langle\Delta,\,x_{k-1}-\mu\rangle}{v}.
\end{equation}
Substituting $x_{k-1}=\mu+\Delta+\sqrt v\,\epsilon_k$, the noise drawn when sampling $\Qk$,
\begin{align}
    \log\frac{\Pk(x_{k-1}\mid x_k)}{\Qk(x_{k-1}\mid x_k)}
    &= \frac{\|\Delta\|^2}{2v} - \frac{\|\Delta\|^2}{v} - \frac{\langle\Delta,\epsilon_k\rangle}{\sqrt v} \notag\\
    &= -\frac{\om_k^2}{2}\big\|\nabla_x r_k(x_k)\big\|^2 - \om_k\big\langle\nabla_x r_k(x_k),\epsilon_k\big\rangle,
\end{align}
with $\om_k=c_k/\sqrt{v_k}$. Adding the reward tilt $r_{k-1}(x_{k-1})-r_k(x_k)$ of Eq.~\eqref{eq:kernel_ratio_weight} gives Eq.~\eqref{eq:dlogw}. No approximation is made; the integrator enters only through $(c_k,v_k)$. For Euler-Maruyama on a step of length $\Delta t_k$, $m_k(x_k)=x_k-[\mu_{t_k}(x_k)-\sigma^2_{t_k}\nabla_x\log p_{t_k}(x_k)]\Delta t_k$, the guidance shifts the mean by $c_k\nabla_x r_k(x_k)$ with $c_k=\sigma^2_{t_k}\Delta t_k$, and $v_k=\sigma^2_{t_k}\Delta t_k$, so $\om_k=\sigma_{t_k}\sqrt{\Delta t_k}$. The exponential-integrator constants are derived in App.~\ref{app:rne_weights}.
\end{proof}

\paragraph{Scope.} Three cases fall outside Lemma~\ref{prop:omega_sufficent}. Multi stage schemes such as the stochastic Heun sampler of \cite{karras2022elucidatingdesignspacediffusionbased} or higher order stochastic solvers of \cite{gonzalez2023seeds} evaluate the network at an additional noise-dependent interval point, so the kernel mean is not affine in a single score evaluation. Deterministic solvers have $v_k = 0$, giving degenerate kernels and $\om_k \to \infty$. Anisotropic or per dimension noise makes $v_k$ a matrix, and the scalar $\om_k$ is now a matrix.
\section{Diffusion model details}
\label{app:diffusion_details}

\paragraph{Training objective.} The score network $\sth(x_t,t)$ approximates the
marginal score $\nabla_{x_t}\log \pt$ and is trained by denoising score matching
\cite{song2021scorebasedgenerativemodelingstochastic},
\begin{equation}
  \mathcal{L}(\theta) = \E_{t}\,\E_{x_0\sim \pz}\,
  \E_{x_t\sim p(x_t\mid x_0)}
  \Big[\lambda_t\,\big\|\sth(x_t,t)-\nabla_{x_t}\log p(x_t\mid x_0)\big\|^2\Big],
\end{equation}
with weighting $\lambda_t>0$ and $t\sim\mathcal U[0,1]$. For a Gaussian forward kernel
$p(x_t\mid x_0)=\N(\alpha_t x_0,\sbar_t^2 I)$ the target is closed form,
$\nabla_{x_t}\log p(x_t\mid x_0)=-\epsilon/\sbar_t$, giving the equivalent
noise-prediction objective. Score-, noise-, and velocity-prediction parametrisations
are related by simple bijections \cite{gao2025diffusionmeetsflow}.

\paragraph{VP-SDE schedule.} We adopt the Variance-Preserving (VP) SDE
\cite{song2021scorebasedgenerativemodelingstochastic}, with
$f_t(x)=-\tfrac12\beta(t)x$ and $\sigma_t=\sqrt{\beta(t)}$ for a prescribed $\beta(t)$.
The forward kernel then admits the closed form, \[X_t=\alpha_t X_0+\sbar_t\epsilon, \qquad
\epsilon\sim\N(0,I)\] with,
\begin{equation}
   \alpha_t=\exp(-\tfrac12\!\int_0^t\!\beta(s)\,ds), \qquad
\sbar_t^2=1-\alpha_t^2.
\end{equation}
In experiments we use the linear schedule
$\beta(t)=\beta_{\min}+t(\beta_{\max}-\beta_{\min})$, $\beta_{\min}=0.1$,
$\beta_{\max}=20$ or a cosine schedule. 

\paragraph{Posterior mean (Tweedie).} The posterior mean of the clean state given a noisy $x_t$ is available in closed form from the score \cite{Efron01122011},
\begin{equation}
\label{eq:tweedie_appendix}
  \xh(x_t) = \frac{x_t + \sbar_t^2\, \sth(x_t, t)}{\alpha_t}.
\end{equation}
We use $\xh$ as the mean of the Gaussian denoising-posterior approximation in Section~\ref{sec:reward_selection}.

\paragraph{Posterior-variance proxy ($\Pi$GDM).} For the covariance of the Gaussian denoising posterior we use the $\Pi$GDM proxy, which is exact when the data distribution is a single Gaussian and is borrowed as a tractable surrogate otherwise \cite{song2023pseudoinverseguided}. Per dimension,
\begin{equation}
\label{eq:pigdm_variance}
  s_t^2 \;=\; \frac{\sbar_t^2\, V_{\mathrm{data}}}{\alpha_t^2\, V_{\mathrm{data}} + \sbar_t^2},
\end{equation}
with $\alpha_t^2$ and $\sbar_t^2=1-\alpha_t^2$ the VP-SDE marginal noise variance above (note $\sbar_t$ is the marginal noise std, distinct from the diffusion coefficient $\sigma_t=\sqrt{\beta(t)}$), and $V_{\mathrm{data}}=\Var(x_0)$ the marginal data variance (for the GMM benchmark, $\sum_k w_k(\Sigma_{k,11}+\mu_{k,1}^2)-(\sum_k w_k\mu_{k,1})^2$ along the read-out dimension). At early times ($t\to T$) $\alpha_t\to0$, $\sbar_t\to1$ and $s_t^2\to V_{\mathrm{data}}$ (full prior uncertainty); at late times ($t\to0$) $\sbar_t^2\to0$ and $s_t^2\to0$ (the posterior collapses to the Tweedie point estimate). Applied per dimension this yields the diagonal posterior covariance,
\begin{equation}
  \Sig = \operatorname{diag}\big(s_{t,1}^2,\dots,s_{t,d}^2\big),
\end{equation}
with $V_{\mathrm{data}}$ evaluated along each coordinate ($V_i=1$ for standardised data, otherwise estimated empirically).

\section{Sequential methods}
\label{app:annealing_splitting}
For completeness we provide an overview of classical sequential methods that are used in rare-event simulation, such as annealed importance sampling, splitting and sequential Monte Carlo (SMC). These are methods used to bridge the gap from a tractable reference $\pi_K$ to the tilted target $\piz$ of Eq.~\eqref{eq:target_sequence} through a sequence of overlapping intermediate distributions \cite{neal1998annealedimportancesampling, Kahn1951SplittingParticleTransmission, del2006sequential}. We briefly expand on this and highlight their similarities, but refer the readers to \cite{moral2004feynman} for a full exposition under the \emph{Feynman-Kac} formulation. Additionally, see \cite{naesseth2019elements, del2006sequential} for details on the SMC machinery. Formally the sequence of distributions defines a \emph{path-measure} over the space of all trajectories particles can take, but we omit this rigour for brevity.

Annealed Importance Sampling (AIS) and Sequential Monte Carlo (SMC) build this path by tempering the reward \cite{del2006sequential, neal1998annealedimportancesampling}. One such sequence of distributions tempers the reward,
\cite{del2006sequential, neal1998annealedimportancesampling},
\begin{equation}
\label{eq:ais_tempering}
    \pi_k(x) = \frac{1}{Z_k} \pz(x)\, e^{\beta_k r(x)},
    \qquad 1 = \beta_0 > \beta_1 > \cdots > \beta_K = 0.
\end{equation}
Rather than targeting the constrained target $\piz$ directly, tempering gradually sharpens the reward landscape until the chain converges to the target distribution. Additionally, SMC utilises resampling along the trajectory to prune low-weight particles and multiply those with higher weights \cite{doucet2000sequential, del2006sequential}. 

Multilevel splitting takes another approach, where the sequence is constructed using a sequence of nested rare event sets \cite{Kahn1951SplittingParticleTransmission, 0a3f6295-587d-3cd6-8835-1ba3c37b93ed}. Taking the canonical example of threshold exceedance, for a strictly decreasing ladder $\tau_0=\tau>\tau_1>\dots>\tau_K$, the event set is defined as $E_k = \{x : \phi(x) \ge \tau_k\}$. The sequence is defined to be,
\begin{equation}
\label{eq:splitting_sequence_tau}
    E_0 = \Ev \subset E_1 \subset \cdots \subset E_K,
    \qquad \pi_k(x) = \frac{1}{Z_{k, \tau}} \pz(x) \ind{ \phi(x) \ge \tau_k},
\end{equation} where $E_K$ is a region readily accessible from the state space. The aim of splitting is to estimate the rare event probability at the final $\tau$, i.e.\ $Z_{0, \tau} = \int \pz(x)\, \ind{\phi(x) \ge \tau_0}\, dx = \pE$. We show that all of these methods compute this in the same way. All methods work by transporting particles along the sequence, SMC and splitting additionally resample particles to focus computation on high-weight trajectories (i.e. trajectories that particles undertake that are more likely under the possible sequence of trajectories they could have taken)\cite{del2006sequential, 0a3f6295-587d-3cd6-8835-1ba3c37b93ed, Kahn1951SplittingParticleTransmission}. Intuitively, multilevel splitting is a form of SMC/AIS with a hard indicator function replacing a smoothed distribution and the schemes only differ in their construction of their sequences \cite{moral2004feynman}.

\subsection{Estimating the normalisation constant}
\label{sec:smc_weight_primer}
Having constructed the sequence, the quantity of interest is the terminal normalisation constant $\Zz$, which for the rare-event target is $Z_{0,\tau}$. We show explicitly, that splitting estimates the probability estimate in the same way as standard SMC/AIS. Please see \cite{del2006sequential, neal1998annealedimportancesampling, Kahn1951SplittingParticleTransmission, 0a3f6295-587d-3cd6-8835-1ba3c37b93ed} for further details.

For a sequence of distributions, $\pi_K\to\cdots\to\pi_0$ of Section~\ref{sec:sequential}, let $\pi_k = \frac{1}{Z_k}\gamma_k$ be the normalised density with the normalisation constant $Z_k = \int \gamma_k$. Each step along the trajectory incurs an incremental importance weight that is, in general, a ratio of forward and backward transition kernels between the two distributions \cite{del2006sequential}. A particle is propagated from $\pi_k$ to $\pi_{k-1}$ by the proposal
kernel $\Bk(x_{k-1}\mid x_k)$, and the auxiliary kernel $\Fk(x_k\mid x_{k-1})$ running in the opposite direction
defines the incremental weight of Eq.~\eqref{eq:generic_weight} \cite{del2006sequential},
\begin{equation}
\label{eq:smc_incremental_weight}
  \gk(x_{k-1}, x_k) \;=\; \frac{\gamma_{k-1}(x_{k-1})\,\Fk(x_k\mid x_{k-1})}{\gamma_k(x_k)\,\Bk(x_{k-1}\mid x_k)},
\end{equation}

Choosing the proposal to be a $\pi_{k-1}$-invariant MCMC kernel, as in the rejuvenation move of Section~\ref{sec:sequential}, with the auxiliary kernel $\Fk$ taken as its time reversal, the kernels cancel and the weight reduces to the target ratio
$\gk = \gamma_{k-1}(x_k)/\gamma_k(x_k)$
\cite{del2006sequential, neal1998annealedimportancesampling}, which we use throughout this subsection. The ratio of two normalisation constants $Z_{k-1}/Z_k$ is estimated via the Monte Carlo average of the incremental weights, i.e.,
\begin{equation}
\label{eq:Z_ratio}
  \E_{\pi_k}[\gk]
  = \int \frac{\gamma_{k-1}(x)}{\gamma_k(x)}\,\frac{\gamma_k(x)}{Z_k}\,dx
  = \frac{Z_{k-1}}{Z_k},
\end{equation}
with the standard empirical estimator,
\begin{equation}
  \widehat{\frac{Z_{k-1}}{Z_k}} = \frac{1}{N}\sum_{i=1}^{N} \gk^{(i)}, \qquad x^{(i)}\sim\pi_k.
\end{equation}
Hence the normaliser $\Zz$ follows from the telescoping product with the corresponding estimator given by,
\begin{equation}
    \label{eq:Z_telescoping}
    \hat \Zz = Z_K\prod_{k=1}^{K}\Big(\frac{1}{N}\sum_{i=1}^{N} \gk^{(i)}\Big),
\end{equation}
where $Z_K$ is known, e.g. for score based models $Z_K = 1$ as $\gamma_K = p_1$ is the normalised Gaussian prior \cite{song2021scorebasedgenerativemodelingstochastic, NEURIPS2020_4c5bcfec}. 

\paragraph{Tempering (AIS/SMC)} Annealed importance sampling and SMC interpolate by tempering the reward, $\gamma_k = \pz\,e^{\beta_k r}$, with $1=\beta_0>\cdots>\beta_K=0$, so the incremental weight is given by
\begin{equation}
\label{eq:Z_ais}
  \gk = e^{(\beta_{k-1}-\beta_k)\,r(x_k)},
  \qquad
  \frac{Z_{k-1}}{Z_k} = \E_{\pi_k}\!\big[e^{(\beta_{k-1}-\beta_k)\,r}\big].
\end{equation}
Each normalisation ratio is the average of this factor under the current target. When consecutive temperatures are close, i.e. $\beta_{k-1} - \beta_k$ small, the weights stay close to one for all particles and the ratio is estimated reliably. A single large step, by contrast can produce weights spanning many orders of magnitude - collapsing the effective sample size where the final estimate will be dominated by only a few particles that have the largest weights \cite{neal1998annealedimportancesampling}. 

\paragraph{Splitting} Multilevel splitting interpolates instead with hard indicators, i.e. $\gamma_k = \pz\,\ind{x\in E_k}$, for a shrinking sequence $E_K \supset \cdots \supset E_0$. The incremental weight collapses to,
\begin{equation}
\label{eq:Z_split_weight}
  \gk = \frac{\pz\ind{x\in E_{k-1}}}{\pz\ind{x\in E_k}} = \ind{x\in E_{k-1}},
  \end{equation}
for which the ratio of normalisation constant is given by,
\begin{equation}
 \label{eq:splitting_norm_estimator}   
\ \frac{Z_{k-1}}{Z_k} = \E_{\pi_k}\!\big[\ind{x\in E_{k-1}}\big]
  = \Pr[X\in E_{k-1}\mid X\in E_k] =: \alpha_k,
\end{equation}
so each ratio is the conditional probability of reaching the next level, estimated by the surviving fraction,  $\hat\alpha_k = N_{X\in E_{k-1}}/N_{X\in E_k}$. Thus the telescoping product of  Eq.~\eqref{eq:Z_telescoping} for the threshold exceedance given in Eq.~\eqref{eq:splitting_sequence_tau} becomes,
\begin{equation}
\label{eq:Z_split_product}
 \Zz = Z_K\prod_{k=1}^{K}\alpha_k
  \;=\; \Pr[X>\tau_K]\prod_{k=1}^{K}\Pr[X>\tau_{k-1}\mid X>\tau_k],
\end{equation}
This recovers the rare event probability estimate of the multilevel splitting algorithm \cite{rubino2009rare, morio2014survey}.

Both methods estimate the same object via a product of incremental ratios along the bridge where consecutive levels overlap, instead of the single rare-event ratio estimated in one step.  This is precisely the failure that accumulating importance weights over the trajectory sidesteps. The diffusion process supplies the sequence of transition kernels and the incremental weights \cite{doucet2022scorebased, vargas2023denoising} carry their own normalisation avoiding a separate estimate of $\Zz$, see Appendix~\ref{sec:why_not_snis}.
\subsection{Generic SMC algorithm}
\label{app:generic_smc_algorithm}

Given an unnormalised density $\gamma_0$ with a normalised form $\pi_0 := \gamma_0/\gamma_0[1]$, we seek to estimate integrals $\gamma_0[f]$ and to sample from $\pi_0$. SMC samplers \cite{del2006sequential} transport a population of $N$ weighted particles from a normalised reference $\gamma_K$ down to $\gamma_0$ along a sequence $\gamma_K,\dots,\gamma_0$, with $\pi_k := \gamma_k/\gamma_k[1]$. 

\textbf{Algorithm.} We begin by initialising $N$ samples, $X_K^n\sim \pref$ with weights $w_K^n=1$. At each step $k$, move each particle with a backward proposal kernel $\Bk$ and update its weight with the incremental weight $\gk$,
\begin{equation}
\label{eq:smc_incremental_weight_app}
  X_{k-1}^n\sim \Bk(\dd x_{k-1}\mid X_k^n),\qquad w_{k-1}^n = w_k^n\,\gk\big(X_{k-1}^n, X_k^n\big),
\end{equation}
\begin{equation}
\label{eq:generic_weight_app}
  \gk(x_{k-1}, x_k) = \frac{\gamma_{k-1}(x_{k-1})\,\Fk(x_k \mid x_{k-1})}{\gamma_k(x_k)\,\Bk(x_{k-1}\mid x_k)},
\end{equation}
where $\Fk$ is a forward auxiliary kernel running from $k-1$ to $k$. Any kernels can be selected, provided we can sample from $\Bk$ and evaluate $\gk$ pointwise. When the effective sample size $\ESS(w_{k-1}) := \|w_{k-1}\|_1^2 / \|w_{k-1}\|_2^2$ drops below $\eta N$, we resample, replacing each particle by an ancestor drawn in proportion to its weight and resetting the weights to one \cite{chopin2022resampling}, and rejuvenate using a $\pi_{k-1}$-invariant MCMC kernel.

\textbf{Estimators.} Let $\hat Z$ be the product, over the resampling steps so far, of the average weight $\frac1N\sum_n w^n_k$ just before the reset ($\hat Z=1$ if none). The particle system at step $k$ defines the weighted approximations
\begin{equation}
\label{eq:smc_estimators}
    \hat\gamma_0[f] := \hat Z\,\frac{1}{N}\sum_{n=1}^N w^n_0\,f(X^n_0),
    \qquad
    \hat\pi_0(\dd x) := \frac{\sum_{n=1}^N w^n_0\,\delta_{X^n_0}(\dd x)}{\sum_{n=1}^N w^n_0}.
\end{equation}
$\hat{\gamma}_0[f]$ is \emph{unbiased}, $\E[\hat\gamma_k[f]] = \gamma_k[f]$ for every $f$. The second is biased for finite $N$ but converges almost surely to $\pi_k$ as $N \rightarrow \infty$ \cite{del2006sequential}.
\subsection{Why annealing becomes necessary}
\label{sec:why_not_snis}
We now motivate why annealing is required and where importance sampling alone is insufficient. Even with \emph{exact} samples from the tilted target $\piz$ of Eq.~\eqref{eq:target_sequence}, taken with $\rz = \rsoft$ or the mollified reward. The naive endpoint (self-normalised) estimator of $\pE$ is badly behaved for estimating the normalisation constant, motivating the use of annealing, either via splitting or accumulated importance weights.

Given a smooth reward $\rsoft(x)$, suppose we can draw exact samples $x \sim \piz$ from the tilted distribution, $\piz = \gamma_0/\Zz$ with $\gamma_0 = \pz e^{\rsoft}$. The rare-event probability under the base distribution $\pz$ can then be written as,
\begin{equation}
\label{eq:rare_event}
\begin{aligned}
    \pE = \E_{\piz}\!\left[\ind{\phi(x)\ge \tau}\,\frac{\pz(x)}{\piz(x)}\right]
    &= \E_{\piz}\!\left[\ind{\phi(x)\ge \tau}\,\frac{\pz(x)}{\pz(x) e^{\rsoft(x)}/\Zz}\right] \\
    &= \E_{\piz}\!\left[\ind{\phi(x)\ge \tau}\,\Zz e^{-\rsoft(x)}\right],
\end{aligned}
\end{equation}
where $w(x) := \pz(x)/\piz(x) = \Zz e^{-\rsoft(x)}$ is the importance weight, and the normaliser $\Zz$ is estimated from the same samples via,
\begin{equation}
    \frac{1}{\Zz} = \frac{1}{\Zz}\int{\pz} dx = \frac{1}{\Zz} \int \pz(x) e^{\rsoft(x)} e^{-\rsoft(x)} =  \int \piz e^{-\rsoft(x)}= \E_{\piz}[e^{-\rsoft(x)}].
\label{eq:SNIS}
\end{equation}
Substituting Eq.~\eqref{eq:SNIS} into Eq.~\eqref{eq:rare_event} yields the
self-normalised importance-sampling (SNIS) estimator \cite{mcbook},
\begin{equation}
  \hpE
  \;=\;
  \frac{\sum_{i=1}^N \ind{\phi(x_i)\ge \tau}\,e^{-\rsoft(x_i)}}
       {\sum_{i=1}^N e^{-\rsoft(x_i)}},
       \qquad x_i \sim \piz(x)
  \label{eq:snis_pe_estimate}
\end{equation}

The numerator and denominator of \eqref{eq:snis_pe_estimate} behave very differently. The numerator is only affected by the region that $\piz$ approximates well, since the indicator $\ind{\phi(x)\ge\tau}$ provides a ``shielding'' effect. The denominator, by contrast is an expectation of the normalisation constant $\Zz$ which is affected by the entire domain of $\pz$. As the tilt is designed to concentrate on the rare event set, we are effectively using the rare-event tilted distribution $\piz$ to estimate a function of $\pz$. This naturally has a low ESS and yields a high-variance estimate of the normaliser $\Zz$. Even given perfect samples from $\piz$ we essentially have another ``rare-event'' expectation in the denominator where no particles of $\piz$ fall into the bulk. The denominator integrand $w(x) = \frac{\pz(x)}{\piz(x)} = \Zz e^{-\rsoft(x)}$ is largest in the bulk, where $\pz$ holds most of its mass but the tilt $\piz$ has almost none, \emph{exactly the region $\piz$ is designed to avoid}. The numerator $\E_q[\inds{\text{rare}}e^{-\rsoft}]$ stays bounded as it lives where $q$ samples effectively, whereas the denominator $\E_q[e^{-\rsoft}]$ needs exactly the bulk samples that $q$ avoids. Hence, the ESS for estimating the denominator becomes worse by design and this contributes a high variance estimate of $\Zz$!  We could in practice break this down into a series of transition importance distributions, which leads us back to splitting/annealing of Section \ref{app:annealing_splitting} \cite{morio2014survey, del2006sequential, neal1998annealedimportancesampling}.

\subsubsection{Empirical evidence on the 2D GMM}

If we draw exact samples from $\piz$, via Rejection Sampling or analytically, we illustrate that we still \emph{cannot} estimate $\Zz$. For the 2D GMM case with a log-CDF tilting, we have access to the analytical normalisation constant, the convolution of a Gaussian with a Gaussian CDF simply inflates the variance. The normaliser of the tilted distribution as given in Eq.~\eqref{eq:corr_gmm_Z} repeated here again is given by, 
\begin{equation}
\label{eq:compute_norm}
    \Zsoft = \int \Phi(f) \N(f \mid \mu, \sigma^2) \, df = \Phi\left( \frac{\mu}{\sqrt{1 + \sigma^2}} \right),
\end{equation}
with $f \mapsto \frac{x - \tau}{\delta}$, $\mu \mapsto \frac{\mu_i - \tau}{\delta}$ and $\sigma^2 \mapsto \frac{v_i}{\delta^2}$ (see Section \ref{app:corr_gmm}). This lets us disentangle whether the failure is in the numerator or the denominator, knowing the analytical denominator and the true probability, the true numerator is recoverable. We draw a large number of samples from $\pz$ and use rejection sampling to generate samples from $\piz$. We find that a small number of particles drawn directly from $\piz$ suffices to estimate the rare-event probability \emph{provided the denominator is known}. Conversely, if the denominator is unknown we cannot estimate the probability at all - the difficulty lies entirely in the denominator as seen in Table \ref{tab:snis_catastrophic}, which is exactly what motivates annealing (Section~\ref{sec:sequential} \& ~\ref{app:annealing_splitting}).

For $\tau = 7.0$ and $\delta = 0.25$ the true $\Zsoft = 7.331 \times 10^{-3}$ from Eq.~\eqref{eq:compute_norm}.
\begin{table}[htpb]
    \centering
    \caption{Self-normalised importance sampling (SNIS) estimates vs.\ sample size ($N$).}
    \label{tab:snis_catastrophic}
    \begin{tabular}{r c c c}
        \toprule
        $N$ & $\hat{p}_{\text{SNIS}}$ & Relative Bias & $\hat{Z}_{0,\text{SNIS}}$ \\
        \midrule
        100    & $4.6035 \times 10^{-1}$ & $+6,593\%$ & $4.873 \times 10^{-1}$ \\
        1,000  & $3.5854 \times 10^{-1}$ & $+5,113\%$ & $3.770 \times 10^{-1}$ \\
        10,000 & $3.1680 \times 10^{-1}$ & $+4,506\%$ & $3.367 \times 10^{-1}$ \\
        25,000 & $1.9804 \times 10^{-1}$ & $+2,779\%$ & $2.102 \times 10^{-1}$ \\
        \midrule
        100 & $6.9250 \times 10^{-3}$ &  $+0.68\%$  & (analytical $\Zsoft$) \\
        \bottomrule
    \end{tabular}
\end{table}

Note that $\hat{p}_{\text{SNIS}}$ differs from $\hat{Z}_{0,\text{SNIS}}$ because some particles do not pass through the numerator's indicator by design of the smoothing in Section~\ref{sec:method}. The goal of annealing/splitting (Section~\ref{sec:sequential}) is precisely to make the numerator easier to estimate.

\subsection{Splitting estimator}
\label{sec:splitting_estimation}

In the case where the path weights are not readily accessible, either because they are not computed or if a large number of discretisation is infeasible, e.g. for a second order numerical integrator. We can fall back to a guided diffusion proposal to build a splitting/annealed probability estimator. In this case, the annealing bridge becomes the nested sequence of event sets defined in Eq.~\eqref{eq:splitting_sequence_tau}. For the threshold exceedance example given in Sec \ref{app:annealing_splitting}, the conditional ratio between adjacent levels is given by,
\begin{equation}
\label{eq:alpha}
  \alpha _k\;=\; \pz[X > \tauh \mid X > \taul] = \frac{\pz[ X > \tauh \cap  X > \taul]}{\pz[X > \taul]} = \frac{\pz[X > \tauh]}{ \pz[X > \taul]}.
\end{equation}
In the denoising-diffusion setting each conditional is sampled by transporting particles from the Gaussian prior to the corresponding target, this means that the probability, $\Pr[X > \tauh]$ is estimated by generating samples from the tilted distribution, $p(X \mid  X > \tauh)$, as highlighted in Section~\ref{sec:method}. Within this scheme $\alpha_k$ admits two estimators, depending on whether a single tilt or one tilt per level is used.

\paragraph{Formulation I: single-tilt, two-indicator estimator.} Use a \emph{single} proposal $q_\ell$ targeting the lower level $\tau_\ell$ and estimate both exceedance probabilities from the same particles. With $\tau_\ell<\tau_h$,
\begin{equation}
\label{eq:alpha_single_proposal}
  \alpha_k = \frac{\E_{q_\ell}\!\big[\ind{\phi(X)>\tau_h}\,e^{-r_\ell(X)}\big]}{\E_{q_\ell}\!\big[\ind{\phi(X)>\tau_\ell}\,e^{-r_\ell(X)}\big]},
\end{equation}
where the shared normaliser $Z_\ell$ cancels between numerator and denominator, so it need not be estimated.  The denominator is estimated where $q_\ell$ has mass (it targets $\tau_\ell$), so it is well-conditioned; the variance is dominated by the numerator, which lives where $q_\ell$ is sparse (${X>\tau_h}$). 

\paragraph{Formulation II: two-tilt transitioning.} Use one proposal per level, $q_\ell\propto \pz\,e^{r_\ell}$ and $q_h\propto \pz\,e^{r_h}$, and estimate the numerator under $q_h$ and the denominator under $q_\ell$,
\begin{equation}
\label{eq:alpha_two_tilt}
  \alpha_k = \frac{Z_h}{Z_\ell}\,
  \frac{\E_{q_h}\!\big[\ind{\phi(X)>\tau_h}\,e^{-r_h(X)}\big]}{\E_{q_\ell}\!\big[\ind{\phi(X)>\tau_\ell}\,e^{-r_\ell(X)}\big]}.
\end{equation}
Each expectation is now evaluated under a tilt matched to its own threshold, so neither term suffers the ESS collapse of Formulation~I; the price is that the normalisers no longer cancel and the ratio $Z_h/Z_\ell$ must be estimated. Annealing on the output space, i.e. between $p_0e^{r_\ell} \rightarrow p_0e^{r_h}$, we can compute this ratio with the tempering estimator given in Eq.~\eqref{eq:Z_ais},
\begin{equation}
  \frac{Z_h}{Z_\ell} = \E_{q_\ell}\!\big[e^{\,r_h(X)-r_\ell(X)}\big]
  \approx \frac{1}{N}\sum_i e^{\,r_h(x_i)-r_\ell(x_i)}, \qquad x_i\sim q_\ell,
\end{equation}
which is reliable when adjacent tilts overlap. Both formulations require sufficient overlap of the thresholds, the only difference is Formulation~I requires overlap between two hard indicators and through the smooth reward difference, $r_h - r_\ell$, in Formulation~II.

\paragraph{Notable Caveats:}  Evolving directly from $p(X \mid X > \tau_l) \rightarrow p(X \mid X > \tau_h)$ is non-trivial without additional training. Unlike traditional splitting, we do not evolve/reuse particles across different thresholds \cite{morio2014survey}, hence the above formulations are not a form of sequential methods. Instead they are a sequence of independent estimators sharing a compute budget.  Additionally, any importance weights used assume that within the threshold set by the indicators, samples are well distributed along the tilted distribution. As such both estimators are sensitive to the quality of the guidance. Training based methods could be introduced such as Flow Matching \cite{lipman2022flow, liu2022flow, albergo2022building} or Schrödinger's Bridges \cite{de2021diffusion, shi2023diffusion} but these remain beyond the scope of this work.

\section{Weight computation}
\label{sec:gaussian_ratio_sigma_notation}
Let $\Pm$ and $\Qm$ be the path-measures on $[0, 1]$ of the base reverse-time SDE of Eq.~\eqref{eq:reverse_sde_sigma} and the guided SDE of Eq.~\eqref{eq:guided_sde} respectively. Here we do not assume that $\Qm$ is the ideal Doob's path measure $\Pm^*$ and the gap between the two is varaince in Theorem~\ref{thm:variance}. Both measures start from $X_1 \sim \pref$ and differ only the drift obtained by the reward gradient term, $-\sigma^2_t \nabla_x r_t$ so Girsanov's theorem gives us an exact form of the Radon-Nikodym Derivative \cite{oksendal2003stochastic},
\begin{equation}
\label{eq:app_girsanov}
\dPQ
= \exp\!\Big(\!-\!\int_0^1 \sigma_t\, \nabla_x \rt(X_t)\cdot\dd\Wb_t
\;-\;\tfrac12\int_0^1 \sigma_t^2\,\|\nabla_x \rt(X_t)\|^2\,\dd t\Big),
\end{equation}
where $\Wb_t$ is the reverse-time Wiener process driving the
guided sampler. This expression is valid provided the change of measure exists
at all. A sufficient condition for existence is given by Novikov's condition,
\begin{equation}
\label{eq:app_novikov}
\E_{\Qm}\!\Big[\exp\Big(\tfrac12\int_0^1 \sigma_t^2\,\|
\nabla_x \rt(X_t)\|^2\,\dd t\Big)\Big] < \infty.
\end{equation}
The discrete estimator of Algorithm~\ref{alg:rne_ais} does not require this condition to be fulfilled as Theorem~\ref{thm:main} requires only $\Pk\ll\Qk$ (App.~\ref{app:proofs}).

\paragraph{Discrete transition kernels}
In practice we sample on the grid $0 = t_0 < \cdots < t_K = 1$ of Section~\ref{sec:background}, with $t=0$ the data distribution, writing $x_k$ for the state at $t_k$ so that step $k$ moves $x_k \to x_{k-1}$. The path measures factorise into the backward transition kernels of Section~\ref{sec:background} and the kernel we actually simulate \cite{wu2023practical},
\begin{equation}
\label{eq:app_path_factorise}
\begin{aligned}
  \Pmk(x_{0:K}) &= \pref(x_K)\!\prod_{k=1}^{K}\! \Pk(x_{k-1}\mid x_k),\\
  \Qmk(x_{0:K}) &= \pref(x_K)\!\prod_{k=1}^{K}\! \Qk(x_{k-1}\mid x_k),
\end{aligned}
\end{equation}
where $\Pk$ integrates the base reverse SDE of Eq.~\eqref{eq:reverse_sde_sigma} and $\Qk$ the guided SDE of Eq.~\eqref{eq:guided_sde} from $t_k$ down to $t_{k-1}$. Both are initialised from the same Gaussian prior $\pref$ which cancels,
\begin{equation}
\label{eq:app_dpdq}
  \frac{\dd\Pmk}{\dd\Qmk} = \prod_{k=1}^{K} \frac{\Pk(x_{k-1}\mid x_k)}{\Qk(x_{k-1}\mid x_k)}.
\end{equation}
Incorporating a reward tilt $e^{\rz(x_0)}$ as a telescope product of individual reward potentials, 
\begin{equation}
\label{eq:app_reward_telescope}
  R(x_{0:K}) = \prod_{k=1}^{K} G_k, \qquad G_k = \frac{e^{r_{k-1}(x_{k-1})}}{e^{r_k(x_k)}},
\end{equation}

which collapses to $R = \exp\!\big(r_{0}(x_0) - r_{K}(x_K)\big) = e^{\rz(x_0)}$ since $r_K =0$. Multiply Eqs.~\eqref{eq:app_dpdq} and \eqref{eq:app_reward_telescope} yields,
\begin{equation}
\label{eq:app_weight_product}
  e^{\rz(x_0)}\,\frac{\dd\Pmk}{\dd\Qmk}(x_{K:0}) = \prod_{k=1}^{K} G_k\,\frac{\Pk(x_{k-1}\mid x_k)}{\Qk(x_{k-1}\mid x_k)} = \prod_{k=1}^{K}\gk(x_{k-1},x_k),
\end{equation}
which recovers the incremental weight of Eq.~\eqref{eq:kernel_ratio_weight} accumulated by Algorithm~\ref{alg:rne_ais}.

\paragraph{Weight Computation}
\label{app:rne_weights}
Recall from Lemma~\ref{prop:omega_sufficent} that, for Gaussian transition kernels of Eq.~\eqref{eq:gaussian_kernels}, the incremental log weight is given by,

\begin{equation}
\label{eq:dlogw_explicit}
  \log \gk(x_{k-1}, x_k)
  \;=\;
  \underbrace{r_{k-1}(x_{k-1}) - r_k(x_k)}_{\text{reward telescope}}
  \;-\;
  \underbrace{\frac{\om_k^2}{2} \| \nabla_x r_k(x_k) \|^2 }_{\text{control cost}}
  \;-\;
  \underbrace{\om_k \langle \nabla_x r_k(x_k), \epsilon_k \rangle }_{\text{martingale correction}}.
\end{equation}

 As all path-measures we have introduced differ only in the drift, the change of measure between them is given by Girsanov's theorem which yields an exact Radon-Nikodym Derivative \cite{oksendal2003stochastic} and the above computation represents its discretised form. The control-cost is the price paid by the guided SDE for perturbing the base drift towards the rare-event set and a martingale correction which accounts for any ``lucky'' noise $\epsilon_k$ sampled along the guided step that carries the particle towards the set.

\textbf{Remark.} Nothing in this derivation constrains the $r_{k}$ to be constant across time; and any family of schedules defined by $(\lambda(t), \tau(t), \delta(t))$ do not break the validity and the weight computation remains exact. The schedules enter the reward telescope of Eq.~\eqref{eq:app_reward_telescope} only through the endpoints themselves $R = \exp\!\big(r_{t_0}(x_0) - r_{t_K}(x_K)\big)$, and the kernel ratio of Eq.~\eqref{eq:app_dpdq} only through the guidance $\nabla_{x_t} r_t$.

\subsection{$\om_k$ for common integrators}
\label{app:gamma_derieved}
The first-order exponential integrator \cite{milstein2004stochastic} of \cite{gonzalez2023seeds} integrates the linear part of the reverse drift and the noise exactly, freezing only the network output across the step. Reparameterising the time increment in half-log Signal to Noise Ratio, $\ell_t := \log (\alpha_t / \sbar_t)$; write $\Delta \ell_k = \ell_{t_{k-1}} - \ell_{t_k} \ge 0$ for its increment across a step. 

\paragraph{Exponential Integrator (VP \& $\hat{\epsilon}_\theta $ frozen)} Restating their Eq.~14 \cite{gonzalez2023seeds} in our notation and abbreviating the symbols, 
\begin{equation}
A_k := \frac{\alpha_{t_{k-1}}}{\alpha_{t_k}},
  \qquad
  D_k := 2\,\sbar_{t_{k-1}}\bigl(e^{\Delta\ell_k}-1\bigr),
  \qquad
  v_k := \sbar^2_{t_{k-1}}\bigl(e^{2\Delta\ell_k}-1\bigr).
\end{equation}
The incremental reverse step is given by,
\begin{equation}
\label{c4:step}
  x_{k-1} = A_k\,x_k
  \;-\; D_k\hat\epsilon_\theta\bigl(x_k,t_k\bigr)
  \;+\; \sqrt{v_k} \epsilon_k,
  \qquad
  \epsilon_k \sim \N(0, I),
\end{equation}
where $\hat{\epsilon}_\theta$ is the noise predicted by the network. This is related by a change of variable to the score network, $\sth = - \frac{\epsilon_\theta}{\sbar}$ \cite{gao2025diffusionmeetsflow}. Hence, the guidance update in score maps to the noise prediction as,
\begin{equation}
\label{eq:change_noise_score}
\begin{aligned}
    \sth  &\rightarrow \sth  + \nabla_x r_t, \\
    \epsilon_\theta &\rightarrow \epsilon_\theta - \sbar_{t_k} \nabla_x r_{t_k}(x_k).
\end{aligned}
\end{equation}
Therefore, the change of means between the two guided and unguided process is obtained by substituting Eq.\eqref{eq:change_noise_score} into Eq.\eqref{c4:step} to get,
\begin{equation}
    \mu^a_k-\mu^\nu_k = D_k \sbar_{t_k} \nabla_x r_{t_k}(x_k) 
    = c_{k}^{\text{VP}} \nabla_x r_{t_k}(x_k),
\end{equation}
where we have further equated this to the mean shift of Eq.~\eqref{eq:gaussian_kernels}.
Hence, under the exponential first-order integrator and a VP noise schedule, 
\begin{equation}
    c_{k}^{\text{VP}} = D_k \sbar_{t_k} = 2 \sbar_{t_{k-1}}\sbar_{t_k}(e^{\Delta\ell_k}-1).
\end{equation}
To obtain $\om_k$, we make further substitutions,
\begin{equation}
\label{eq:gamma_exp}
\begin{split}
    \big(\om^{\mathrm{VP}}_k\big)^2 = \frac{(c_k^{\mathrm{VP}})^2}{v_k}
    &= \frac{4\,\sbar^2_{t_{k-1}}\sbar^2_{t_k}\bigl(e^{\Delta\ell_k}-1\bigr)^2}
            {\sbar^2_{t_{k-1}}\bigl(e^{2\Delta\ell_k}-1\bigr)} \\
    &= \frac{4\,\sbar^2_{t_k}\bigl(e^{\Delta\ell_k}-1\bigr)^2}
            {\bigl(e^{\Delta\ell_k}-1\bigr)\bigl(e^{\Delta\ell_k}+1\bigr)} \\
    &= \frac{4\,\sbar^2_{t_k}\bigl(e^{\Delta\ell_k}-1\bigr)}
            {e^{\Delta\ell_k}+1} \\
    &= 4\,\sbar^2_{t_k} \tanh({\Delta \ell_k /2}),
\end{split}
\end{equation}
so that $\om^{\mathrm{VP}}_k = 2\,\sbar_{t_k}\sqrt{\tanh(\Delta\ell_k/2)}$.

\paragraph{Exponential Integrator (VE)} Under the VE schedule, $f_t = 0$ and $\alpha_t \equiv 1$. Following \cite{gonzalez2023seeds} and rewriting the score network as a data-prediction $\xh = x_t + \sbar_t^2 \sth$ using Tweedie's formula \cite{Efron01122011, gao2025diffusionmeetsflow} and freezing $\xh$ along the increment, the reverse drift is given by,
\begin{equation}
    \label{eq:reverse_drift_ve}
    x_{k-1} = \kappa_k x_k + (1 - \kappa_k) \xh + \sqrt{v_k} \epsilon_k, \qquad v_k = \sbar^2_{t_{k-1}}(1 - \kappa_{k}),
\end{equation}
where $\kappa_k := \frac{\sbar^2_{t_{k-1}}}{\sbar^2_{t_k}}$. Under a guided score the data prediction is perturbed to,
\begin{equation}
\begin{aligned}
    \sth  &\rightarrow \sth  + \nabla_x r_t, \\
    \xh &\rightarrow \xh + \sbar^2_{t_k}\nabla_x r_{t_k}(x_k),
\end{aligned}
\end{equation}
hence, $c_k = (1 - \kappa_k)\sbar^2_{t_k}$ and it follows,
\begin{equation}
    \om^{\text{VE}, \hat{x}}_k = \sbar_{t_k}\sqrt{e^{2 \Delta \ell_k } - 1},
\end{equation}
with $\ell_t = - \log \sbar_t$ when $\alpha_t \equiv 1$. 

Freezing the noise-prediction $\hat{\epsilon}_k$ would lead to an exact quadrature scheme of the noise term, $v_k$.
Under a VE schedule $f_t = 0$, so the reverse drift
carries no part linear in $x$ and, using $\sigma_t^2/\sbar_t = 2\dot{\sbar}_t$, the frozen
integrand may be integrated in closed form \cite{gonzalez2023seeds},
\begin{equation}
\label{eq:reverse_drift_ve_eps}
  x_{k-1} = x_k - D^{\text{VE}}_k \hat\epsilon_\theta(x_k, t_k) + \sqrt{v_k}\,\epsilon_k,
  \qquad
  D^{\text{VE}}_k := 2\bigl(\sbar_{t_k} - \sbar_{t_{k-1}}\bigr),
  \qquad
  v_k := \sbar^2_{t_k} - \sbar^2_{t_{k-1}},
\end{equation}
Substituting the guided noise-prediction of Eq.~\eqref{eq:change_noise_score} gives
$c^{\text{VE}}_k = D^{\text{VE}}_k \sbar_{t_k} = 2\sbar_{t_k}(\sbar_{t_k} - \sbar_{t_{k-1}})$, and
since $\sbar_{t_{k-1}} = \sbar_{t_k} e^{-\Delta\ell_k}$ when $\alpha_t \equiv 1$, both
coefficients collapse onto the VP forms,
\begin{equation}
  c^{\text{VE}}_k = 2\,\sbar_{t_{k-1}}\sbar_{t_k}\bigl(e^{\Delta\ell_k}-1\bigr),
  \qquad
  v_k = \sbar^2_{t_{k-1}}\bigl(e^{2\Delta\ell_k}-1\bigr),
\end{equation}
so that $\om^{\text{VE, $\hat{\epsilon}$}}_k = 2\,\sbar_{t_k}\sqrt{\tanh(\Delta\ell_k/2)}$, identical to
$\om^{\mathrm{VP}}_k$ of Eq.~\eqref{eq:gamma_exp}.

\paragraph{Euler--Maruyama} Euler--Maruyama steps \cite{kloeden1992numerical} discretise the reverse SDE directly, so the base and guided kernels share a common covariance, $v_k = \sigma^2_{t_k} \Delta t_k$ and $u_t = - \sigma^2_t \nabla_x r_t$. Hence, $c^{EM}_k = \sigma^2_{t_k} \Delta t_k$, and,
\begin{equation}
    \om^{EM}_k = \sigma_{t_k} \sqrt{\Delta t_k}.
\end{equation}

Table~\ref{tab:gamma_summary} collects the three cases. Notably, as we increase the number of discretisations or $\Delta\ell_k \rightarrow 0$, all variants given in Table~\ref{tab:gamma_summary} recover the EM $\om^2_k$.

\begin{table}[h]
\centering
\caption{$\om_k^2 = c_k^2/v_k$ for different integrators and noise schedules.}
\label{tab:gamma_summary}
\setlength{\tabcolsep}{5pt}%
\begin{tabular}{lccc}
\toprule
 & $c_k$ & $v_k$ & $\om^2_k = c^2_k/v_k$ \\
\midrule
Euler--Maruyama (VP/VE) & $\sigma^2_{t_k}\Delta t_k$ & $\sigma^2_{t_k}\Delta t_k$ & $\sigma^2_{t_k}\Delta t_k$ \\
exponential, VP ($\hat{\epsilon}$ frozen) & $2\sbar_{t_{k-1}}\sbar_{t_k}(e^{\Delta\ell_k}-1)$ & $\sbar^2_{t_{k-1}}(e^{2\Delta\ell_k}-1)$ & $4\,\sbar^2_{t_k}\tanh(\Delta\ell_k/2)$ \\
exponential, VE ($\hat{x}$ frozen) & $(1-\kappa_k)\,\sbar^2_{t_k}$ & $\sbar^2_{t_{k-1}}(1-\kappa_k)$ & $\sbar^2_{t_k}(e^{2\Delta\ell_k}-1)$ \\
exponential, VE ($\hat{\epsilon}$ frozen) & $2\,\sbar_{t_{k-1}}\sbar_{t_k}\bigl(e^{\Delta\ell_k}-1\bigr)$ & $\sbar^2_{t_{k-1}}\bigl(e^{2\Delta\ell_k}-1\bigr)$ & $4\,\sbar^2_{t_k}\tanh(\Delta\ell_k/2)$ \\
\bottomrule
\end{tabular}
\end{table}

\section{The guidance gradient via local linearisation}
\label{app:guidance_gradient}
\begin{figure}[!htb]
    \centering
\includegraphics[width=0.85\linewidth]{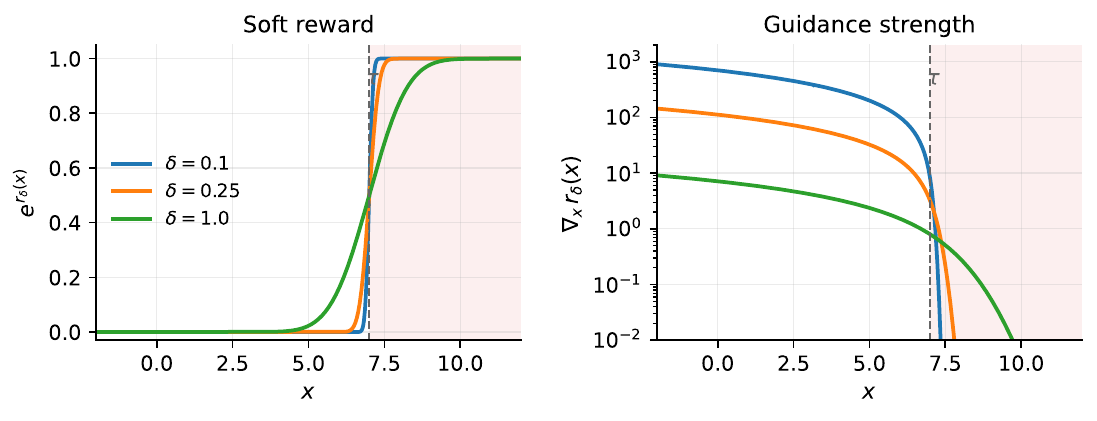}
    \caption{Soft reward and its guidance for a threshold event $E_7=\{x : \phi(x)\ge7\}$ shown for three smoothing levels $\delta$. \emph{Left.} The soft indicator relaxes the hard indicator. \emph{Right.} The guidance $\nabla_x \rsoft(x)$ points toward the event set (shaded) and decays inside it, a smaller $\delta$ gives a stronger guidance.}
    \label{fig:reward_guidance_term}
\end{figure}

The variance-aware guidance, illustrated in Fig.~\ref{fig:reward_guidance_term} used in Section~\ref{sec:method} is,

\begin{equation*}
\nabla_{x_t}\log\E_{p(x_0\mid x_t)}[e^{\rsoft(x_0)}]
\end{equation*} 

This appendix derives the closed-form approximation used in our implementation. We consider a general vector-valued constraint $\phi:\R^d\to\R^m$ using a Gaussian approximation of the denoising posterior and a local linearisation of $\phi$. The scalar affine guidance in Eq.~\eqref{eq:reward_expectation}, used for the one-sided threshold-exceedance experiments, is recovered as a special case $m=1$ with a linear-$\phi$. We then discuss how interval and compound events are represented by multiple one-sided constraints and the approximation induced when their guidance terms are treated independently.

For a multivariate constraint, the smooth reward is a multivariate Gaussian CDF
$\Phi_m(\,\cdot\mid\Sigma_c)$ with constraint covariance $\Sigma_c$ (the
smoothing bandwidth; in the scalar case $\Sigma_c=\delta^2$). The integral is given by,
\begin{equation}
  I(x_t) \;=\; \E_{p(x_0\mid x_t)}\!\left[\Phi_m\big(\phi(x_0)-\tau \,\big|\, \Sigma_c\big)\right].
\end{equation}

\paragraph{Gaussian posterior.} We use the same Gaussian denoising-posterior
approximation as in the main text, $p(x_0\mid x_t)\approx
\N(x_0\mid \xh,\Sig)$ with $\xh$ the Tweedie estimate and $\Sig$ the diagonal $\Pi$GDM posterior-covariance proxy (Appendix~\ref{app:diffusion_details}), hence,
\begin{equation}
  I(x_t) \;\approx\; \int_{\R^d}\N(x_0\mid \xh,\Sig)\,
  \Phi_m\big(\phi(x_0)-\tau \,\big|\, \Sigma_c\big)\,\mathrm dx_0 .
\end{equation}

\paragraph{Linearisation.} Linearising the constraint about the Tweedie mean,
\begin{equation}
  \phi(x_0) \;\approx\; \phi(\xh) + J_\phi(\xh)\,(x_0-\xh),
  \qquad J_\phi(\xh)=\nabla_{x_0}\phi\big|_{\xh}\in\R^{m\times d},
\end{equation}
makes the integrand a Gaussian convolved with a Gaussian CDF, which integrates
in closed form to a single CDF with the two covariances added \cite{10.7551/mitpress/3206.001.0001},
\begin{equation}
\label{eq:lookahead_multivariate}
  I(x_t) \;\approx\; \Phi_m\!\left(\phi(\xh)-\tau \,\Big|\, \Sigma_c + J_\phi(\xh)\,\Sig\,J_\phi(\xh)^\top\right).
\end{equation}
The posterior covariance $\Sig$ is thus pushed through the constraint
Jacobian and inflates the effective observation variance.

\paragraph{Guidance gradient.} Let $\pit \propto \pt\,e^{\rt}$ be the continuous-time version of the tilted target $\pi_k$ of Eq.~\eqref{eq:target_sequence}. The guided score is then the unconditional score plus
the gradient of $\log I$,
\begin{equation}
  \nabla_{x_t}\log \pit(x_t)
  \;=\; \underbrace{\nabla_{x_t}\log \pt(x_t)}_{\text{unconditional score}}
  \;+\; \underbrace{\nabla_{x_t}\log \Phi_m\!\left(\phi(\xh)-\tau \,\Big|\, \Sigma_c + J_\phi(\xh)\,\Sig\,J_\phi(\xh)^\top\right)}_{\text{variance-aware guidance gradient}}.
\end{equation}

\paragraph{Linear constraint.} For an affine read-out $\phi(x_0)=Ax_0+b$ the
Jacobian is constant, $J_\phi=A$, and the guidance simplifies to
\begin{equation}
  \nabla_{x_t}\log \pit(x_t)
  \;=\; \nabla_{x_t}\log \pt(x_t)
  \;+\; \nabla_{x_t}\log \Phi_m\!\left(A\xh+b-\tau \,\Big|\, \Sigma_c + A\,\Sig\,A^\top\right).
\end{equation}
Specialising further to $m=1$ with $\phi(x)=a^\top x$ and $\Sigma_c=\delta^2$ gives the scalar guidance used in the main text,
\begin{equation}
\label{eq:reward_expectation}
\begin{aligned}
     \nabla_{x_t} \log \E_{p(x_0 \mid x_t)} [\exp(\rsoft(x_0))]
     &= \nabla_{x_t} \log\int p(x_0 \mid x_t)\,\Phi\!\left(\frac{\phi(x_0)-\tau}{\delta}\right) dx_0 \\
    &\;\approx\;\nabla_{x_t}\log\Phi\!\left(\frac{\phi(\xh)-\tau}{\sqrt{\delta^2 + a^\top\Sig a}}\right),
\end{aligned}
\end{equation}
with the inflated width $\sqrt{\delta^2 + a^\top\Sig a}$.

\subsection{Factorised guidance for multiple constraints}
\label{app:multiple_guidances}

Consider an event defined by the intersection of $m$ one-sided constraints,
\begin{equation}
\label{eq:app_multiple_events}
    \Ev=\bigcap_{j=1}^{m}
    \left\{
        \phi_j(x) \geq \tau_j
    \right\},
\end{equation}
such as the boxed intervals of App.~\ref{app:box_events}. Section~\ref{app:guidance_gradient} gives the corresponding look-ahead likelihood as a multivariate Gaussian CDF $\Phi_m(\,\cdot \mid \Sigma_{\mathrm{eff}})$ with
\begin{equation}
    \Sigma_{\mathrm{eff}} = \Sigma_c + J_\phi(\xh)\,\Sig\,J_\phi(\xh)^\top,
\end{equation}
where $\Sigma_c$ is the user-specified smoothing covariance and the second term is the
denoising-posterior uncertainty projected through the read-out (Eq.~\eqref{eq:lookahead_multivariate}).

To obtain the guidance gradient, we require $\nabla_{x_t}\log\Phi_m$, which in turn requires evaluating $\Phi_m$ itself. For $m=1$, this is available in closed form and for $m=2, 3$ it reduces to a low-dimensional quadrature \cite{owen1956tables, drezner1994computation, genz2004numerical}. For $m \ge 4$ with an unstructured or full covariance matrix $\Sigma_\mathrm{eff}$ no such reduction exists and the function has to be approximated by (randomised) quasi-Monte Carlo techniques \cite{83355a14-2760-31d3-afd2-16d5bbbc1183}. The cost of that approximation is incurred once per particle per diffusion step, and unless the evaluation points are held fixed across particles and steps, the reward and its gradient become stochastic rather than deterministic functions of $x_t$. We therefore also provide the factorised surrogate below.

Optionally, $\Sigma_\mathrm{eff}$ may be approximated by its diagonal, under which $\Phi_m$
factorises into a product of univariate Gaussian CDFs,
\begin{equation}
\label{eq:factorised_lookahead}
    I_{\mathrm{fact}}(x_t)
    =
    \prod_{j=1}^{m}
    \Phi\!\left(
        \frac{\phi_j(\xh)-\tau_j}{\delta_{\text{eff}, j}(x_t)}
    \right),
    \qquad
    \delta_{\mathrm{eff},j}^2(x_t) = \delta_j^2 + a_j^\top \Sig\, a_j ,
    \quad
    a_j = \nabla_{x_0}\phi_j\big|_{\xh},
\end{equation}
with a factorised reward $r^{\mathrm{fact}}_t := \log I_{\mathrm{fact}}$ and guidance, 
\begin{equation}
    \nabla_{x_t} r^{\mathrm{fact}}_t = \sum_{j=1}^{m} \nabla_{x_t}\log
\Phi\left(\frac{\phi_j(\xh)-\tau_j}{\delta_{\text{eff}, j}(x_t)}\right).
\end{equation}
This is simply $m$ independent copies of the scalar guidance and while the factorisation retains the marginal posterior uncertainty of each
constraint, it discards the off-diagonal entries of $\Sigma_{\text{eff}}$. This alters the guidance drift, and hence the implemented proposal $\Qm$ together with the sequence of marginals $\pit$. Our rejuvenation kernels are made invariant with respect to the same marginals, using the factorised surrogate $I_{\mathrm{fact}}$. However, the target itself remains unchanged $\pz e^{\rsoft(x)}$ as it depends on the data-reward at $t=0$ alone and remains unaffected by $\pit$ or $\nabla_x \rt$. Hence, this approximation only contributes to the variance of our estimator.

Note finally that the factorisation describes the tilted marginals and not the sampler. The guided drift is, \[\nabla_{x_t}\log \pt(x_t) + \nabla_{x_t} r^{\mathrm{fact}}_t(x_t),\]  and the unconditional score retains the emulator's full joint dependence structure. Additionally, each term is also mapped through the dense
$\partial\xh/\partial x_t$, so a constraint supported on a single region and channel still induces a global drift across all variables. The diagonality of $\Sigma_{\mathrm{eff}}$ suppresses cross-constraint terms in the look-ahead \emph{variance}; it does not decouple the resulting drift.

\paragraph{Interval Guidance} A useful special case is an event defined by the interval, 
$\Ev = \{\tau_a \leq \phi(x) \leq \tau_b\}$, which is the intersection
\eqref{eq:app_multiple_events} with $m=2$, taking $\phi_1 = \phi$, $\tau_1 = \tau_a$ and
$\phi_2 = -\phi$, $\tau_2 = -\tau_b$. We write the reward as a difference of two individual rewards given by,
\begin{equation}
\label{eq:interval_reward}
    e^{\rsoft(x)}
    =
    \Phi\!\left(\frac{\phi(x)-\tau_a}{\delta}\right)
    -\Phi\!\left(\frac{\phi(x) - \tau_b}{\delta}\right),
\end{equation}
with the look-ahead and guidance following as before, replacing $\delta$ by
$\delta_{\mathrm{eff}}^2 = \delta^2 + a^\top\Sig a$ which is shared across both boundaries.

\section{The correlated 2D GMM benchmark}
\label{app:corr_gmm}
\begin{figure}[!ht]
\centering
\includegraphics[width=0.7\linewidth]{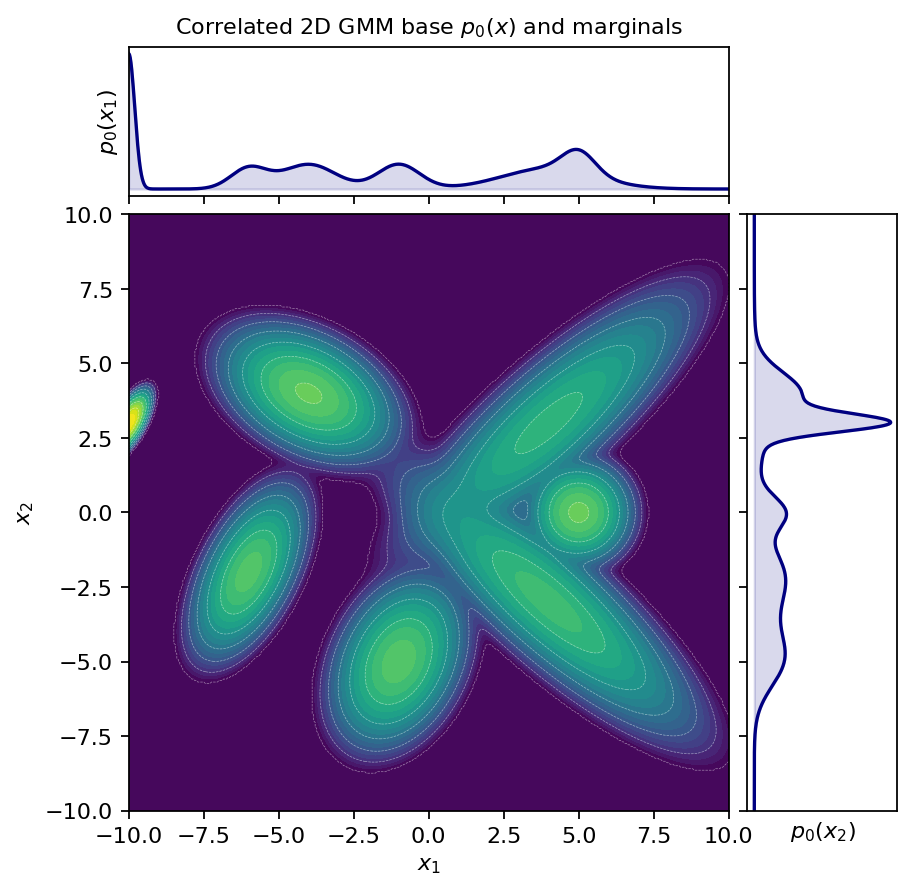}
\caption{Base distribution $\pz(x)$ of the correlated 2D GMM (log-density, so
all modes are visible) with the analytic $x_1$ and $x_2$ marginals. The two
right-hand modes ($i=5,6$) form the tilted ``X'' of opposite-sign correlation
that gives the rare set a bi-modal conditional.}
\label{fig:corr_gmm_base}
\end{figure}

This appendix specifies the synthetic benchmark used in our 2D experiments and
collects the closed-form quantities it admits: the base density, the
normalisation constant $\Zsoft$ of the soft tilt, the exact exceedance
probability $\pE$, and the exact Doob guidance $\nabla_x \rstar=\nabla_x\log G_t$ \cite{dai1991stochastic, denker2024deft}. All of these are available in closed form precisely because the GMM
prior is conjugate to both the Gaussian forward kernel and the Gaussian-CDF
reward, which lets us isolate the behaviour of the estimator from score- and
reward-approximation error \cite{10.7551/mitpress/3206.001.0001}.

\paragraph{Base distribution.}
The data distribution is a $M=7$-component Gaussian mixture in $\R^2$,
\begin{equation}
  \pz(x)\;=\;\sum_{i=1}^{M}w_i\,\N(x;\,\mu_i,\Sigma_i),
  \qquad \sum_i w_i = 1,
\end{equation}
where $w_i$ ($i=1,\dots,M$) denote the mixture weights.
The parameters are given in
Table~\ref{tab:corr_gmm}. The four ``bulk'' modes carry
most of the mass near the origin; the three right-hand modes are pushed out
towards large $x_1$ and given strong off-diagonal covariance ($\rho\approx\pm0.85$)
of opposite sign. This is by design: it makes the tail conditional
$\pz(x_2\mid x_1>\tau)$ \emph{bi-modal} and markedly non-Gaussian, so that hitting
the rare set is not equivalent to matching a single Gaussian tail and the proposal must reproduce a genuine multi-modal conditional.

\begin{table}[ht]
\centering
\caption{Parameters of the correlated 2D GMM. The right-hand modes ($i=5,6,7$)
carry the rare-event tail; modes $5$ and $6$ have correlation $\rho\approx+0.85$
and $-0.85$ respectively.}
\label{tab:corr_gmm}
\small
\begin{tabular}{c c c c}
\toprule
$i$ & $w_i$ & $\mu_i$ & $\Sigma_i$ \\
\midrule
1 & $0.10$ & $(-6,-2)$ & $\big[\begin{smallmatrix}0.36 & 0.30\\ 0.30 & 0.81\end{smallmatrix}\big]$ \\[2pt]
2 & $0.18$ & $(-4,\ 4)$ & $\big[\begin{smallmatrix}0.81 & -0.25\\ -0.25 & 0.49\end{smallmatrix}\big]$ \\[2pt]
3 & $0.14$ & $(-1,-5)$ & $\big[\begin{smallmatrix}0.49 & 0.20\\ 0.20 & 0.81\end{smallmatrix}\big]$ \\[2pt]
4 & $0.22$ & $(-10,\ 3)$ & $\big[\begin{smallmatrix}0.04 & 0.04\\ 0.04 & 0.09\end{smallmatrix}\big]$ \\[2pt]
5 & $0.10$ & $(\ 4,\ 3)$ & $\big[\begin{smallmatrix}2.40 & 1.85\\ 1.85 & 2.00\end{smallmatrix}\big]$ \\[2pt]
6 & $0.16$ & $(\ 4,-3)$ & $\big[\begin{smallmatrix}2.40 & -1.85\\ -1.85 & 2.00\end{smallmatrix}\big]$ \\[2pt]
7 & $0.10$ & $(\ 5,\ 0)$ & $\big[\begin{smallmatrix}0.30 & 0.00\\ 0.00 & 0.30\end{smallmatrix}\big]$ \\
\bottomrule
\end{tabular}
\end{table}

The mixture weights $w_i$ are given in Table~\ref{tab:corr_gmm} and sum to one. The observable is the linear read-out
$\phi(x)=a^\top x$ with $a=(1,0)^\top$, i.e.\ the event of interest is the threshold exceedance $\{x: x_1\ge\tau\}$.

\paragraph{Normalisation constant $\Zsoft$.}
The soft tilt $\pisoft(x)\propto \pz(x)\,e^{\rsoft(x)}$ with the Gaussian-CDF reward
$e^{\rsoft(x)}=\Phi\!\big((a^\top x-\tau)/\delta\big)$ has normaliser
$\Zsoft=\E_{\pz}[e^{\rsoft(X)}]$. Under each Gaussian component the read-out
$y=a^\top x\sim\N(m_i,\,v_i)$ with $m_i=a^\top\mu_i$ and
$v_i=a^\top\Sigma_i a$, and a standard Gaussian--Gaussian-CDF convolution (see Section~3.9 of \cite{10.7551/mitpress/3206.001.0001}) gives the closed form,
\begin{equation}
  \Zsoft
  \;=\;\sum_{i=1}^{M}w_i\;\E_{\N(m_i,v_i)}\!\Big[\Phi\big(\tfrac{y-\tau}{\delta}\big)\Big]
  \;=\;\sum_{i=1}^{M}w_i\,\Phi\!\left(\frac{m_i-\tau}{\sqrt{\delta^2+v_i}}\right).
  \label{eq:corr_gmm_Z}
\end{equation}
As $\delta\downarrow 0$ the reward sharpens to the hard indicator and
$\Zsoft\to \pE$.

\paragraph{Exact exceedance probability.}
Because the read-out $a^\top X$ is itself a 1D GMM with component means $m_i$ and
variances $v_i$, the exceedance probability is available in closed form,
\begin{equation}
  \pE\;=\;\pev{a^\top X\ge\tau}
  \;=\;\sum_{i=1}^{M}w_i\Big[1-\Phi\!\big(\tfrac{\tau-m_i}{\sqrt{v_i}}\big)\Big]
  \;=\;\sum_{i=1}^{M}w_i\,\Phi\!\left(\frac{m_i-\tau}{\sqrt{v_i}}\right),
  \label{eq:corr_gmm_pE}
\end{equation}
which we use as ground truth.
\paragraph{Tilted read-out marginal.}
While the full tilted density $\pisoft(x)\propto \pz(x)\,\Phi\big((a^\top x-\tau)/\delta\big)$
is not itself a Gaussian mixture, the marginal of the read-out $y=a^\top x$ is available
in closed form,
\begin{equation}
  \pisoft^{\mathrm{read}}(y)\;=\;\frac{1}{\Zsoft}\sum_{i=1}^{M}w_i\,\N(y;m_i,v_i)\,
  \Phi\!\Big(\frac{y-\tau}{\delta}\Big),\qquad y=a^\top x,
  \label{eq:corr_gmm_qtilt}
\end{equation}
with $\Zsoft$ from Eq.~\eqref{eq:corr_gmm_Z} and $(m_i,v_i)$ as above.
\paragraph{Forward process.}
We use the VP-SDE schedule of Appendix~\ref{app:diffusion_details} with
$\beta_{\min}=0.1$, $\beta_{\max}=20$, so that
$B(t)=\int_0^t\beta(s)\,ds=t\beta_{\min}+\tfrac12 t^2(\beta_{\max}-\beta_{\min})$,
$\alpha_t=e^{-B(t)/2}$ and $\sbar_t^2=1-e^{-B(t)}$. Because the forward kernel is
Gaussian, the noised marginal $\pt$ is again a GMM whose $i$-th component has
mean $\alpha_t\mu_i$ and covariance \cite{10.7551/mitpress/3206.001.0001},
\begin{equation}
S_{t,i}\;=\;\alpha_t^2\,\Sigma_i + \sbar_t^2\,I .
\end{equation}

\paragraph{Exact score.}
The exact score of the noised marginal is the weighted sum of the
per-component scores. For any mixture, 
\begin{equation}
     \nabla_x\log \pt(x)
     \;=\; \frac{\sum w_i \nabla_x \N_i}{\sum w_j \N_j}
= \frac{\sum w_i \N_i \big[-S_{t,i}^{-1}(x-\alpha_t\mu_i)\big]}{\sum w_j \N_j}\,.
\end{equation}
We use this exact score wherever analytical score is reported.
\paragraph{Exact look-ahead guidance.}
The exact guidance is the gradient of the Doob reward of Eq.~\eqref{eq:doob_objects} \cite{dai1991stochastic, denker2024deft}, which for the soft reward reads $\rstar(x) = \log G_t(x)$ with
$G_t(x)= \E_{p(x_0 \mid x_t)} [\exp(\rsoft(x_0))] $, i.e.\ $\nabla_x \rstar=\nabla_x\log G_t(x)$.
Conjugacy makes this closed-form.  We can use the fact that $p(x_0 | x_t, i) \propto p(x_t | x_0)p(x_0|i)$, hence the posterior under each
component $i$ is a product of two Gaussians - another Gaussian, $\N(m_{t,i},C_{t,i})$ with,
\begin{equation}
C_{t,i}=\Big(\Sigma_i^{-1}+\tfrac{\alpha_t^2}{\sbar_t^2}I\Big)^{-1},
  \qquad
m_{t,i}=C_{t,i}\Big(\Sigma_i^{-1}\mu_i+\tfrac{\alpha_t}{\sbar_t^2}x\Big).
\end{equation}
To evaluate the expected reward, we consider a linear observable $\phi(x_0) = a^\top x_0$. Under the $i$-th component posterior, this projection is a 1D Gaussian, $a^\top X_0\sim\N(a^\top m_{t,i},\,s_{t,i}^2)$
with $s_{t,i}^2=a^\top C_{t,i}\,a$, so the per-mode look-ahead reward integrates to another Gaussian CDF with,
\begin{equation}
  \E_{p(x_0 | x_t)} [\Phi(.)] =\Phi(\eta_{t,i}),
  \qquad
  \eta_{t,i}=\frac{a^\top m_{t,i}-\tau}{\sqrt{\delta^2+s_{t,i}^2}}.
  \label{eq:exact_per_component_guidance}
\end{equation}

We can then write the global look-ahead potential $G_t(x)$ as the expected reward marginalised over all $M$. Using Eq.\eqref{eq:exact_per_component_guidance}, we can write $ G_t(x) = \sum_{i=1}^{M}w_{t,i}\Phi(\eta_{t,i})$, giving the exact guidance as follows.
\begin{align}
    \nabla_x \rstar(x) &= \nabla_x \log G_t(x) \nonumber \\
    &= \frac{1}{G_t(x)} \nabla_x \left( \sum_{i=1}^M w_{t,i}(x) \Phi(\eta_{t,i}) \right) \nonumber \\
    &= \frac{1}{G_t(x)} \sum_{i=1}^M \left[ \Phi(\eta_{t,i}) \nabla_x w_{t,i}(x) + w_{t,i}(x) \nabla_x \Phi(\eta_{t,i}) \right]
\end{align}
We can rewrite the above equation as, 

\begin{equation}
    \nabla_x \rstar(x) = \sum_{i=1}^M \widetilde{w}_i(x) \left[ \frac{\nabla_x \Phi(\eta_{t,i})}{\Phi(\eta_{t,i})} + \frac{\nabla_x w_{t,i}(x)}{w_{t,i}(x)} \right] \qquad \widetilde{w}_i(x) = \frac{w_{t,i}(x) \Phi(\eta_{t,i})}{\sum_{j=1}^M w_{t,j}(x) \Phi(\eta_{t,j})}
\label{eq:corr_gmm_hstar}
\end{equation}
Here $\widetilde{w}_i$ represents the reward weighted (posterior) responsibility of the GMM - i.e. it is the responsibility re-weighted by each mode's chance of reaching the rare event set \cite{bishop2006pattern}. A mode $i$ is up-weighted when $x_t$ is consistent with it (i.e. $w_{t,i}$ large) and when it is well placed to cross the threshold ($\Phi(\eta_{t, i})$ large). The first term inside the bracket is the per-mode push towards the rare set;
second is the routing correction that re-weights mass between modes as the reward favours different components.  Evaluating the two terms in $\nabla_x \rstar$, the first term is the reward push, 
\begin{equation}
    \frac{\nabla_x \Phi(\eta_{t,i})}{\Phi(\eta_{t,i})} = \frac{\varphi(\eta_{t,i})}{\Phi(\eta_{t,i})} g_{t,i} = \psi(\eta_{t,i}) g_{t,i},
\end{equation}
where $g_{t,i} = \nabla_x \eta_{t,i} = \frac{\alpha_t}{\sbar_t^2} \frac{C_{t,i} a}{\sqrt{\delta^2 + s_{t,i}^2}}$ and $\psi(\cdot)=\varphi(\cdot)/\Phi(\cdot)$ is the Inverse-Mills ratio. 

The second term of Eq.~\eqref{eq:corr_gmm_hstar} represents the standard GMM responsibility at a noised time $t$, given by, 
\[w_{t,i}(x) = \frac{w_i \N(x; \alpha_t \mu_i, S_{t,i})}{\pt(x)}.\] Evaluating this yields,
\begin{align}
    \nabla_x \log w_{t,i}(x) &= \nabla_x \log \N(x; \alpha_t \mu_i, S_{t,i}) - \nabla_x \log \pt(x).
\end{align}
Putting this together, we get the analytical guidance for the GMM,
\begin{equation}
    \nabla_x \rstar(x) = \sum_{i=1}^M \widetilde{w}_i(x) \Big[ \psi(\eta_{t,i}) g_{t,i} + \nabla_x \log \N(x; \alpha_t \mu_i, S_{t,i}) - \nabla_x \log \pt(x) \Big].
\end{equation}

We refer readers to \cite{3692070.3694254} for a rigorous exposition of the effect of guidance on GMMs.

\section{2D GMM experiments}
\label{app:more_2dgmm}

\subsection{Experimental details}
\label{app:model_details_2dgmm}
\label{app:tempering_ladder}

\paragraph{The learned score.}
The learned score for the correlated 7-component GMM of Table~\ref{tab:corr_gmm}, is a velocity prediction MLP trained under the linear VP schedule \cite{song2021scorebasedgenerativemodelingstochastic}. The network takes
$(x,\mathrm{emb}(t))\in\R^{2+128}$, where $t$ is embedded using a sinusoidal time  embedding through 6 layers (hidden width 512, SiLU \cite{elfwing2017sigmoidweightedlinearunitsneural};
$1.1$M parameters) to a two-dimensional output. Targets are
$v = \alpha_t\,\varepsilon - \sbar_t\,x_0$ with
$x_t=\alpha_t x_0+\sbar_t\varepsilon$; the score used by the sampler is
recovered as $\sth(x,t) = -\hat\varepsilon/\sbar_t$ with
$\hat\varepsilon=\sbar_t x + \alpha_t v_\theta(x,t)$. The loss is the
per-sample $v$-MSE under min-SNR-$\gamma$ weighting with $\gamma = 5$ and the weights given by $w(t)=\min(\mathrm{SNR}(t),\gamma)/(\mathrm{SNR}(t){+}1)$ of \cite{hang2023minsnr},
with $t\sim\mathcal{U}[0.01,0.99]$. Training draws fresh exact GMM samples
every step: 40{,}000 steps at batch 4096 with AdamW \cite{kingma2014adam, loshchilov2017decoupled}
(learning rate $2\times10^{-4}$, weight decay $10^{-4}$), 1{,}000-step linear warm-up followed by cosine decay, gradient clipping at $1.0$, and an
exponential moving average of the weights (decay $0.999$); the EMA weights are used for evaluation.

\textbf{The tempering ladder.} Our guidance ladder used through the experiment is
$\lambda(t) = 1 - (t/t_{\rm on})^2$ for $t < t_{\rm on} = 0.9$ in the case of rejuvenation and $\lambda_t \equiv 1$ without it, see App.~\ref{app:gmm_schedule_ablations}. We chose this simple form of tempering, inspired by polynomial tempered likelihood \cite{friel2008marginal}.

\textbf{The benchmark.}
The mixture parameters and every closed form we compare against: the base density, the exact score, the soft-tilt normaliser $\Zsoft$ of Eq.~\eqref{eq:corr_gmm_Z}, the exceedance probability $\pE$ of Eq.~\eqref{eq:corr_gmm_pE}, the tilted target $\pisoft$ of Eq.~\eqref{eq:corr_gmm_qtilt} and the exact Doob guidance $\nabla_x \rstar$ of Eq.~\eqref{eq:corr_gmm_hstar} are defined in Appendix~\ref{app:corr_gmm}. The right-hand modes carry strong off-diagonal covariance of opposite sign, so the tail conditional $\pz(x_2\mid x_1{>}\tau)$ is genuinely bi-modal: hitting the rare set requires reproducing a multi-modal conditional. The learned-score arm uses the $v$-prediction MLP described above.

\paragraph{Sampling parameters} Unless stated otherwise every run below uses the deployed configuration of Section~\ref{sec:exp_2dgmm}: $N=1000$ particles, $K=1000$ SEEDS-1 steps on a uniform half-log SNR noise schedule with $\delta=0.01$, the quadratic tempering ladder $\lambda(t)$ above for the rejuvenation arm and $\lambda\equiv1$ otherwise, adaptive resampling at ESS$/N<0.5$ with three MADM rejuvenation sweeps per resampling (Simpson's (1/3) rule with Eq.~\eqref{eq:madm_simpson}), and $24$ seeds. 
The MH step is $\rho = c(t)\,\sbar_t^2$ with $c(t)$ fixed once from a pilot sweep targeting an acceptance ratio of $\approx0.72$ at each noise level, as prescribed in \cite{lam2026metropolis}.

All experiments were done on a single Nvidia A100 GPU with 40 GB of VRAM.

\subsection{Sources of Bias}
We begin by discussing the two main contributing factors that bias our estimate against the true $\pE$, followed by additional results. Neither bias is an error of the estimator, and the path-weighted estimator remains unbiased exactly for the law we sample from. The two primary sources are the learning process where the score is approximated poorly (Appendix~\ref{app:tail_bias}) and when the discretised law we sample from differs from the continuous law (Appendix~\ref{app:em_bias}). We separate and isolate each to study their contributions to the final estimate.
\subsubsection{Tail bias of the learned score}
\label{app:tail_bias}

We estimate the tail bias of the learned score by comparing the model's output at a specific time and location $(x_t, t)$ against the analytical score. Write $\Delta s(x,t) = \sth(x,t) - s^\star(x,t)$ for the pointwise gap
between the learned and exact scores. The induced log-density bias
$b(x_0) := \log p_{\rm model}(x_0) - \log p^\star(x_0)$ accumulates along the
probability-flow ODE trajectory $\{x_t\}_{t\in[0,1]}$ that carries $x_0$ back to
the prior under the \emph{learned} velocity field, as
\begin{equation}
\label{eq:bias_path}
  b(x_0) \;=\; -\!\int_0^1 \tfrac12\beta(t)\,
  \Big[\nabla\!\cdot\!\Delta s \;+\; s^\star\!\cdot\!\Delta s\Big](x_t)\;\dd t ,
\end{equation}
from which the tail ratio follows by reweighting the exact conditional,
\begin{equation}
\label{eq:bias_tail}
  \frac{P_{\rm model}(\phi(X)\ge\tau)}{P^\star(\phi(X)\ge\tau)}
  \;=\; \E_{p^\star(\,\cdot\,\mid\,\phi(X)\ge\tau)}\!\left[e^{\,b(x_0)}\right].
\end{equation}

\begin{table}[!ht]
\centering
\small
\caption{Tail bias of the learned 2D-GMM score. $p_{\rm model}$ (MC) is obtained from a large MC reference pool using unguided draws. Path integral measures the ratio obtained by integrating the pointwise score error along the probability-flow ODE (Eq.~\eqref{eq:bias_path}) and reweighing the exact conditional. The two induce different laws but agree to within $7\%$ across three decades of rarity. $\pdata[E]$ is the analytical probability.}
\label{tab:model_bias}
\begin{tabular}{rcccccc}
\toprule
$\tau$ & $\pdata[E]$ & $p_{\rm model}$ (MC) & $p_{\rm model}/\pdata[E]$ & rel.\ s.e. & path integral & ratio \\
\midrule
7  & $6.88\!\times\!10^{-3}$ & $7.13\!\times\!10^{-3}$ & 1.037 & 0.08\% & 1.075 & 1.037 \\
8  & $1.28\!\times\!10^{-3}$ & $1.38\!\times\!10^{-3}$ & 1.081 & 0.18\% & 1.156 & 1.069 \\
9  & $1.62\!\times\!10^{-4}$ & $1.87\!\times\!10^{-4}$ & 1.153 & 0.50\% & 1.236 & 1.072 \\
10 & $1.40\!\times\!10^{-5}$ & $1.72\!\times\!10^{-5}$ & 1.229 & 1.64\% & 1.267 & 1.031 \\
\bottomrule
\end{tabular}
\end{table}

In practice, we draw $2.16\times10^{8}$ samples from the model unconditionally and use it as our reference pool. We find that with both methods, the tail bias widens with rarity, and the score-estimate degrades monotonically, see Table~\ref{tab:model_bias}. All quoted comparisons for the trained model are against the model's own rare event probability $\pE$, obtained via Monte Carlo estimate.

\subsubsection{The discretised law: grid and integrator}
\label{app:em_bias}
Alongside the bias due to the score estimate, which in the general case cannot be estimated, our discretisation itself induces a distinct, reducible form of bias. Let $\Pm$ be the continuous path measure defined by the diffusion process, and ${\Pm}_k$ bethe discretised law that we simulate. Our estimator is unbiased for the discrete law that we compute weights for under any guidance \cite{del2006sequential}. Hence any gap that remains is a property of the base chain and the integrator alone. This section first measures this gap for two first order integrators and whether the choice of integrator reduces it at a fixed $K$.

\paragraph{Setup.} Exact Doob guidance (i.e. $\nabla_{x} r_t^*(x)$), with resampling turned off, so neither guidance error nor adaptive resampling contributes and the residual is resolved above seed noise. We compare the Euler-Maruyama (EM) integrator against the first order exponential SDE solver of \cite{gonzalez2023seeds} which we denote as SEEDS-1.  Both are explicit one-step schemes of first order. The only difference is that SEEDS-1 treats the integration of the reverse SDE as a semi-linear integral, separating the linear and non-linear parts of the score, and integrating the linear and the noise term exactly, whereas EM approximates it to first order, see \cite{gonzalez2023seeds, milstein2004stochastic, pfarr2026stochsticsamplinggenerativediffusion} for more details. Alongside the integrator, the placement of the grid itself is a design choice: we run the sweep on a grid uniform in $t$ and on a grid uniform in the half-log Signal-to-Noise ratio $\ell_t = \log(\alpha_t/\sbar_t)$ of App.~\ref{app:gamma_derieved}, the integrator's own time variable. Table~\ref{tab:em_vs_exp_bias} sweeps the step count $K$ under both placements and compares the signed bias in the probability estimate with the ground truth in $\tau = 10$.

\begin{table}[ht]
\centering
\caption{Signed discretisation bias versus step count $K$ for the two integrators, under
exact Doob guidance, at different grids, uniform in $t$ (top block) and uniform in the half-log SNR $\ell_t = \log(\alpha_t/\sbar_t)$ (bottom block). Positive values are overestimates. Paired columns are EM/SEEDS-1 throughout. $\hat{k}$ is the Pareto Smoothed Importance Sampling shape estimate \cite{vehtari2024pareto} on the weights, and $\ESS_E$ is the ESS of the final particles, measured after thresholding to $\tau \ge 10$.}
\label{tab:em_vs_exp_bias}
\small
\setlength{\tabcolsep}{3pt}%
\begin{tabular}{r r r c c c c}
\toprule
$K$ & EM bias & SEEDS-1 bias & $|\mathrm{EM}|/|\mathrm{SEEDS}|$ & CV $\downarrow$ & $\hat{k} \downarrow$ & $\ESS_E \uparrow$ \\
\midrule
\multicolumn{7}{l}{Uniform in $t$} \\
\midrule
10 & +64.02\% $\pm$ 11.31\% & -15.98\% $\pm$ 1.88\% & 4.01 & 0.338/0.109 & 0.85/0.46 & 0.032/0.102 \\
15 & +48.61\% $\pm$ 8.39\% & -16.23\% $\pm$ 1.28\% & 3.00 & 0.277/0.075 & 0.70/0.45 & 0.063/0.189 \\
20 & +54.90\% $\pm$ 16.50\% & -15.13\% $\pm$ 0.97\% & 3.63 & 0.522/0.056 & 0.68/0.43 & 0.099/0.248 \\
30 & +23.41\% $\pm$ 1.65\% & -13.62\% $\pm$ 0.75\% & 1.72 & 0.065/0.043 & 0.56/0.40 & 0.176/0.375 \\
50 & +16.75\% $\pm$ 1.38\% & -8.36\% $\pm$ 0.63\% & 2.00 & 0.058/0.034 & 0.55/0.45 & 0.315/0.465 \\
100 & +6.82\% $\pm$ 0.69\% & -4.99\% $\pm$ 0.57\% & 1.37 & 0.032/0.029 & 0.40/0.45 & 0.600/0.634 \\
200 & +3.72\% $\pm$ 0.30\% & -2.71\% $\pm$ 0.26\% & 1.38 & 0.014/0.013 & 0.40/0.40 & 0.771/0.802 \\
500 & +1.30\% $\pm$ 0.19\% & -1.27\% $\pm$ 0.16\% & 1.02 & 0.009/0.008 & 0.39/0.40 & 0.901/0.906 \\
1000 & +0.75\% $\pm$ 0.19\% & -0.51\% $\pm$ 0.17\% & 1.46 & 0.009/0.009 & 0.36/0.39 & 0.943/0.945 \\
\midrule
\multicolumn{7}{l}{Uniform in $\ell_t=\log(\alpha_t/\sbar_t)$}\\%
\midrule
10 & -34.55\% $\pm$ 2.77\% & +256.82\% $\pm$ 39.09\% & 0.13 & 0.207/0.537 & 0.75/0.80 & 0.031/0.021 \\
15 & -24.93\% $\pm$ 3.12\% & +28.68\% $\pm$ 2.42\% & 0.87 & 0.203/0.092 & 0.58/0.48 & 0.083/0.115 \\
20 & -20.68\% $\pm$ 1.76\% & +12.66\% $\pm$ 1.63\% & 1.63 & 0.109/0.071 & 0.46/0.40 & 0.169/0.201 \\
30 & -13.78\% $\pm$ 0.88\% & +2.13\% $\pm$ 1.01\% & 6.46 & 0.050/0.048 & 0.38/0.37 & 0.283/0.313 \\
50 & -8.98\% $\pm$ 0.59\% & -2.19\% $\pm$ 0.70\% & 4.09 & 0.032/0.035 & 0.34/0.34 & 0.474/0.506 \\
100 & -4.63\% $\pm$ 0.42\% & -2.59\% $\pm$ 0.42\% & 1.79 & 0.022/0.021 & 0.24/0.26 & 0.713/0.724 \\
200 & -2.54\% $\pm$ 0.36\% & -1.89\% $\pm$ 0.34\% & 1.35 & 0.018/0.017 & 0.17/0.18 & 0.841/0.849 \\
500 & -1.29\% $\pm$ 0.17\% & -1.02\% $\pm$ 0.17\% & 1.27 & 0.008/0.009 & 0.19/0.19 & 0.928/0.930 \\
1000 & -0.27\% $\pm$ 0.13\% & -0.16\% $\pm$ 0.13\% & 1.66 & 0.007/0.006 & 0.22/0.23 & 0.956/0.957 \\
\bottomrule
\end{tabular}
\end{table}

\paragraph{The $1/K$ law.} 
\label{app:taleytubero_expansion} 
Both integrators are one-step schemes of weak order one \cite{milstein2004stochastic}, for which the weak-error admits the expansion $c_1/K + O(1/K^2)$ with $c_1$ independent of the step size \cite{talay1990expansion, milstein2004stochastic}. While this holds for smooth test functions, it does not automatically hold for the indicator $\inds{E}$. We therefore fit the form using ordinary least-squares instead of invoking it directly as a theorem. Fitting the coefficient $c_1$, weighted by the per-K standard errors $s_k$,
\begin{equation}
\label{eq:c1_fit}
  \hat c_1 = \frac{\sum_k b_k K_k^{-1} s_k^{-2}}{\sum_k K_k^{-2} s_k^{-2}},
\end{equation}
restricted to $K\ge30$, since $K=10$-$20$ sits outside the asymptotic regime. On the uniform-$t$ grid the fits are $\hat c_1 = +7.27$ for Euler-Maruyama against $-4.41$ for SEEDS-1, with largest residuals of $1.6$ and $2.4$ standard errors respectively, so the $1/K$ law holds to within roughly two standard errors over the fitted range. The signs record that the two integrators approach the continuous limit from opposite sides on this grid: EM overestimates and SEEDS-1 underestimates at every $K$, and the $|\mathrm{EM}|/|\mathrm{SEEDS}|$ column of Table~\ref{tab:em_vs_exp_bias} is the per-$K$ magnitude comparison. The fitted coefficients track the measurements closely: $\hat c_1 = 7.27$ predicts $+3.6\%$ at $K=200$ against a measured $+3.72\%$, and $+14.6\%$ at $K=50$ against $+16.8\%$. On the uniform-$\ell$ grid the EM arm fits similarly ($\hat c_1 = -4.45$, largest residual $2.4$ standard errors), but the SEEDS-1 arm \emph{changes sign} between $K=30$ and $K=50$: its leading coefficient is small enough that higher-order terms are not negligible over the measured range. 

\paragraph{Grid placement.} At coarse grids the two placements order the integrators differently. In uniform in $t$, SEEDS-1 dominates EM in every $K$: the magnitude ratio drops from $4.0$ in $K=10$ towards parity by $K\ge500$. For the uniform in the $\ell$ grid, the order inverts in the coarsest grids $K = 10$ and $K = 15$, where we reproduce the result of \cite{gonzalez2023seeds} where the performance of the integrator is poor at very small discretisation. We hypothesise that this is because, a uniform $\Delta \ell$ starves the ``middle'' of the denoising process, which principled methods for noise-scheduling such as \cite{williams2024score} place a larger number of discretisation on, see Fig. 4 of \cite{williams2024score}, which adversely affects the discretised law that we sample. Given that the schedule intimately interplays with Gaussian transition kernel approximation, we leave studying the effects it has on the weight computation for future work.  

From $K\ge20$ SEEDS-1 yields a smaller discretisation bias, and the two integrators converge to each other by $K = 200$. We find that in our setting, SEEDS-1 reduces the discretisation bias at lower $K$ on either grid placement away from the coarsest $\ell$-grids, and both integrators are comparable at higher $K$'s. We deploy SEEDS-1 at $K = 1000$ on the uniform-$\ell$ grid, where both integrators are within $0.2\%$ of the continuous limit (Table~\ref{tab:em_vs_exp_bias}, last row). The exponential integrator is also used in the larger climate experiments.

\paragraph{Both channels improve together.} In this experiment, the integrator does not only control the bias but also affects the variance. Across Table~\ref{tab:em_vs_exp_bias} the variance diagnostics move with it: on the uniform-$t$ EM arm the CV falls from $0.338$ to $0.009$, $\ESS_E$ climbs from $0.032$ to $0.943$, with the PSIS $\hat k$ dropping below $0.5$ by $K=100$. The grid placement moves the same diagnostics: at matched $K$ the $\ell$-grid improves $\ESS_E$ and $\hat k$ for both integrators (e.g.\ at $K=100$, EM's $\ESS_E$ is $0.713$ uniform in $\ell$ against $0.600$ uniform in $t$, with $\hat k$ of $0.24$ against $0.40$), the coarsest $\ell$-grids under SEEDS-1 excepted. These are not two separate effects but one mechanism with two consequences. A finer grid puts the discrete chain closer to the continuous dynamics, which simultaneously shrinks the gap $\Pm_K - \Pm$ that the bias measures. As we simulate using the exact Doob's guidance, this moves the discretised proposal $\Qm^K$ close to the variance minimising Doob's proposal which helps all other metrics. In a general setting with approximate guidance the dominant source of error is not the discretisation bias but instead the approximation error of Doob's h-transform, e.g. using DPS.

\begin{figure}[!ht]
    \centering
    \includegraphics[width=1.0\linewidth]{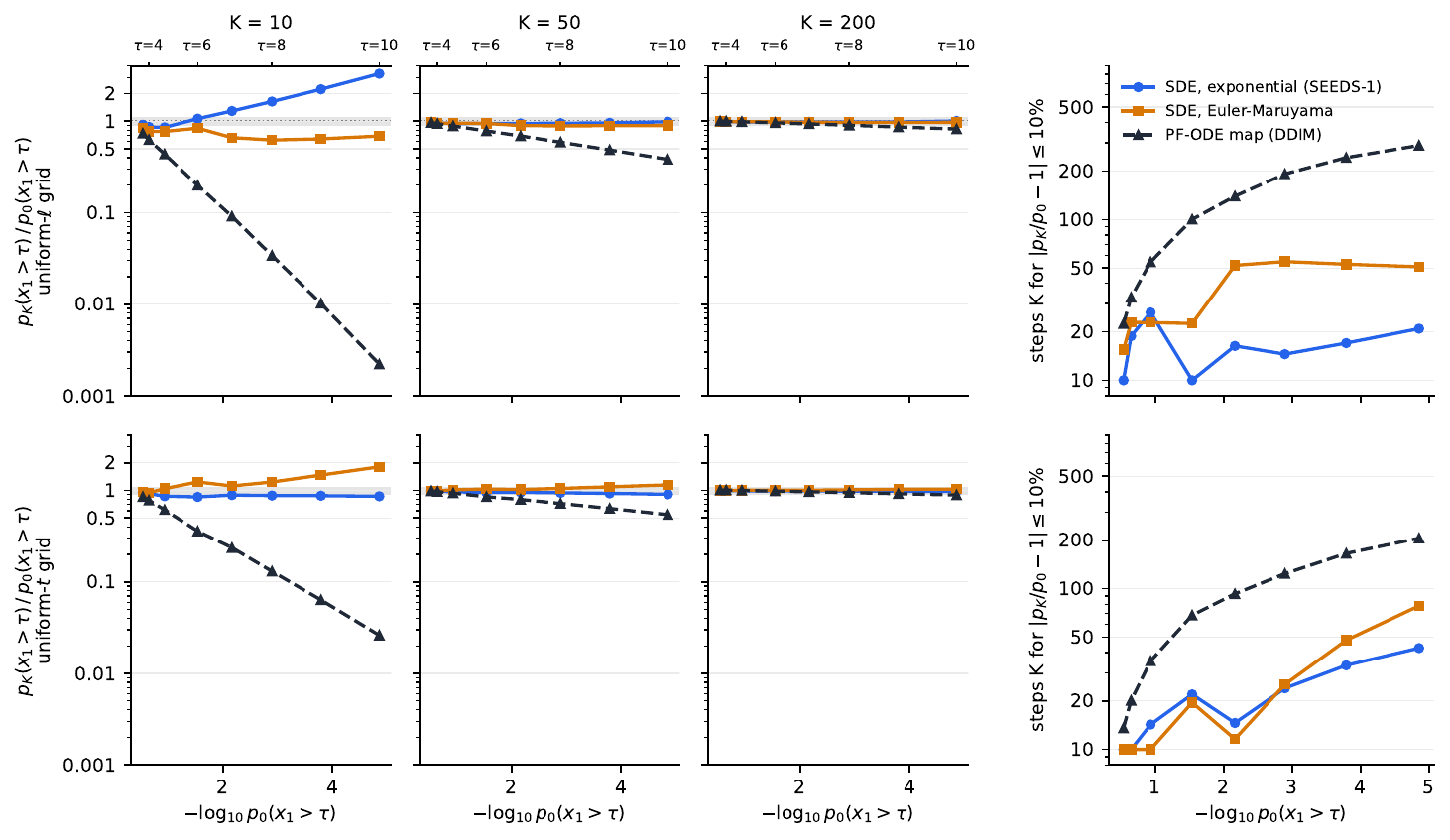}
   \caption{Discretisation bias for the three integrators, Euler-Maruyama, SEEDS-1 and DDIM. We unconditionally draw 1e8 samples from the base model at different discretisation $K$ and evaluate the bias at different rarities. The figure on the right shows the number of discretisation required to get a $<10\%$ bias for different rarities, showing that the SDE requires less steps than the ODE integrator. Top row uses the uniform log-SNR $\Delta \ell$ schedule and the bottom is uniform in time $\Delta t$. }
\label{fig:weak_order_integration_analytical}
\end{figure}

\begin{figure}[!ht]
    \centering
    \includegraphics[width=1.0\linewidth]{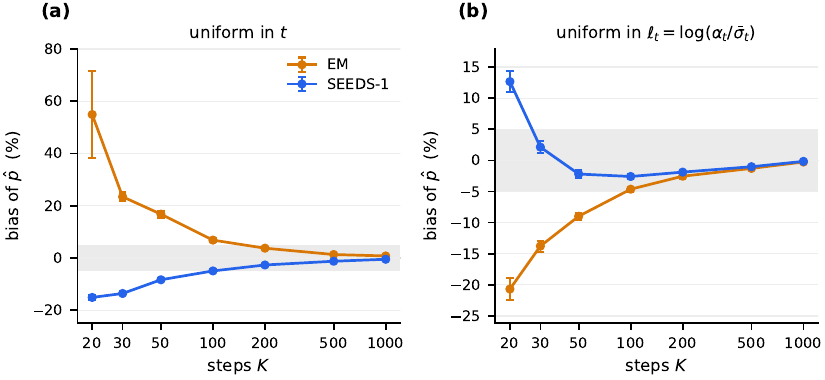}
   \caption{Measured bias versus $K$ with 1000 particles, analytical scores and Doob's transition kernels. See App~\ref{app:em_bias}. We show the bias of our estimate falls with a $O(1/K)$ leading order term. We compare both the EM integrator and the SEEDs-1 integrator in the noise prediction form. We compare two different noise schedules: uniform in time and uniform in log-SNR $\ell$. Error-bars represent standard error of the mean. Experiment is conducted the $\tau = 10$ threshold exceed problem. See Table~\ref{tab:em_vs_exp_bias}.}
\label{fig:doobs_bias.}
\end{figure}

\begin{figure}[!ht]
    \centering
    \includegraphics[width=0.75\linewidth]{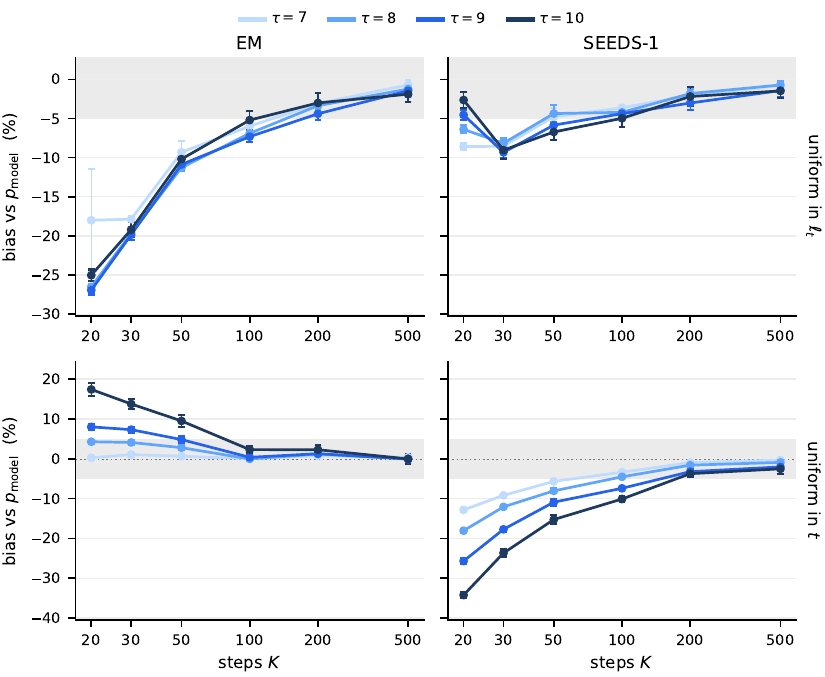}
   \caption{Measured bias versus $K$ with 10,000 particles and the deployed guidance and no rejuvenation for the learned score. We compare the discretisation bias at different rarities, we compare against a $10^8$ unconditional MC draw of the exponential integrator at $K$ as our reference. Hence, this measures how quickly the estimates converge to the reference with $K$.}
\label{fig:bias_learned_scores_rarity}
\end{figure}

\subsection{The threshold-exceedance experiment}
\label{app:threshold_extra}
Everything in this subsection belongs to the deployed threshold-exceedance
experiment of Section~\ref{sec:exp_2dgmm}: the conditional clouds behind the
sample-quality columns, the no-resampling arm, the whole-path ESS diagnostics,
and how the estimator scales with the particle budget $N$. We summarise our results in Table.~\ref{tab:step1_recovery}, where we find that our method recovers the correct probability estimates as well as the correct shape statistics. We analyse all individual aspects of these results in the section below. All experiments were done with the same guidance with $K = 1000$ and the SEEDS-1 integrator on the uniform-$\ell$ grid unless stated otherwise. 
\begin{table}[!ht]
\centering
\small
\caption{Guided generation on the GMM with learned and analytical scores. Each arm is scored against the law it samples from, $p^\star$; either $\pdata[E]$ (analytical) or an MC estimate of $p_0[E]$ (learned). Sample quality metrics are evaluated against a reference from the same law, presented as a ratio against a noise-floor. Estimates $\hat p/p^\star$ are biased to the MADM kernel's floor (Fig.~\ref{fig:madm_error_bias}).
We evaluate the two-sample Kolmogorov-Smirnov (KS) test statistic, the Earth Mover's Distance (EMD) and the coefficient of variation $\text{CV}^2 = \Var(\hat{p_E})/p_E^2$.
}
\label{tab:step1_recovery}
\begin{tabular}{llcccccc}
\toprule
$\tau$ & score & $p^\star$ & $\hat p/p^\star$ & CV $\downarrow$ & KS$_{x_1}$ & KS$_{x_2}$ & EMD \\
\midrule
7 & analytical & $6.88\!\times\!10^{-3}$ & $0.980 \pm 0.012$ & 0.060 & 0.88 & 1.33 & 1.52 \\
8 & analytical & $1.28\!\times\!10^{-3}$ & $0.966 \pm 0.012$ & 0.061 & 1.13 & 1.10 & 1.06 \\
9 & analytical & $1.62\!\times\!10^{-4}$ & $0.974 \pm 0.016$ & 0.083 & 1.12 & 1.00 & 0.97 \\
10 & analytical & $1.40\!\times\!10^{-5}$ & $0.954 \pm 0.013$ & 0.069 & 0.94 & 1.05 & 1.46 \\
\addlinespace
7 & learned & $7.13\!\times\!10^{-3}$ & $0.978 \pm 0.012$ & 0.060 & 1.11 & 1.20 & 1.14 \\
8 & learned & $1.38\!\times\!10^{-3}$ & $0.985 \pm 0.013$ & 0.064 & 0.97 & 1.00 & 0.73 \\
9 & learned & $1.87\!\times\!10^{-4}$ & $0.980 \pm 0.018$ & 0.089 & 0.77 & 0.83 & 0.98 \\
10 & learned & $1.72\!\times\!10^{-5}$ & $0.965 \pm 0.016$ & 0.079 & 1.23 & 1.07 & 0.98 \\
\bottomrule
\end{tabular}
\vspace{-10pt}
\end{table}

\subsubsection{Conditional-sample coverage: clouds for learned and analytical}
\label{app:clouds}
We plot the sample clouds ($N= 2048$)  using the deployed configuration of Section~\ref{sec:exp_2dgmm} for both Analytical and Learned Guidance with the same guidance/resampling/rejuvenation parameters. The red samples are drawn from the exact conditional $p(x\mid x_1\ge\tau)$, drawn larger at ($2\times10^{4}$) and the blue particles represent the guided samples. Together with the KS/EMD ratios of Table~\ref{tab:step1_recovery} the plots qualitatively highlight that samples are drawn from the conditional distribution itself. The two clouds remain visually indistinguishable as can be seen in Figure~\ref{fig:step1_clouds}.

\begin{figure}[!htb]
\centering
{\small (a) Analytical score}\\[2pt]
\includegraphics[width=0.73\linewidth]{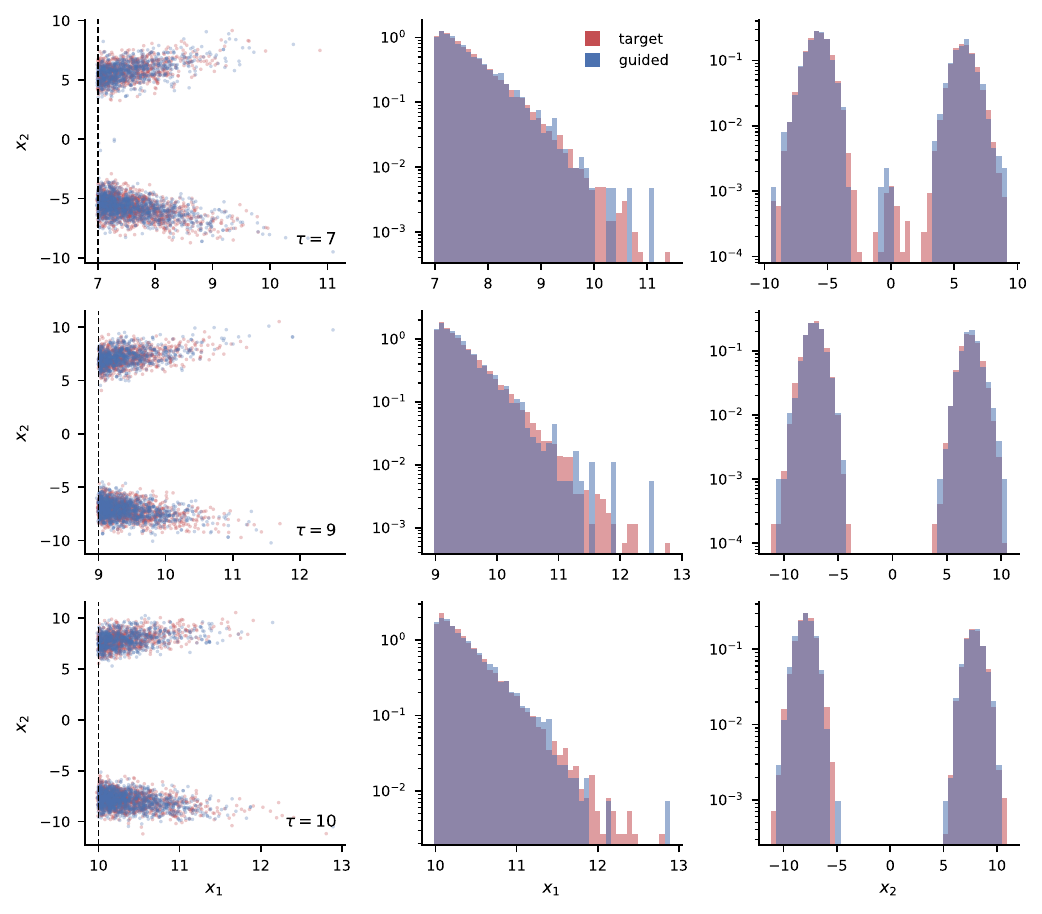}\\[8pt]
{\small (b) Learned score}\\[2pt]
\includegraphics[width=0.73\linewidth]{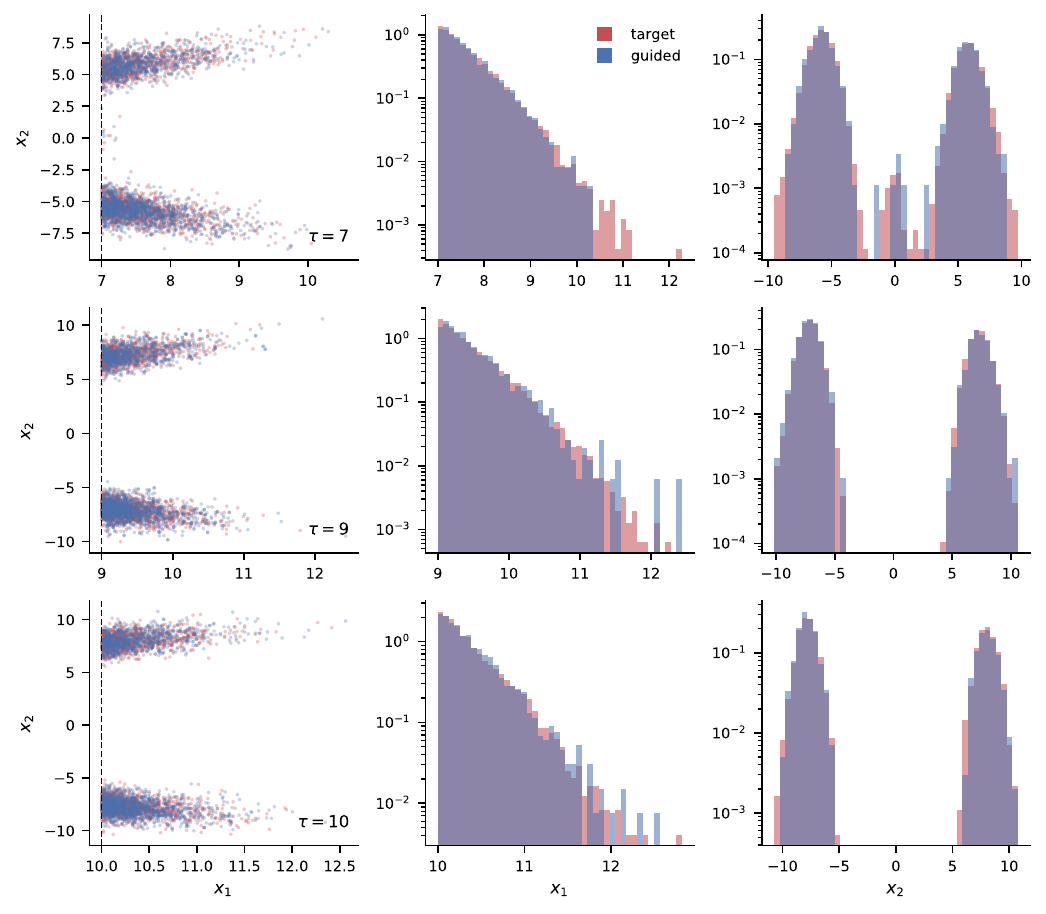}
\caption{Conditional-sample coverage on the 2D GMM for the analytical (a) and learned (b) scores. Blue is the path-weighted guided cloud ($2048$ samples; pooled over $8$ seeds for the learned score); red is a MC draw from the conditional $p(x\mid x_1\ge\tau)$. \emph{Left:} the
joint cloud, with the threshold dashed. \emph{Centre, right:} the $x_1$ and
$x_2$ marginals on a log density scale.
Corresponding quantitative KS/EMD metrics in Table~\ref{tab:step1_recovery}.}
\label{fig:step1_clouds}
\end{figure}
\subsubsection{ESS along the trajectory}
\label{app:ess_trajectory}

Without resampling the accumulated weight is never reset and the terminal $\ESS$ asymptotically converges to $1/(1+\chi^2(\Pms\|\Qmk))$ \cite{agapiou2017importance}. Hence for the sampled path, the $\ESS$ measures the gap between the implemented guidance and the exact Doob's guidance.
Figure~\ref{fig:ess_time} tracks it along the trajectory for both scores and five thresholds. Two features that are worth noting. The trace is flat at $\ESS/N = 1$ until
$t \approx 0.9$, which is the untilted stretch of the denoising time, where the guidance is turned off. After guidance is turned on, the ESS decays monotonically in both learned and analytical arms.
The two panels remain close to indistinguishable, highlighting that the profile is set by the guidance geometry rather than the score itself. 
\begin{figure}[!ht]
\centering
\includegraphics[width=\linewidth]{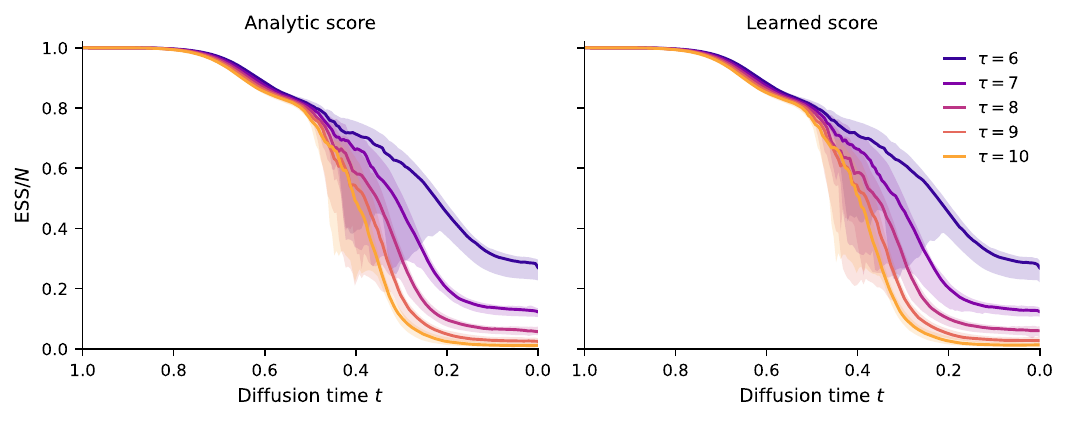}
\caption{Effective sample size of the path weights along diffusion time, without
resampling (analytic score, left; learned score, right), for
$\tau \in \{6,7,8,9,10\}$. Settings as Table~\ref{tab:step1_recovery}. Time runs right to left, from noise
($t=1$) to data ($t=0$).}
\label{fig:ess_time}
\end{figure}

\subsubsection{Selection and rejuvenation}
\label{app:noresample}
\label{app:madm_curl}

\begin{table}[!ht]
\centering\small\setlength{\tabcolsep}{4pt}
\caption{Selection and rejuvenation on the 2D GMM under the \emph{analytical}
score, $24$ seeds, all other settings as in Table~\ref{tab:step1_recovery}. Resampling and rejuvenation are ablated at different threshold exceedance rarities. Weighted shape-metrics are computed against a conditional reference. Rejuvenation helps reduce the CV across all rarities; without it resampling alone can adversely prune lineages.}
\label{tab:2x2}
\begin{tabular}{llcccc}
\toprule
$\tau$ & arm & CV $\downarrow$ & KS$_{x_1}$ & KS$_{x_2}$ & EMD \\
\midrule
\multirow{3}{*}{7} & no resampling & 0.067 & 0.88 & 1.52 & 1.50 \\
 & resampling & 0.062 & 1.05 & 0.99 & 0.83 \\
 & resampling $+$ rejuv.\ ($M=3$) & 0.060 & 0.88 & 1.33 & 1.52 \\
\addlinespace
\multirow{3}{*}{8} & no resampling & 0.083 & 1.14 & 1.35 & 1.17 \\
 & resampling & 0.098 & 1.12 & 0.95 & 0.77 \\
 & resampling $+$ rejuv.\ ($M=3$) & 0.061 & 1.13 & 1.10 & 1.06 \\
\addlinespace
\multirow{3}{*}{9} & no resampling & 0.113 & 0.91 & 1.11 & 1.03 \\
 & resampling & 0.150 & 1.15 & 1.47 & 1.14 \\
 & resampling $+$ rejuv.\ ($M=3$) & 0.083 & 1.12 & 1.00 & 0.97 \\
\addlinespace
\multirow{3}{*}{10} & no resampling & 0.147 & 0.81 & 1.46 & 2.45 \\
 & resampling & 0.230 & 0.91 & 1.46 & 1.98 \\
 & resampling $+$ rejuv.\ ($M=3$) & 0.069 & 0.94 & 1.05 & 1.46 \\
\bottomrule
\end{tabular}
\end{table}

We ablate whether resampling or rejuvenation is helpful for our problem. For
simplicity we provide two tables, separating the effect of each on the analytical and learned scores. Table~\ref{tab:2x2} ablates the no-resampling, resampling and rejuvenation for the analytical score. Table~\ref{tab:cap_thr} then runs the full threshold ladder on the
learned score, adding the arm with no resampling at all and three noise levels where the rejuvenation move is switched off. All experiments are done using the same configuration as mentioned above. Each arm is scored against the law it samples from, $p^\star$. Weighted sample quality statistics are evaluated against a conditional reference drawn from the same law, presented as ratios to their split-half noise floor. All reported uncertainties are computed over 24 seeds.

\begin{table}[!htb]
\centering\small\setlength{\tabcolsep}{4pt}
\caption{Resampling and rejuvenation ablation for the \emph{learned} score network. Distributional metrics are presented as ratios to the reference pool's split-half noise floor. Gate levels are quoted as the half-log SNR $\ell$ below which the rejuvenation move triggers; the adopted gate $\ell\lesssim-0.21$ is $t\ge0.30$. The no-resampling and resampling-only arms run at $\lambda\equiv1$. Rejuvenation tracks the lowest CV.}
\label{tab:cap_thr}
\begin{tabular}{llccccc}
\toprule
event & arm & $\hat p/p^\star$ & CV $\downarrow$ & KS$_0$ & KS$_1$ & EMD \\
\midrule
\multirow{6}{*}{$x_0>7$} & no resampling & 0.952 & 0.070 & 0.74 & 1.00 & 1.09 \\
 & resampling & 0.988 & 0.093 & 0.94 & 1.25 & 1.47 \\
 & resampling $+$ rejuv.\ ($M=3$) & 0.985 & 0.061 & 1.01 & 1.17 & 1.28 \\
 & \quad gated $\ell\lesssim0.05$ & 0.983 & 0.060 & 0.89 & 1.15 & 1.11 \\
 & \quad gated $\ell\lesssim-0.21$ & 0.978 & 0.060 & 1.11 & 1.20 & 1.14 \\
 & \quad gated $\ell\lesssim-0.46$ & 0.992 & 0.082 & 0.90 & 1.33 & 1.37 \\
\addlinespace
\multirow{6}{*}{$x_0>8$} & no resampling & 0.964 & 0.092 & 1.13 & 1.23 & 0.84 \\
 & resampling & 0.979 & 0.093 & 1.02 & 1.09 & 1.09 \\
 & resampling $+$ rejuv.\ ($M=3$) & 0.995 & 0.056 & 1.01 & 0.93 & 0.78 \\
 & \quad gated $\ell\lesssim0.05$ & 0.990 & 0.059 & 0.93 & 0.83 & 0.92 \\
 & \quad gated $\ell\lesssim-0.21$ & 0.985 & 0.064 & 0.97 & 1.00 & 0.73 \\
 & \quad gated $\ell\lesssim-0.46$ & 0.971 & 0.086 & 1.00 & 1.14 & 1.53 \\
\addlinespace
\multirow{6}{*}{$x_0>9$} & no resampling & 0.983 & 0.127 & 0.94 & 0.92 & 0.73 \\
 & resampling & 0.972 & 0.159 & 0.92 & 1.24 & 1.33 \\
 & resampling $+$ rejuv.\ ($M=3$) & 0.984 & 0.068 & 0.96 & 1.12 & 0.92 \\
 & \quad gated $\ell\lesssim0.05$ & 0.985 & 0.073 & 1.10 & 1.08 & 0.84 \\
 & \quad gated $\ell\lesssim-0.21$ & 0.980 & 0.089 & 0.77 & 0.83 & 0.98 \\
 & \quad gated $\ell\lesssim-0.46$ & 0.978 & 0.109 & 0.99 & 1.53 & 1.74 \\
\addlinespace
\multirow{6}{*}{$x_0>10$} & no resampling & 1.020 & 0.213 & 1.16 & 1.13 & 1.85 \\
 & resampling & 0.986 & 0.166 & 0.75 & 1.74 & 3.90 \\
 & resampling $+$ rejuv.\ ($M=3$) & 0.994 & 0.085 & 1.48 & 1.59 & 2.10 \\
 & \quad gated $\ell\lesssim0.05$ & 0.981 & 0.090 & 0.97 & 0.94 & 1.45 \\
 & \quad gated $\ell\lesssim-0.21$ & 0.965 & 0.079 & 1.23 & 1.07 & 0.98 \\
 & \quad gated $\ell\lesssim-0.46$ & 0.956 & 0.119 & 0.92 & 1.21 & 1.05 \\
\bottomrule
\end{tabular}
\end{table}

\paragraph{Benefits of SMC} Across, both Tables~\ref{tab:2x2} and~\ref{tab:cap_thr}, resampling prunes low-weight trajectories and rejuvenation decorrelates duplicated samples. We find that rejuvenation keeps the relative variance constant across rarity for both analytical and learned scores, showing a noticeable decrease in CV if the base estimator is poor \cite{del2006sequential, del2012adaptive}.
\begin{figure}[ht]
\centering
\includegraphics[width=0.9\linewidth]{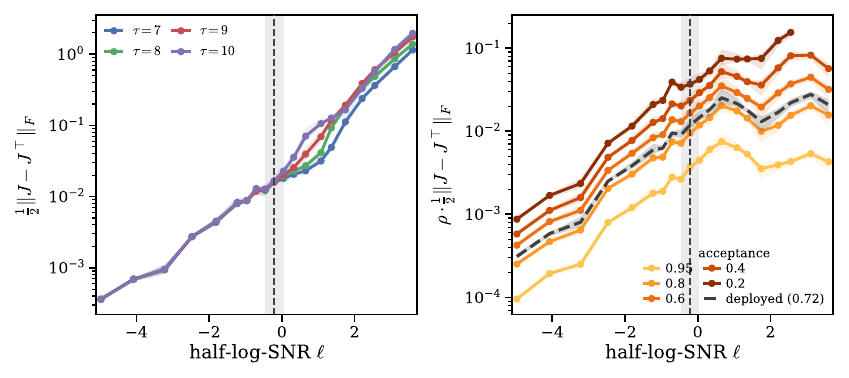}
\caption{(Left) Magnitude of the antisymmetric part of the Jacobian of the score network, measured as a Frobenius norm for 1024 particles for the different threshold exceedance problems of Table~\ref{tab:cap_thr}. (Right) Same value as the left multiplied by $\rho$ for $\tau > 10$ threshold exceedance problem,  for different calibrated step sizes $\rho$, denoted by their average acceptance rate. The shaded band marks the range of gates tested and the dashed line the adopted gate.}
\label{fig:curl_tau}
\end{figure}

\paragraph{Gating the rejuvenation move by noise level} \label{app:madm_curl_exposition} For the analytical score, there is no score approximation error, hence the line-integral of the Metropolis-Hastings acceptance ratio is defined exactly up to quadrature error \cite{lam2026metropolis}. However for a learned score it is approximated by $\sth \approx \nabla_{x_t} \log p_t$, and any errors in the learning process can lead to a non-conservative score approximation. The left of Fig.~\ref{fig:curl_tau}, evaluates this via the antisymmetric part of the Jacobian; here we reproduce the findings of \cite{pmlr-v202-lai23d, thornton2025composition}, where the score network becomes increasingly non-conservative as the denoising time, $t \rightarrow 0$.

A MALA transition kernel requires that $M_k$ fulfil the Kolmogorov criteria of reversibility.
As the Metropolis-Hastings acceptance ratio of MADM \cite{lam2026metropolis} is estimated by the line-integral of the score, i.e. Eq.~\eqref{eq:madm_line_integral}; this means the line integral, $I(\cdot)$ must fulfil the cyclic sum (or Stokes' Theorem),
\begin{equation}
I(b, a) + I(c, b) + I(a, c) = 0,
\end{equation}
for all triples $(a, b, c)$. This means that the acceptance ratio of the transition: $a \rightarrow b \rightarrow c \rightarrow a$ must be equal to 1 for MADM kernel to equate to MALA. A non-conservative score does not fulfil this criterion \cite{vuong2025we} and hence for a learned score, MADM need not be invariant to $\pi_k$.

To account for this, we turn off rejuvenation at a specific threshold in time or $\ell_t$ when the non-conservativity of the score increases.
Table~\ref{tab:cap_thr} ablates different thresholds. At $\tau = 10$ the ungated move reads KS $1.48/1.59$ and EMD $2.10$ against $1.23/1.07/0.98$ when gated at $t\ge0.30$, and the same ordering holds at $\tau=9$; we therefore adopt the gated rejuvenation for the rest of our examples, selecting the middle of the values we tested, $\ell_t \le -0.21$ ($t\ge0.30$). At $\tau = 7, 8$, where the corrector and resampling events were triggered less frequently, the gated and ungated rows coincide within noise, as the ungated version already sits close to the noise floor. Both gated and un-gated arms at $\tau=10$ read $0.965$ and $0.994$ within $1.3$ standard errors. 

\paragraph{Step size selection} As the step-size $\rho$ defines the region where a move is proposed, i.e. $\tilde{x} - x \sim \N(\rho g(x)/2, \rho I)$ of Eq.~\eqref{eq:mala_proposal}, a larger $\rho$ leads to, what we hypothesise, a higher exposure to the non-conservative part of the score network, which affects the estimated $M_k$ further. For a simple visualisation, we plot the proposed step size $\rho$ multiplied by the Frobenius norm of the antisymmetric part of the Jacobian, to the right of Fig~\ref{fig:curl_tau}. This for various $\rho$ on the $\tau = 10$ threshold exceedance problem of Section~\ref{sec:exp_2dgmm}; here the step size has been tuned to target a range of acceptance probabilities as is common \cite{lam2026metropolis}. We elect to take smaller, more conservative steps, tuning $\rho$ to target a 0.72 acceptance.

We hypothesise that, for a fixed number of MADM steps and $\rho$, there is a tradeoff between the total number of MADM steps that rejuvenate duplicated particles and the net error injected into the system by an approximate MH ratio. An adaptive schedule on $\rho(t)$ or $M(t)$ that accounts for these trade-offs could be an interesting direction for future work.

\paragraph{Ablation of Errors due to MADM}
In this setting we ablate what the discretisation error is at different $K$ using MADM. As MADM targets the continuous-time marginal $p_t$ instead of the discretised $\gamma_0$ law, it incurs a discretisation bias. We find the same issue while not plotted here, exists with MALA suggesting it is not a quadrature error itself. We verify this using analytical scores and the variance minimising Doob's guidance. See Fig.~\ref{fig:madm_error_bias}.

\begin{figure}[!ht]
    \centering
    \includegraphics[width=1.0\linewidth]{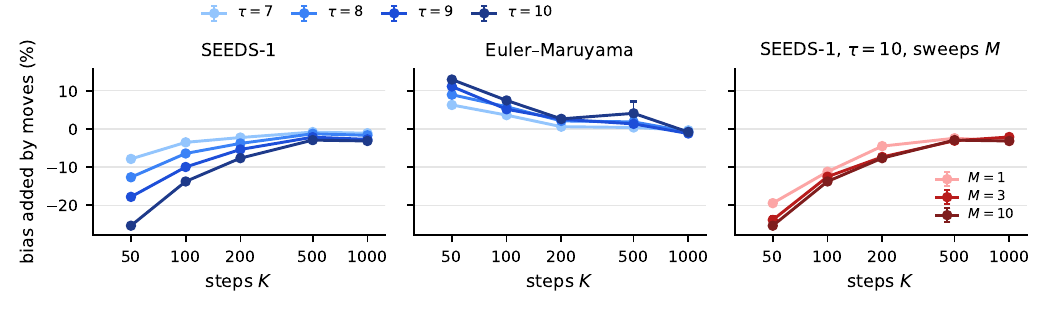}
   \caption{Bias introduced by MADM for the two integrators, using analytical scores and Doob's guidance with $M = 3$ for the first two panels. \emph{Left \& Middle} Injected bias affects rarer samples more, with all biases decreasing as $K \rightarrow \infty$. The two integrators have a different error profile. \emph{Right} Bias plotted for the $\tau = 10$ threshold exceedance problem. We find that the error grows with $M$. Error bars are due to seed variability.}
\label{fig:madm_error_bias}
\end{figure}

\subsection{Box and multiple box events}
\label{app:box_events}
We test the guidance on boxed geometries, each at two rarities, plus a bounded family. This is an analogue of an extreme defined as a set where variables exist in some range. Both families are boxes in the observable, i.e., $\Ev = \{x : a \le \phi(x) \le b\}$ with $a,b \in (\R\cup\{\pm\infty\})^m$ and the inequalities read component-wise. We take the interval guidance of Eq.~\eqref{eq:interval_reward}, see App.~\ref{app:guidance_gradient}. Any corner boxes that we use, takes $b = +\infty$, and it represents extremes that are joint threshold exceedances; bounded boxes have finite $a$ and $b$. 
\begin{figure}
\centering
\includegraphics[width=0.8\linewidth]{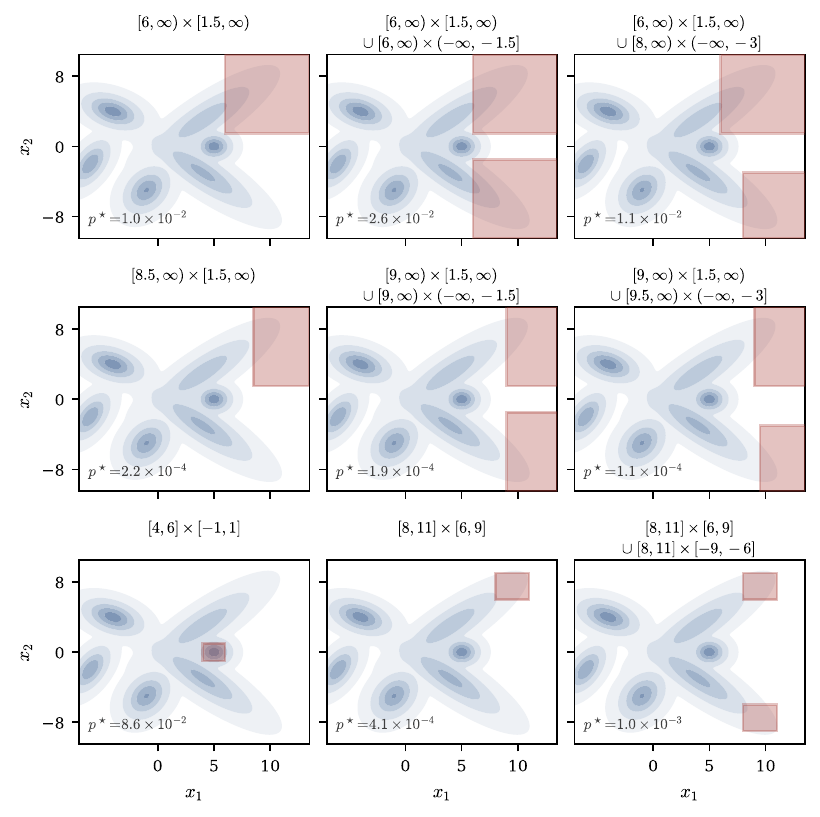}
\caption{Illustrative example of box boundaries. Plotted against a backdrop of the unconditional density of the 2D-GMM. Quantitative results of rare-event sampling are given in Table~\ref{tab:cap_box} below.}
\label{fig:boxes_illustrated}
\end{figure}

For simplicity we have illustrated the box boundaries we test in Fig~\ref{fig:boxes_illustrated} overlaid against the true unconditional density of the analytical 2D GMM. We report performance on the learned variant in Table~\ref{tab:cap_box}, where we ablate the effect of no resampling, resampling with gated/un-gated rejuvenation. We find that similar to the other experiments, rejuvenation lowers the CV on the rare boxes ($p^\star\le2\times10^{-4}$, by $1.5$-$2.5\times$) and is neutral or slightly worse on the easy ones ($p^\star\ge10^{-2}$), where resampling is triggered at most twice per run. 

\begin{table}[!htb]
\centering\footnotesize\setlength{\tabcolsep}{4pt}
\caption{Box events on the learned score. Every region is drawn on the model density in Fig.~\ref{fig:boxes_illustrated}. $p^\star$ is the $K=1000$ chain's own tail (Table~\ref{tab:model_bias} pool). We compare the estimate $\hat{p}$ and the shape metrics of the final particles, presented as a ratio against a noise floor.}
\begin{tabular}{llcccccc}
\toprule
event & variant & $p^\star$ & $\hat p/p^\star$ & CV $\downarrow$ & KS$_0$ & KS$_1$ & EMD \\
\midrule
\multirow{4}{*}{$[6,\infty)\times[1.5,\infty)$} & no resampling & \multirow{4}{*}{$9.97\!\times\!10^{-3}$} & 0.955 & 0.066 & 0.95 & 0.98 & 1.01 \\
 & resampling &  & 1.004 & 0.107 & 1.37 & 0.87 & 0.98 \\
 & resampling $+$ rejuv.\ ($M=3$) &  & 1.000 & 0.070 & 0.89 & 0.91 & 0.96 \\
 & \quad gated $\ell\lesssim-0.21$ &  & 1.000 & 0.070 & 0.94 & 0.89 & 1.00 \\
\addlinespace
\multirow{4}{*}{\shortstack[l]{$[6,\infty)\times[1.5,\infty)$\\$\cup\,[6,\infty)\times(-\infty,-1.5]$}} & no resampling & \multirow{4}{*}{$2.60\!\times\!10^{-2}$} & 0.966 & 0.057 & 1.15 & 1.15 & 1.09 \\
 & resampling &  & 0.979 & 0.055 & 0.99 & 1.08 & 0.95 \\
 & resampling $+$ rejuv.\ ($M=3$) &  & 0.988 & 0.042 & 0.87 & 1.10 & 1.15 \\
 & \quad gated $\ell\lesssim-0.21$ &  & 0.992 & 0.044 & 0.96 & 1.44 & 1.50 \\
\addlinespace
\multirow{4}{*}{\shortstack[l]{$[6,\infty)\times[1.5,\infty)$\\$\cup\,[8,\infty)\times(-\infty,-3]$}} & no resampling & \multirow{4}{*}{$1.08\!\times\!10^{-2}$} & 0.956 & 0.063 & 1.10 & 0.85 & 0.82 \\
 & resampling &  & 0.992 & 0.086 & 0.90 & 1.27 & 1.15 \\
 & resampling $+$ rejuv.\ ($M=3$) &  & 1.011 & 0.089 & 0.94 & 1.03 & 1.36 \\
 & \quad gated $\ell\lesssim-0.21$ &  & 1.011 & 0.089 & 1.02 & 1.10 & 1.07 \\
\addlinespace
\multirow{4}{*}{$[8.5,\infty)\times[1.5,\infty)$} & no resampling & \multirow{4}{*}{$2.16\!\times\!10^{-4}$} & 0.945 & 0.132 & 1.21 & 0.96 & 0.90 \\
 & resampling &  & 0.967 & 0.116 & 1.19 & 0.86 & 0.96 \\
 & resampling $+$ rejuv.\ ($M=3$) &  & 0.979 & 0.086 & 1.07 & 0.90 & 0.96 \\
 & \quad gated $\ell\lesssim-0.21$ &  & 0.982 & 0.105 & 1.17 & 0.91 & 1.02 \\
\addlinespace
\multirow{4}{*}{\shortstack[l]{$[9,\infty)\times[1.5,\infty)$\\$\cup\,[9,\infty)\times(-\infty,-1.5]$}} & no resampling & \multirow{4}{*}{$1.87\!\times\!10^{-4}$} & 0.986 & 0.123 & 1.15 & 1.32 & 0.85 \\
 & resampling &  & 0.990 & 0.133 & 1.10 & 1.17 & 1.28 \\
 & resampling $+$ rejuv.\ ($M=3$) &  & 1.008 & 0.079 & 1.22 & 1.34 & 1.12 \\
 & \quad gated $\ell\lesssim-0.21$ &  & 1.008 & 0.113 & 1.02 & 1.03 & 0.96 \\
\addlinespace
\multirow{4}{*}{\shortstack[l]{$[9,\infty)\times[1.5,\infty)$\\$\cup\,[9.5,\infty)\times(-\infty,-3]$}} & no resampling & \multirow{4}{*}{$1.12\!\times\!10^{-4}$} & 0.972 & 0.164 & 1.11 & 1.21 & 1.26 \\
 & resampling &  & 0.972 & 0.133 & 1.18 & 1.83 & 1.97 \\
 & resampling $+$ rejuv.\ ($M=3$) &  & 0.998 & 0.066 & 1.15 & 1.00 & 1.29 \\
 & \quad gated $\ell\lesssim-0.21$ &  & 1.010 & 0.088 & 1.06 & 1.00 & 0.82 \\
\addlinespace
\multirow{4}{*}{$[4,6]\times[-1,1]$} & no resampling & \multirow{4}{*}{$8.66\!\times\!10^{-2}$} & 0.968 & 0.032 & 1.04 & 1.16 & 1.13 \\
 & resampling &  & 0.987 & 0.050 & 1.18 & 1.01 & 1.08 \\
 & resampling $+$ rejuv.\ ($M=3$) &  & 0.986 & 0.035 & 1.19 & 0.96 & 1.09 \\
 & \quad gated $\ell\lesssim-0.21$ &  & 0.992 & 0.035 & 1.04 & 0.89 & 0.98 \\
\addlinespace
\multirow{4}{*}{$[8,11]\times[6,9]$} & no resampling & \multirow{4}{*}{$4.11\!\times\!10^{-4}$} & 0.951 & 0.067 & 1.06 & 1.11 & 1.07 \\
 & resampling &  & 0.996 & 0.092 & 1.02 & 1.06 & 0.96 \\
 & resampling $+$ rejuv.\ ($M=3$) &  & 0.984 & 0.052 & 0.96 & 1.15 & 1.04 \\
 & \quad gated $\ell\lesssim-0.21$ &  & 0.983 & 0.051 & 0.97 & 1.10 & 0.98 \\
\addlinespace
\multirow{4}{*}{\shortstack[l]{$[8,11]\times[6,9]$\\$\cup\,[8,11]\times[-9,-6]$}} & no resampling & \multirow{4}{*}{$1.01\!\times\!10^{-3}$} & 0.964 & 0.057 & 0.81 & 1.41 & 1.35 \\
 & resampling &  & 0.984 & 0.056 & 0.81 & 0.84 & 1.09 \\
 & resampling $+$ rejuv.\ ($M=3$) &  & 0.986 & 0.050 & 0.86 & 1.12 & 1.57 \\
 & \quad gated $\ell\lesssim-0.21$ &  & 0.984 & 0.053 & 0.89 & 0.90 & 1.26 \\
\bottomrule
\end{tabular}

\label{tab:cap_box}
\end{table}
\subsection{Ablation of Parameters}
\label{app:gmm_schedule_ablations}
We ablate our parameters once: one the $\tau = 10$ threshold exceedance problem and we reuse it throughout our experiments. There are two parameters we ablate over the starting point of our guidance $t_{\text{on}}$ and the exponential $p$ in $\lambda(t) = 1 - (t/t_{\rm on})^p$. Below are the results of the ablation for our method, separated by the base, resampling and resampling with rejuvenation; we also ablate for the lowest $\mathrm{CV}$ of \citet{manshausen2026accurateextremeeventlikelihoods} which we use for the experiments in Fig.~\ref{fig:table_1_replacement}. All experiments were done with 24 seeds, $N = 1000$, $K = 1000$ using the SEEDS-1 integrator \cite{gonzalez2023seeds}, with $M = 3$ steps of MADM \cite{lam2026metropolis} after resampling.

\begin{table}[ht]
\centering
\caption{Schedule ablation at $\tau=10$, $K=1000$, $N=1000$: CV of $\hat p$ for the tempering
schedule $\lambda(t)=1-(t/t_{\rm on})^p$. Bold marks the minimum of each column. We find that the no-rejuvenation variants converged on the same tempering schedule $\lambda_t \equiv 1$ compared to the MADM variant. All schedules with rejuvenation yield a lower CV compared to without.}
\label{tab:schedule_ablation}
\small
\begin{tabular}{lcccc}
\toprule
schedule & base $\downarrow$ & + resampling$\downarrow$ & + MADM $\downarrow$& \cite{manshausen2026accurateextremeeventlikelihoods} $\downarrow$\\
\midrule
\multicolumn{5}{l}{\emph{analytic score}} \\
$\lambda\equiv1$            & \textbf{0.15} & \textbf{0.23} & 0.09          & \textbf{0.29} \\
$t_{\rm on}=1.0,\ p=2$      & 0.28          & 0.32          & 0.08          & 0.68 \\
$t_{\rm on}=0.9,\ p=2$ (deployed) & 0.35    & 0.33          & \textbf{0.07} & 0.83 \\
$t_{\rm on}=0.7,\ p=2$      & 0.62          & 0.40          & 0.10          & 1.26 \\
$t_{\rm on}=0.5,\ p=2$      & 1.73          & 0.91          & 0.13          & 1.94 \\
$t_{\rm on}=0.9,\ p=1$      & 0.71          & 0.48          & 0.11          & 1.45 \\
$t_{\rm on}=0.9,\ p=4$      & 0.20          & 0.23          & 0.10          & 0.56 \\
\midrule
\multicolumn{5}{l}{\emph{learned score}} \\
$\lambda\equiv1$            & \textbf{0.21} & \textbf{0.17} & 0.11          & \textbf{0.17} \\
$t_{\rm on}=1.0,\ p=2$      & 0.32          & 0.26          & 0.13          & 0.51 \\
$t_{\rm on}=0.9,\ p=2$ (deployed) & 0.40    & 0.29          & \textbf{0.08} & 0.68 \\
$t_{\rm on}=0.7,\ p=2$      & 0.70          & 0.36          & 0.15          & 1.18 \\
$t_{\rm on}=0.5,\ p=2$      & 2.00          & 0.75          & 0.25          & 1.90 \\
$t_{\rm on}=0.9,\ p=1$      & 0.78          & 0.37          & 0.18          & 1.28 \\
$t_{\rm on}=0.9,\ p=4$      & 0.23          & 0.18          & 0.16          & 0.45 \\
\bottomrule
\end{tabular}
\end{table}

\FloatBarrier

\newpage

\section{Climate experiments}
\label{app:planned_climemu}

\subsection{Setup and model details}
\label{app:climemu_setup}
\label{app:model_details_climemu}

We use a pretrained score-based diffusion climate emulator that acts as a surrogate for physics-based Earth System Models \cite{https://doi.org/10.1029/2025MS005558} (licensed under MIT Licence), trained on large-ensemble runs from the Coupled Model Intercomparison Project (CMIP6) on different climate change (forcing) scenarios \cite{o2016scenario}. Our results are based on the MPI‐ESM1‐2‐LR emulator. The model outputs monthly-averaged fields conditioned on the global mean surface temperature anomaly (GMST), which captures global warming against preindustrial warming, and on the month of the year, which captures annual variability.

The emulator generates four surface fields jointly across the globe on every
draw, listed in Table~\ref{tab:climemu_variables} by their CMIP6 short names. All fields are monthly means and are handled throughout as \emph{anomalies}
against the pre-industrial (piControl) climatology, matching the convention of
Table~\ref{tab:tail-thresholds}.

\begin{table}[h]
    \centering
    \small
    \caption{Emulated variables with their corresponding CMIP6 short names,
    descriptions and units.}
    \label{tab:climemu_variables}
    \begin{tabular}{lll}
    \toprule
    Short name & Description & Units \\
    \midrule
    \texttt{tas}     & $2$~m air temperature    & K \\
    \texttt{hurs}    & $2$~m relative humidity  & \% \\
    \texttt{pr}      & Precipitation rate       & mm\,day$^{-1}$ \\
    \texttt{sfcWind} & $10$~m wind speed        & m\,s$^{-1}$ \\
    \bottomrule
    \end{tabular}
    
\end{table}

We compute a large Monte Carlo average of the rare event probability with $N = 9{,}011{,}200$ unguided draws at a GMST of $\Delta T = 1.5$~K for the month of July to validate our results against. For the region $R$, with $\phi(\cdot)$ as a per-variable regional average, the associated mean, standard deviation and tail values at different rarities of interest are given in Table~\ref{tab:tail-thresholds}. 
\begin{table}[h]
  \centering
  \setlength{\tabcolsep}{4.5pt}
  \small
  \caption{Tail thresholds for the four emulator variables, from
  $N = 9{,}011{,}200$ unguided Monte Carlo draws over $R$ at $\Delta T = +1.5$~K in July. Upper-tail entries satisfy
  $P(\phi \ge \tau) = p$, lower-tail entries $P(\phi \le \tau) = p$. Binomial
  relative standard errors are $1.1\%$, $3.3\%$ and $10.5\%$ at
  $p = 10^{-3}, 10^{-4}, 10^{-5}$ respectively. The guided runs use the same thresholds. }
  \label{tab:tail-thresholds}
  \begin{tabular}{llrrrrrrrr}
    \toprule
    & & \multicolumn{2}{c}{Marginal}
      & \multicolumn{3}{c}{Upper tail}
      & \multicolumn{3}{c}{Lower tail} \\
    \cmidrule(lr){3-4}\cmidrule(lr){5-7}\cmidrule(lr){8-10}
    Variable & Unit & Mean & SD
      & $10^{-3}$ & $10^{-4}$ & $10^{-5}$
      & $10^{-3}$ & $10^{-4}$ & $10^{-5}$ \\
    \midrule
    \texttt{tas}     & K      & $+2.577$ & $1.023$
      & $5.813$  & $6.498$  & $7.096$
      & $-0.539$ & $-1.174$ & $-1.727$ \\
    \texttt{pr}      & mm/day & $-0.250$ & $0.450$
      & $1.698$  & $2.311$  & $2.842$
      & $-1.334$ & $-1.517$ & $-1.650$ \\
    \texttt{hurs}    & \%     & $-3.275$ & $4.220$
      & $10.519$ & $13.485$ & $16.054$
      & $-15.838$ & $-18.387$ & $-20.554$ \\
    \texttt{sfcWind} & m/s    & $+0.193$ & $0.186$
      & $0.773$  & $0.895$  & $1.004$
      & $-0.387$ & $-0.508$ & $-0.613$ \\
    \bottomrule
  \end{tabular}
  
\end{table}
\subsubsection{Implementation Details}
\label{app:climemu_impl}
For all experiments throughout we use $K = 200$ discretisation of the exponential integrator in the noise-prediction mode, see App.~\ref{app:rne_weights} and Fig.~\ref{fig:climate_kerror}. We schedule $\lambda(t)$ and $\delta(t)$ for all experiments below. For a VE-schedule, $\alpha_t \equiv 1$, so the half-log SNR is $\ell_t = - \log \sbar_t$ and we specify both schedule in $\sbar_t$: piecewise linear in $\lambda$ and $\delta_t$ is interpolated log-linearly between two endpoints. Similar to GMM example, we turn guidance, i.e. $\lambda(t) \equiv 0$ off for the first ~15\% ($\sbar > 40)$ of the denoising process to account for any approximation errors in $\xh$, converging to $\lambda(\sbar) \rightarrow 1$ after adding a lambda overshoot at a maximum of 1.25. $\delta(t)$ is held at 0.5 until $\sbar = 0.3$ SD and sharpens to 0.01 by $\sbar = 0.03$SD, where SD is the standard deviation of each observable approximated by the same method as the $\Pi$GDM mentioned in App~\ref{app:diffusion_details}. We decrease $\delta$ towards the end of the denoising process to reduce the variance of our estimator due to mismatch in the generated $\pi_0$. For simplicity, we do not anneal $\tau(\sbar)$.

Correctness of our method is independent of these choices, so they are selected for variance alone. We set them empirically on the $p^* \approx 10^{-3}$ rarity, PNW marginal hot and dry threshold exceedance problems using a pilot sweep at the same setup as described above. We provide an ablation of the values we tested below (Table~\ref{tab:climemu-recipe-schedule}), but adopt the \emph{same} hyper-parameters for all experiments, with $N = 1024$, $K = 200$ and $16$ seeds throughout, see Table~\ref{tab:climemu-hparams}.

\begin{figure}[!h]
    \centering
\includegraphics[width=0.75\linewidth]{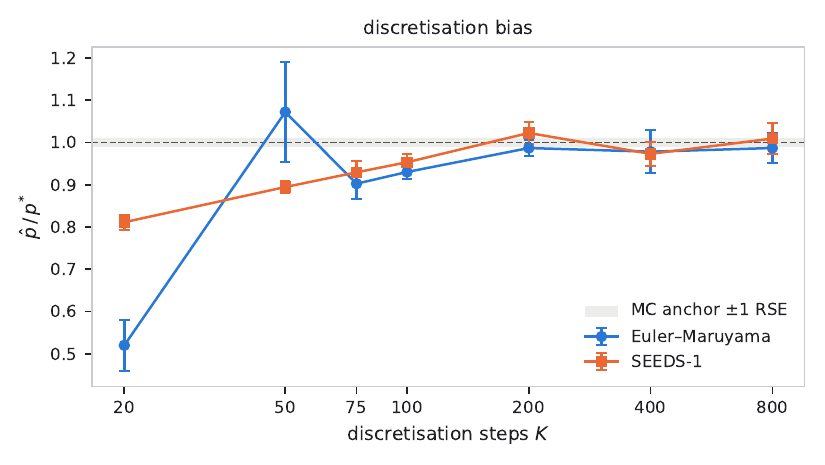}
   \caption{Discretisation bias for the integrator for the $\tau_1$ PNW heatwave event using the SEEDS-1 integrator, with 1024 samples at each point and without any rejuvenation. Dotted line represents the reference pool. We find that for SEEDS-1 and Euler-Maruyama integrators using $K = 200$ saturates the discretisation bias. We could not compare against a large $K$ unconditional reference due to prohibitive costs. Bars represent $\pm 1$ standard error of the mean over 8 shared seeds across both integrators.}
\label{fig:climate_kerror}
\end{figure}

All experiments were done on a single Nvidia H100 GPU with 96 GB of VRAM. We used approximately 370 GPUh to generate the 9 million reference samples. A single batch of 1024 unconditional draws at $K=200$ steps takes approximately 155 seconds, with the method and its variants multiplying this cost; we summarise this in Table~\ref{tab:compute_cost}.
\begin{table}[t]
  \centering
  \caption{Deployed schedule and sampler configuration for the climate
  emulator experiments. $\bar\sigma$ is the marginal noise standard
  deviation. Selected once on the marginal $10^{-3}$ PNW hot and dry events and applied
  unchanged to other experiments. $^{\ddagger}$$\delta$ is quoted in units of the observable's unconditional
  standard deviation.}
  \label{tab:climemu-hparams}
  \begin{tabular}{lll}
    \toprule
    & Symbol & Value \\
    \midrule
    \multicolumn{3}{l}{\emph{Schedules}}\\
    Tempering onset            & $\bar\sigma_{\mathrm{on}}$   & $40$ \\
    Tempering peak             & $\lambda_{\max}$             & $1.25$ at $\bar\sigma = 2$ \\
    Tempering relaxation       & $\bar\sigma_{\mathrm{end}}$  & $0.3$ \\
    Smoothing width, transport & $\delta_{\mathrm{hi}}$       & $0.5\,\mathrm{SD}$ \\
    Smoothing width, floor     & $\delta_{\mathrm{lo}}$       & $0.01\,\mathrm{SD}$ \\
    Ramp window                &                              & $\bar\sigma \in [0.03, 0.3]$ \\
    Ramp exponent              &                              & $\log_{10} 50 \approx 1.70$ \\
    Observable SD, frozen$^{\ddagger}$ (\texttt{tas}, \texttt{hurs}) & SD & $1.017$~K, $4.224\%$ \\
    Threshold                  & $\tau$                       & constant \\
    \midrule
    \multicolumn{3}{l}{\emph{Sampler}}\\
    Particles                  & $N$                          & $1024$ \\
    Integration steps          & $K$                          & $200$ \\
    Integrator                 &                              & exponential (noise-prediction) \\
 
    Resampling                 &                              & adaptive, $\mathrm{ESS}/N < 0.5$ \\

    Rejuvenation               & $M$                          & $0$ \\
    Noise range                & $[\bar\sigma_{\min}, \bar\sigma_{\max}]$ & $[0.01, 146.09]$ \\
    Posterior proxy            & $s_t^2$                      & $\bar\sigma^2/(1 + \bar\sigma^2)$ \\
    Look-ahead bandwidth       & $\delta_{\mathrm{eff}}^2$    & $\delta(\bar\sigma)^2 + s_t^2\|a\|^2$ \\
    \bottomrule
  \end{tabular}
\end{table}
 
\subsection{Experiment 1: Compound hot \& dry events}

\label{app:climemu_exp1}
This subsection supports the first climate experiment of Section~\ref{sec:exp_climate}: the
ablation behind the deployed recipe, and the fields the guided sampler produces at
each rarity. We begin with the marginal threshold exceedance problem, with the hot (\texttt{tas}) and dry (\texttt{hurs}) marginal results reported in Table~\ref{tab:marginal_hot_dry}, followed by the joint.

\begin{table}[t]
\centering
\setlength{\tabcolsep}{4pt}
\caption{Marginal extreme-event probabilities over the PNW target region in July at $\Delta T = +1.5$~K, for hot ($\phi_{\texttt{tas}} \ge \tau$) and dry ($\phi_{\texttt{hurs}} \le \tau$) events. For each variable, $\tau_{1}, \tau_{2}, \tau_{3}$ are chosen so that the event is successively rarer (exact values in App.~\ref{app:planned_climemu}). Ground-truth $p^\star$ are computed with $\sim 10^7$ MC samples. Unguided Monte Carlo and our guided method use the same computational budget: each unguided estimate uses $4813$ draws ($4.7\times$ the $1024$ guided particles, Table~\ref{tab:compute_cost}), over 16 disjoint replicates. Estimated probabilities, $\hat{p}$ are reported as a ratio $\hat{p}/p^*$ with mean $\pm 1$ standard deviation over 16 seeds. The variance ratio (VR) is $\Var(\hat{p}_{\text{unguided}}) / \Var(\hat{p}_{\text{guided}})$. The number of unguided replicates with no event is 1 (hot) and 0 (dry) of 16 at $\tau_1$, 9 and 12 at $\tau_2$, and 14 for both at $\tau_3$, where the other two replicates record a single event each, so the VR at $\tau_3$ is a rough estimate. $^\dagger$Includes one collapsed run (App.~\ref{app:climemu_resampling_ablation}). The hot results are shown in Fig.~\ref{fig:tas_ladder_uncond}.}
\label{tab:marginal_hot_dry}
\begin{tabular}{@{}l cccc cccc@{}}
\toprule
 & \multicolumn{4}{c}{Hot: $\phi_{\texttt{tas}}(x) \ge \tau$} & \multicolumn{4}{c}{Dry: $\phi_{\texttt{hurs}}(x) \le \tau$} \\
\cmidrule(lr){2-5} \cmidrule(l){6-9}
 & & \multicolumn{2}{c}{$\hat{p}/p^\star$} & & & \multicolumn{2}{c}{$\hat{p}/p^\star$} & \\
\cmidrule(lr){3-4} \cmidrule(lr){7-8}
 & $p^\star$ & Unguided & Guided & VR & $p^\star$ & Unguided & Guided & VR \\
\midrule
$\tau_{1}$ & $1.00\text{e-}3$ & $0.90 \pm 0.45$ & $\mathbf{0.98 \pm 0.08}$ & $34\times$
           & $1.00\text{e-}3$ & $1.03 \pm 0.44$ & $\mathbf{1.00 \pm 0.07}$ & $37\times$ \\
$\tau_{2}$ & $1.00\text{e-}4$ & $1.30 \pm 1.67$ & $\mathbf{1.01 \pm 0.08}$ & $411\times$
           & $1.00\text{e-}4$ & $0.52 \pm 0.93$ & $\mathbf{1.02 \pm 0.22}^{\dagger}$ & $18\times$ \\
$\tau_{3}$ & $1.01\text{e-}5$ & $2.57 \pm 7.03$ & $\mathbf{1.05 \pm 0.13}$ & $2722\times$
           & $1.01\text{e-}5$ & $2.57 \pm 7.03$ & $\mathbf{1.06 \pm 0.12}$ & $3567\times$ \\
\bottomrule
\end{tabular}
\end{table}

\subsubsection{Selection and rejuvenation}
\label{app:climemu_resampling_ablation}
Similar to Appendix~\ref{app:madm_curl}, we ablate the effect of resampling and rejuvenation on the hot and dry threshold exceedance events and the joint hot-and-dry event that we report the net speed-up on. This is because while resampling does not cost extra wall clock time, rejuvenation does. We evaluate whether the reduction in CV justifies the added computation cost. For this, we use a fixed $M = 5$ MADM steps using 2 degrees of quadrature. Additionally, we cache the estimate of $\log \Gamma_\pi$ in Eq.~\eqref{eq:madm_simpson} for either the current $x$ or proposal $\tilde{x}$ so that it may be reused between moves. For the same reasons as highlighted in Appendix~\ref{app:more_2dgmm}, we set the acceptance target at $0.72$ and gate the rejuvenation move to run only at noise levels $\bar{\sigma} \ge 0.30$.

Table~\ref{tab:climemu_ablation} captures this ablation. We report the CV which is the spread of the estimated $\hat{p}$ across 16 seeds; $\hat{k}$ is the generalised-Pareto shape parameter of the importance weights, fitted with the top 96  ($\min (0.2 N, 3 \sqrt{N})$) \cite{vehtari2024pareto}, within a run, reported as the median over seeds.  We find that without resampling the median $\hat{k}$ exceeds $0.5$ (no finite second moment) in seven of the nine cells. This suggests that the weights are very heavy-tailed and the convergence of the no-resampling estimator is slow to converge, with an accompanying CV that is optimistic. 

Resampling lowers the CV in eight of the nine cells; the exception, dry at $10^{-4}$, comes from a single collapsed run in which one lineage takes over the crossing particles (seed 12, $\hat p/p^\star = 1.72$ with 5 effective families); without it the CV is $0.124$. Rejuvenation does not change the CV consistently: it is lower for hot at $10^{-5}$ and for dry at $10^{-4}$, where no rejuvenated run collapses, and higher in the other seven cells, by at most $27\%$. Differences of this size are within the sampling error of a CV estimated from 16 seeds. $W_1$ is the weighted Wasserstein-1 distance of the quantity of interest $\phi(\cdot)$ between the generated samples and the Monte Carlo reference, reported against a null floor. All variants report a similar $W_1$ and there is no discernible issue with any. However, it is worth noting that rejuvenation adds ${\sim}1.1$--$1.25\times$ wall-clock time compared to resampling alone, hence there is a trade-off between a larger $N$ versus more rejuvenation steps at finite compute budget. In the experiments below, unless stated otherwise we run with zero rejuvenation steps.

\begin{table}[t]
\centering
\footnotesize
\setlength{\tabcolsep}{3pt}
\caption{Ablation of resampling and rejuvenation on the climate emulator for the
hot, dry, and joint hot-and-dry events. Same parameters as in the main text; details in App.~\ref{app:climemu_resampling_ablation}. Ratios are mean $\pm$ one standard error over 16 seeds.  $W_1$ is a 
weighted distance between the $\phi(\cdot)$ of generated  samples and draws from a reference, scaled against a null floor.
The relative error of $p^\star$ is $1.1/3.3/10.5\%$ at the three marginal rarities and
$0.6/2.3/8.9\%$ at the three joint rarities. CV is estimated across $\hat{p}$ estimates; $\hat{k}$ is estimated per seed and we report the median over the 16 seeds. $^\dagger$Includes one collapsed run; without it $\hat p/p^\star = 0.968$ and CV $= 0.124$. $\hat{k} > 0.7$ represents weights that have no finite second moment \cite{vehtari2024pareto}, so the CV estimates, and the speed-ups of those rows, are optimistic. Speed-up is over crude MC at matched CV, including each arm's measured wall-clock (App.~\ref{app:climemu_cost}); in brackets, its 95\% bootstrap CI (BCa, 20{,}000 resamples of the 16 seeds).
 }
\label{tab:climemu_ablation}
\begin{tabular}{@{}l l l r r c r r r@{}}
\toprule
 & $p^\star$ & arm & $\hat p/p^\star$ & CV $\downarrow$ &
 $\mathrm{ESS}/N$ \;\; $\hat k$ &
 \multicolumn{2}{c}{$W_1$} & Speed-up $\uparrow$ \\
\cmidrule(lr){7-8}
 & & & & & & \texttt{tas} & \texttt{hurs} & (95\% CI) \\
\midrule
\multirow{9}{*}{\rotatebox{90}{Hot}}
& $10^{-3}$ & no resampling & $1.024 \pm 0.051$ & $0.199$ & $0.06$ \; $0.49$ & $1.03$ & -- & $5.2\times$ ($2.8$--$12\times$) \\
& & adaptive resampling & $0.982 \pm 0.019$ & $0.078$ & -- & $1.54$ & -- & $34\times$ ($20$--$60\times$) \\
& & \quad + MADM moves & $0.954 \pm 0.020$ & $0.082$ & -- & $1.40$ & -- & $27\times$ ($17$--$44\times$) \\
\addlinespace[2pt]
& $10^{-4}$ & no resampling & $1.020 \pm 0.049$ & $0.191$ & $0.03$ \; $0.57$ & $1.07$ & -- & $56\times$ ($32$--$94\times$) \\
& & adaptive resampling & $1.010 \pm 0.021$ & $0.082$ & -- & $0.95$ & -- & $311\times$ ($129$--$616\times$) \\
& & \quad + MADM moves & $0.974 \pm 0.020$ & $0.083$ & -- & $1.17$ & -- & $257\times$ ($119$--$571\times$) \\
\addlinespace[2pt]
& $10^{-5}$ & no resampling & $1.091 \pm 0.062$ & $0.228$ & $0.02$ \; $0.88$ & $0.82$ & -- & $395\times$ ($191$--$815\times$) \\
& & adaptive resampling & $1.048 \pm 0.034$ & $0.129$ & -- & $0.87$ & -- & $1246\times$ ($778$--$2045\times$) \\
& & \quad + MADM moves & $1.019 \pm 0.023$ & $0.089$ & -- & $0.64$ & -- & $2179\times$ ($1264$--$3573\times$) \\
\midrule
\multirow{9}{*}{\rotatebox{90}{Dry}}
& $10^{-3}$ & no resampling & $0.931 \pm 0.036$ & $0.153$ & $0.06$ \; $0.53$ & -- & $1.15$ & $8.9\times$ ($4.0$--$23\times$) \\
& & adaptive resampling & $0.996 \pm 0.018$ & $0.072$ & -- & -- & $1.42$ & $40\times$ ($24$--$60\times$) \\
& & \quad + MADM moves & $0.962 \pm 0.017$ & $0.073$ & -- & -- & $1.44$ & $34\times$ ($17$--$58\times$) \\
\addlinespace[2pt]
& $10^{-4}$ & no resampling & $0.911 \pm 0.047$ & $0.206$ & $0.04$ \; $0.63$ & -- & $1.06$ & $49\times$ ($28$--$80\times$) \\
& & adaptive resampling & $1.015 \pm 0.055^{\dagger}$ & $0.217$ & -- & -- & $1.31$ & $44\times$ ($19$--$179\times$) \\
& & \quad + MADM moves & $0.964 \pm 0.032$ & $0.134$ & -- & -- & $1.09$ & $96\times$ ($37$--$362\times$) \\
\addlinespace[2pt]
& $10^{-5}$ & no resampling & $0.978 \pm 0.082$ & $0.333$ & $0.02$ \; $0.75$ & -- & $0.97$ & $185\times$ ($84$--$827\times$) \\
& & adaptive resampling & $1.055 \pm 0.029$ & $0.112$ & -- & -- & $0.68$ & $1655\times$ ($817$--$3271\times$) \\
& & \quad + MADM moves & $1.038 \pm 0.037$ & $0.142$ & -- & -- & $0.71$ & $815\times$ ($333$--$2381\times$) \\
\midrule
\multirow{9}{*}{\rotatebox{90}{Joint}}
& $10^{-2}$ & no resampling & $0.970 \pm 0.020$ & $0.082$ & $0.09$ \; $0.46$ & $1.25$ & $1.17$ & $6.0\times$ ($3.2$--$9.9\times$) \\
& & adaptive resampling & $0.965 \pm 0.016$ & $0.068$ & -- & $1.55$ & $1.42$ & $8.6\times$ ($3.5$--$22\times$) \\
& & \quad + MADM moves & $0.950 \pm 0.018$ & $0.074$ & -- & $1.50$ & $1.67$ & $6.6\times$ ($3.5$--$11\times$) \\
\addlinespace[2pt]
& $10^{-3}$ & no resampling & $0.967 \pm 0.043$ & $0.179$ & $0.03$ \; $0.67$ & $1.22$ & $1.09$ & $19\times$ ($9.8$--$42\times$) \\
& & adaptive resampling & $0.990 \pm 0.028$ & $0.113$ & -- & $1.34$ & $1.39$ & $47\times$ ($25$--$83\times$) \\
& & \quad + MADM moves & $1.047 \pm 0.032$ & $0.122$ & -- & $1.78$ & $1.54$ & $36\times$ ($14$--$90\times$) \\
\addlinespace[2pt]
& $10^{-4}$ & no resampling & $1.004 \pm 0.067$ & $0.268$ & $0.02$ \; $0.85$ & $1.28$ & $1.02$ & $126\times$ ($77$--$196\times$) \\
& & adaptive resampling & $1.044 \pm 0.021$ & $0.080$ & -- & $1.14$ & $0.86$ & $1413\times$ ($854$--$2551\times$) \\
& & \quad + MADM moves & $1.040 \pm 0.025$ & $0.096$ & -- & $1.00$ & $0.95$ & $842\times$ ($519$--$1372\times$) \\
\bottomrule
\end{tabular}
\end{table}

\subsubsection{Conditional fields for temperature events}
\label{app:climemu_physical}
We plot both the local (Fig.~\ref{fig:climate_plot}) and global (Fig.~\ref{fig:climate_plot_diff}) guided fields for the threshold exceedance problem to visualise representative draws from the conditional model obtained using guidance. In the example below, guidance is only applied to the region $R$ for the $\texttt{tas}$ variable, any structure or perturbation of other variables or away from $R$ is due to the unguided score.
\begin{figure}[tbp]
    \centering
    \includegraphics[width=1.0\linewidth]{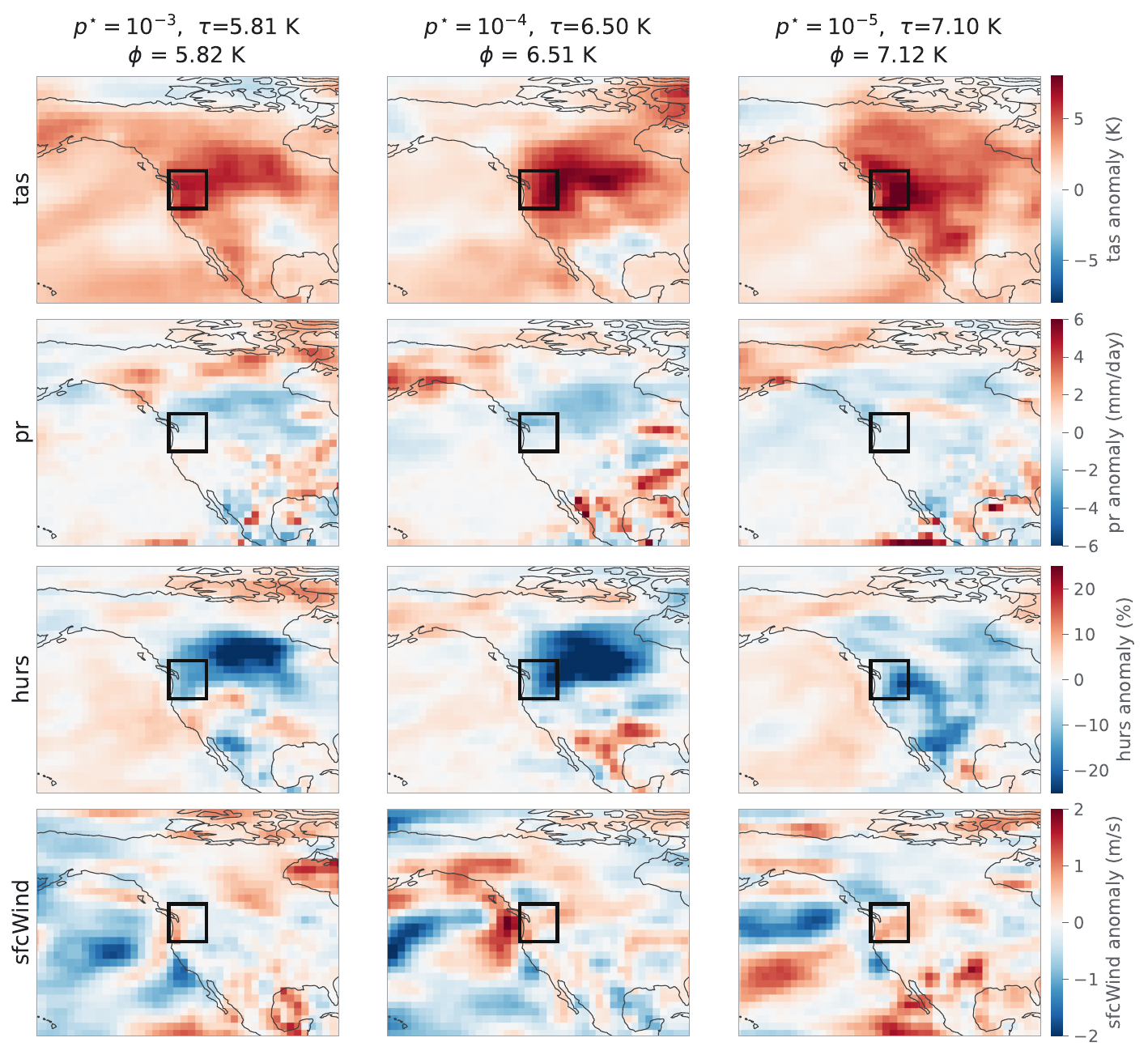}
    \caption{Top-weight crossing fields over the Pacific Northwest (PNW) target region produced by the guided sampler at different exceedance threshold $\tau$. Guided samples based on the $\texttt{tas}$ variable alone. Same setup as reference Table~\ref{tab:marginal_hot_dry}. $p$ denotes $p_0[E]$ and $\phi(\cdot)$ is the regional average temperature.}
    \label{fig:climate_plot}
\end{figure}

\begin{figure}[tbp]
    \centering
    \includegraphics[width=1.0\linewidth]{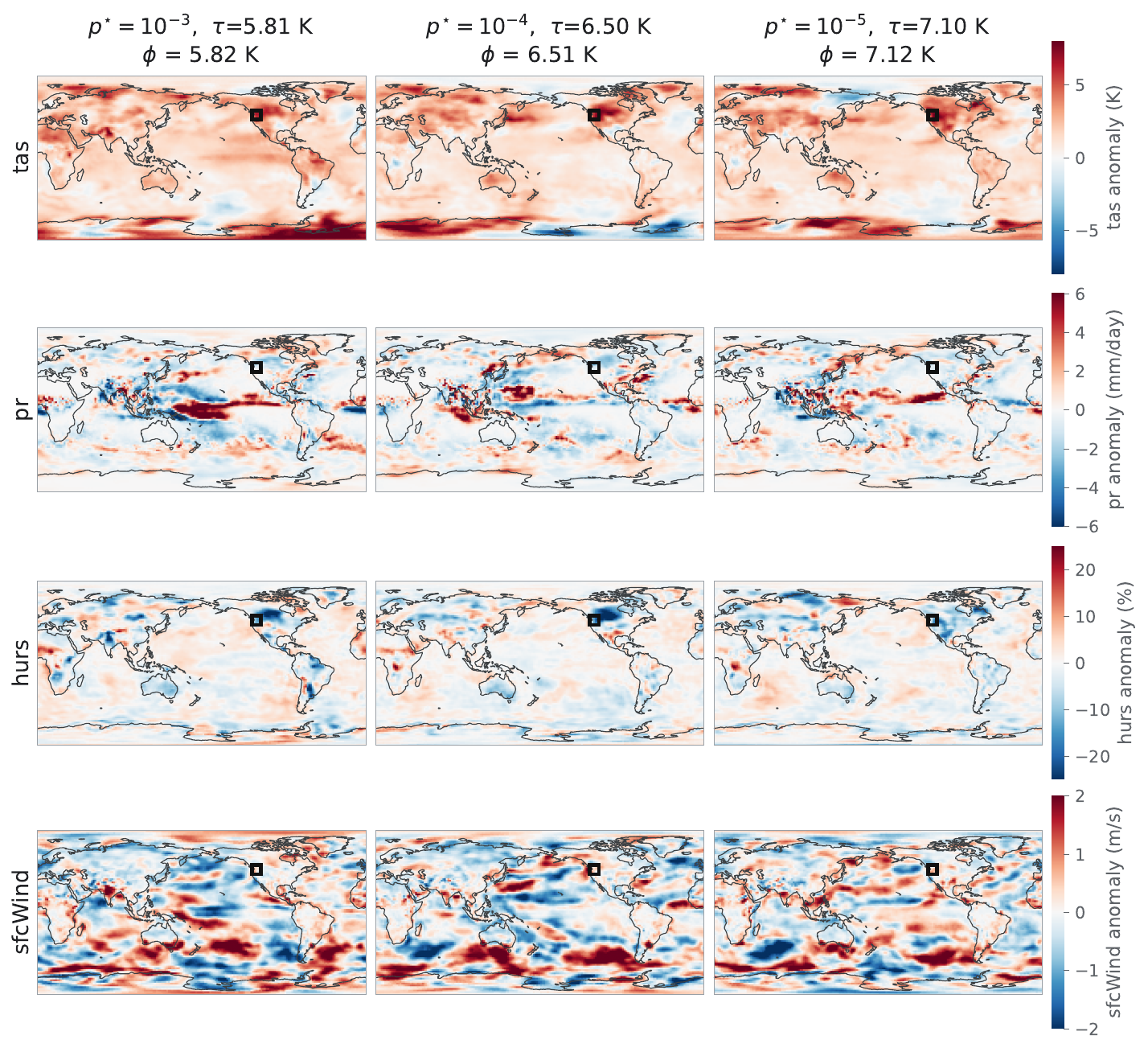}
    \caption{The same fields as Fig.~\ref{fig:climate_plot} on the global domain.}
    \label{fig:climate_plot_diff}
\end{figure}

\FloatBarrier

\subsubsection{Estimating the marginals and the joint}
\label{sec:estimating-marginals-joint}

We expand on the multivariate hazard experiment of Section~\ref{sec:exp_climate}. $d$-values are computed using three guided runs for the guided process and read out as a fraction of joint exceedance $p(\text{hot \& dry})$ from the Monte Carlo reference. For the joint guided runs we ablate between two variants for the CDF reward, one with a joint 2 dimensional CDF across the two variables $\Phi( \texttt{hurs}, \texttt{tas})$ and another as a product of two individual CDFs $\Phi( \texttt{hurs})\Phi(\texttt{tas})$. We replace the $\Pi$GDM's isotropic covariance estimate, $\text{Cov}(x_0 | x_t)$ with the model's by computing the Jacobian Vector Product of $a^\top \Sig a$ \cite{boys2023tweedie}, additionally we use a joint 2-dimensional bandwidth-inflated CDF $\Phi_2(\cdot)$ (Eq.~\eqref{eq:lookahead_multivariate}) as the reward estimated using Drezner's algorithm with a 16-point Gauss-Legendre quadrature \cite{drezner1990computation}.

Table~\ref{tab:dependence_scaling} summarises the probability estimate obtained via the Monte Carlo reference which we compare against. In this setting we define $p$ to be the probability a marginal exceeds a specific threshold, with the joint probability given by $p^2$ under an independence assumption.

\begin{table}[h]
  \centering
  \small
  \caption{Monte Carlo estimate and dependence for the compound hot-dry event at matched marginal quantiles. $p$ represents the compound hot-dry event at
  matched marginal quantiles: at each level $p$ both thresholds are the pool's
  own $p$-quantiles, so $p_{\rm indep} = p^2$ exactly; quoted errors are
  binomial, and the $10^{-4}$ row rests on $125$ joint crossers. }
  \label{tab:dependence_scaling}
  \begin{tabular}{c c c l l r}
    \toprule
    marginal $p$ & $\tau_{\mathrm{tas}}$ (K) & $\tau_{\mathrm{hurs}}$ (\%) & $p_{\mathrm{joint}}$ & $p_{\mathrm{indep}}$ & $d^\star$ \\
    \midrule
    $10^{-1}$ & 3.8908 & $-8.6085$  & $(5.028 \pm 0.007) \times 10^{-2}$ & $1.00 \times 10^{-2}$ & $5.03 \pm 0.01$ \\
$10^{-2}$ & 4.9907 & $-12.7822$ & $(3.138 \pm 0.019) \times 10^{-3}$ & $1.00 \times 10^{-4}$ & $31.4 \pm 0.2$ \\
$10^{-3}$ & 5.8128 & $-15.8382$ & $(2.121 \pm 0.049) \times 10^{-4}$ & $1.00 \times 10^{-6}$ & $212.0 \pm 4.9$ \\
$10^{-4}$ & 6.4977 & $-18.3871$ & $(1.387 \pm 0.124) \times 10^{-5}$ & $1.00 \times 10^{-8}$ & $1384 \pm 124$ \\
    \bottomrule
  \end{tabular}
  
\end{table}

\FloatBarrier

\subsubsection{Factorised versus multivariate guidance}
\label{app:climemu_joint_guidance}
We ablate the two reward variants - factorised (Eq.~\eqref{eq:factorised_lookahead}) and joint (Eq.~\eqref{eq:lookahead_multivariate}) of the compound weather event of  Section~\ref{sec:exp_climate}. Across rarities, we compute the guidance based on a factorised reward (i.e. a product of two independent 1-dimensional CDFs) and its correlated $\Phi_2$ variant. Both variants yield a different guidance drift as well as look-ahead reward $\rsoft$. Additionally, for the Gaussian surrogate of $p(x_0 | x_t) \approx N(\xh, \Sigma)$, where $\xh$ is Tweedie's estimate, we replace the isotropic $\Sigma$ approximation of \cite{song2023pseudoinverseguided}, with a JVP estimate of $\sigma^2A^TJA$, where $A$ is defined by the linear function, $J$ is the Jacobian of the network and $\sigma^2$ is the current noise level (Appendix~\ref{app:multiple_guidances}). The joint estimator costs roughly $1.64 \times$ wall-clock time (Appendix~\ref{app:climemu_cost}) of the factorised but yields a lower variance of $\pE$ at the rarest level, which we discuss below.

The experiment was conducted with identical seeds and settings as the main text and compared against the MC reference. We use the same 16 seeds across both variants.
\begin{table}[!h]
\centering
\small
\setlength{\tabcolsep}{6pt}
\caption{Comparison of two reward estimates for compound hot \& dry events. The difference between the two rewards is most prominent at the rarest level. Notably the $10^{-4}$ reference itself carries an
$8.9\%$ standard error. Setup remains identical to Table~\ref{tab:joint_model_exp}, MC reference:
Table~\ref{tab:dependence_scaling}.}
\label{tab:climate_exp2_full}
\begin{tabular}{@{}l rrr rrr@{}}
\toprule
 & \multicolumn{3}{c}{Factorised reward}
 & \multicolumn{3}{c}{Joint $\Phi_2$ reward} \\
\cmidrule(lr){2-4}\cmidrule(lr){5-7}
marginal $p$ & $\hat p/p^\star$ & CV $\downarrow$ & $\hat d/d^\star$
 & $\hat p/p^\star$ & CV $\downarrow$ & $\hat d/d^\star$ \\
\midrule
$10^{-2}$ & $\textbf{0.990} \pm \textbf{0.024}$ & $0.097$ & $1.07 \pm 0.03$
 & $0.965 \pm 0.016$ & $\textbf{0.068}$ & $\textbf{1.04} \pm \textbf{0.02}$ \\
$10^{-3}$ & $0.952 \pm 0.031$ & $0.132$ & $0.98 \pm 0.03$
 & $\textbf{0.990} \pm \textbf{0.028}$ & $\textbf{0.113}$ & $1.02 \pm 0.03$ \\
$10^{-4}$ & $1.124 \pm 0.087$ & $0.310$ & $1.12 \pm 0.08$
 & $\textbf{1.044} \pm \textbf{0.021}$ & $\textbf{0.080}$ & $\textbf{1.06} \pm \textbf{0.06}$ \\
\bottomrule
\end{tabular}

\end{table}
\FloatBarrier
The joint reward has a lower CV at every level, and the difference is most prominent at the rarest level: at $10^{-4}$ the factorised reward has CV $0.310$ against $0.080$ (variance ratio $14.8$, $p < 0.001$) and its $\hat d/d^\star$ is further from one ($1.12$ against $1.06$). At $10^{-2}$ and $10^{-3}$ the two are statistically indistinguishable, and the cheaper factorised reward reaches a similar net speed-up (Table~\ref{tab:compute_cost}). As $\texttt{tas}$ and $\texttt{hurs}$ are strongly correlated ($\rho \approx 0.72$), a product of marginal rewards undersamples the correlated region where $p(x_i)/q(x_i) > 1$; the few particles that reach it carry large weights, which inflates the variance at deeper rarities.

\subsubsection{Conditional fields for compound events}
\label{app:climemu_joint_fields}
\paragraph{The compound event as a field.}
Figure~\ref{fig:joint_hot_dry} contrasts the conditional mean field under the
hot-only constraint with the one under the joint hot \& dry constraint of
Table~\ref{tab:joint_model_exp}. Guiding on the temperature alone, reduces the humidity across the region, illustrating why the dependency ratio $d > 1$.

\begin{figure}[tbp]
    \centering
    \includegraphics[width=1.0\linewidth]{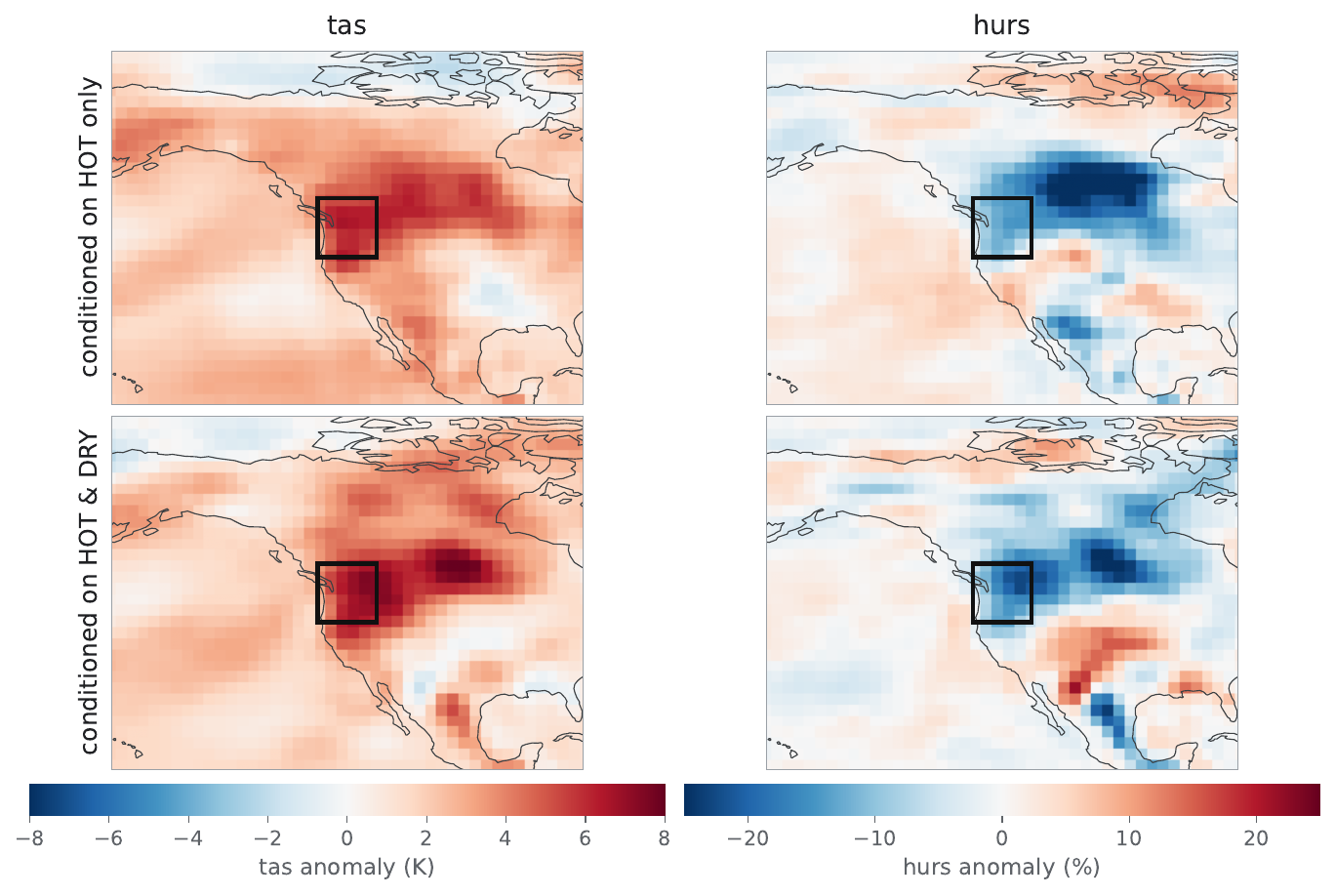}
    \caption{Conditional mean fields for the compound event of Table~\ref{tab:joint_model_exp}, over the PNW box in July at GMST $+1.5$~K.
    \emph{Top:} conditioned on the hot constraint alone. \emph{Bottom:}
    conditioned jointly on hot \& dry. \emph{Left:} near-surface temperature.
    \emph{Right:} relative humidity. Both constraints are at marginal
    $10^{-3}$; the box is the region of interest. Guiding on temperature alone reduces the humidity across a region.}
    \label{fig:joint_hot_dry}
\end{figure}

\FloatBarrier

\subsection{Experiment 2: A changing climate}
\label{app:climemu_warming}
\begin{table}[!h]
\centering
\small
\caption{Warming sweep at the frozen event $\tau = 5.81$~K over the region computed with 16 seeds per point. $p^*$  is obtained from a large Monte Carlo references, which exist only at
$\Delta T \ge 1.5$~K (relative errors $1.1\%$, $3.2\%$, $0.7\%$, $0.13\%$)
and the guided estimates match all four. }%
\label{tab:climate_exp3}
\begin{tabular}{cccccc}
\toprule
$\Delta T$ (K) & $\hat p$ & s.e. $\downarrow$ & CV $\downarrow$ & $p^\star$ & $\hat p/p^\star$ \\%
\midrule
1.5 & $9.79\!\times\!10^{-4}$ & $\pm0.18\!\times\!10^{-4}$ & 0.073 & $1.000\!\times\!10^{-3}$ & $0.978$ \\
2.0 & $1.51\!\times\!10^{-2}$ & $\pm0.03\!\times\!10^{-2}$ & 0.073 & $1.485\!\times\!10^{-2}$ & $1.016$ \\
3.0 & $3.93\!\times\!10^{-1}$ & $\pm0.02\!\times\!10^{-1}$ & 0.025 & $3.932\!\times\!10^{-1}$ & $1.000$ \\
4.0 & $9.51\!\times\!10^{-1}$ & $\pm0.02\!\times\!10^{-1}$ & 0.006 & $9.503\!\times\!10^{-1}$ & $1.001$ \\
\bottomrule
\end{tabular}

\end{table}
Table~\ref{tab:climate_exp3} reports the sweep at a frozen threshold against the large Monte Carlo reference, the generated fields behind it are in Fig.~\ref{fig:climate_change_rare_event_dT}. As the global mean surface temperature increases, the degree of warming across regions varies per region which is expected under the modelling condition of \cite{https://doi.org/10.1029/2025MS005558}. As such, the regional mean temperature is greater than the GMST which is reflected at $\Delta T = 4.0$K, where the estimated $\pE = 0.95$. Table~\ref{tab:climate_exp3_sweep} reports the performance of DireSMC compared against a compute normalised MC estimate. Due to computational constraints we could not validate the estimates at $\Delta T \le 1.2$~K; to obtain a MC probability estimate at a $5\%$ relative error we would require on the order of $10^8/10^7/10^6$ draws from the unconditional model at $\Delta T = 0.0/1.0/1.2$~K respectively.

\begin{table}[!h]
\centering
\small
\caption{Warming sweep at the frozen event $\tau = 5.81$~K over the region computed with 16 seeds per point. We compare against an unguided MC estimate at matched clock time allocating it $4.7\times$ particles, over 16 replicates. Zero counts the number of MC replicates that record no event. See Table~\ref{tab:climate_exp3} for DireSMC validated against a large MC pool. Var. Ratio = $\mathrm{Var}(\hat p_{\rm MC})/\mathrm{Var}(\hat p)$.}
\label{tab:climate_exp3_sweep}
\begin{tabular}{ccccccc}
\toprule
 & \multicolumn{2}{c}{DireSMC} & \multicolumn{3}{c}{Crude MC} & \\
\cmidrule(lr){2-3}\cmidrule(lr){4-6}
$\Delta T$ (K) & $\hat p$ & CV & $\hat p_{\rm MC}$ & CV & Zero & Var. Ratio \\
\midrule
0.0 & $1.06\!\times\!10^{-6}$ & 0.402 & $0$ & --- & 16/16 & N/A \\
0.5 & $1.65\!\times\!10^{-6}$ & 0.214 & $0$ & --- & 16/16 & N/A \\
1.0 & $3.51\!\times\!10^{-5}$ & 0.092 & $1.30\!\times\!10^{-5}$ & 4.000 & 15/16 & $257$ \\
1.2 & $1.46\!\times\!10^{-4}$ & 0.140 & $1.04\!\times\!10^{-4}$ & 1.789 & 11/16 & $83$ \\
1.5 & $9.79\!\times\!10^{-4}$ & 0.073 & $1.03\!\times\!10^{-3}$ & 0.594 & 0/16 & $73$ \\
2.0 & $1.51\!\times\!10^{-2}$ & 0.073 & $1.66\!\times\!10^{-2}$ & 0.138 & 0/16 & $4.4$ \\
2.5 & $1.10\!\times\!10^{-1}$ & 0.049 & $1.14\!\times\!10^{-1}$ & 0.049 & 0/16 & $1.05$ \\
3.0 & $3.93\!\times\!10^{-1}$ & 0.025 & $3.96\!\times\!10^{-1}$ & 0.015 & 0/16 & $0.39$ \\
3.5 & $7.52\!\times\!10^{-1}$ & 0.012 & $7.53\!\times\!10^{-1}$ & 0.010 & 0/16 & $0.65$ \\
4.0 & $9.51\!\times\!10^{-1}$ & 0.006 & $9.50\!\times\!10^{-1}$ & 0.004 & 0/16 & $0.33$ \\
\bottomrule
\end{tabular}
\end{table}

\begin{figure}[!h]
    \centering
    \includegraphics[width=0.95\linewidth]{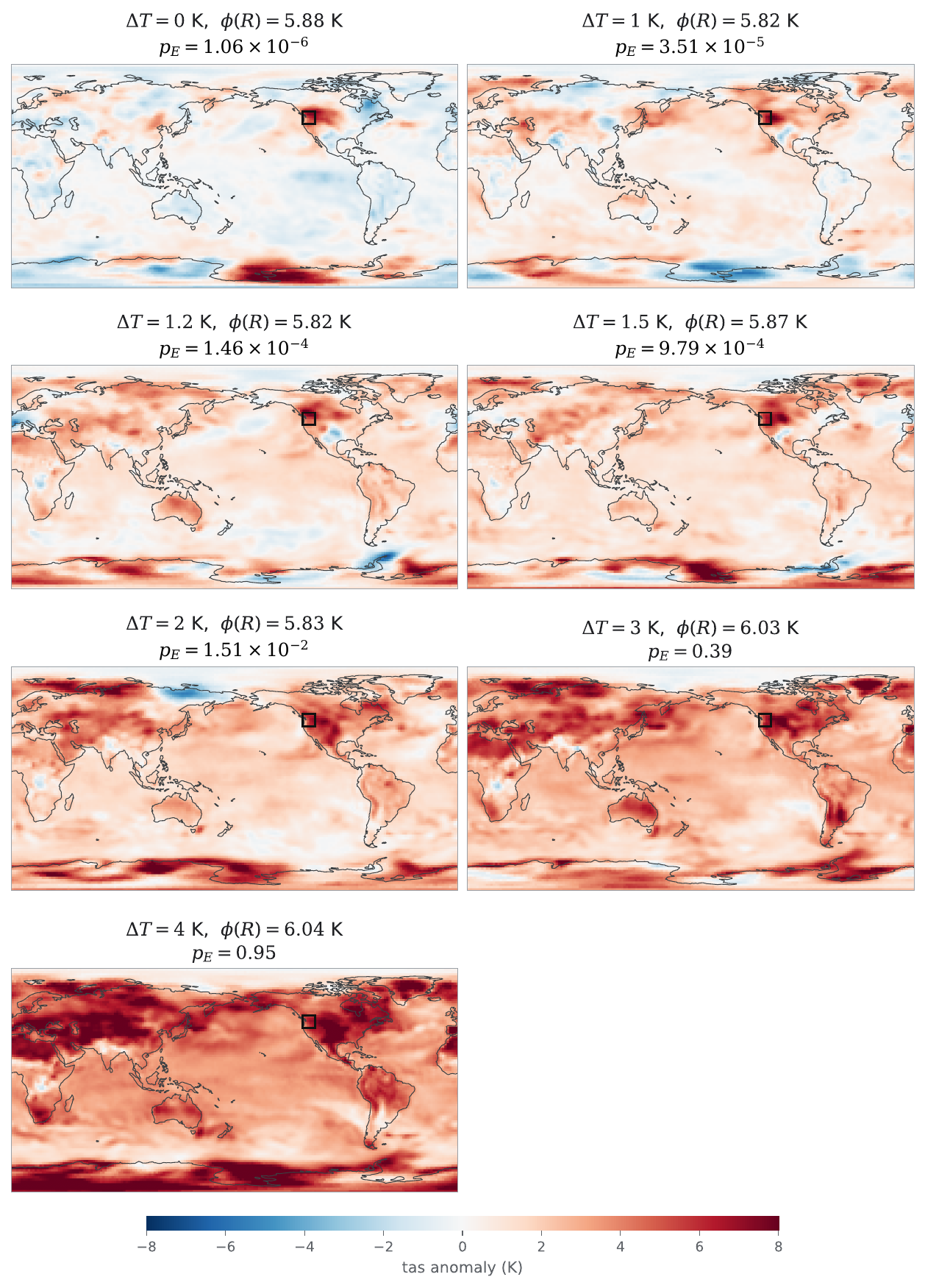}
   \caption{Highest weight samples under the no resampling regime.  We guide the sampler to the same $\tau_{\texttt{tas}} \ge 5.81$K for a range of global average temperatures $\Delta T$. See Table~\ref{tab:climate_exp3} for the qualitative analysis. }
\label{fig:climate_change_rare_event_dT}
\end{figure}

\FloatBarrier

\subsection{Computational cost}
\label{app:climemu_cost}
We measure the net-speed-up of the path-weighted estimator by comparing the net wall-clock time required for each example in 
Table~\ref{tab:compute_cost}. We compare our results against the number of $N_{MC}$ samples required to form a Monte Carlo estimate at equal-CV, discounted by the added cost of computing the guidance. An unguided draw is one denoiser call per step ($0.151$~s per sample at
$K = 200$); the production sampler measures
$4.7\times$ that per sample for a univariate or factorised reward (marginal threshold exceedances and multivariate exceedance under uncorrelated reward), and the correlated variant of the reward measures a larger
$7.7\times$. In general, any $m$-coupled reward requires $m$ vector Jacobian product calls per denoising step, which contributes a linearly growing computational cost to the wall-clock time.

\begin{table}[h]
\centering
\small
\setlength{\tabcolsep}{8pt}
\caption{Net speed-up over crude Monte Carlo at matched CV for $N = 1024$ guided draws against unguided draws. Computed using CVs obtained from
Tables~\ref{tab:marginal_hot_dry}, \ref{tab:joint_model_exp} and~\ref{tab:climate_exp2_full}. The dry $10^{-4}$ entry is low because of one collapsed run (App.~\ref{app:climemu_resampling_ablation}); without it the CV is $0.124$, close to the other $10^{-4}$ cells.}
\label{tab:compute_cost}
\begin{tabular}{@{}l rr rr@{}}
\toprule
 & \multicolumn{2}{c}{Threshold Exceedances} & \multicolumn{2}{c}{Joint hot \& dry} \\
\cmidrule(lr){2-3}\cmidrule(lr){4-5}
$p^*$ & hot & dry & factorised reward & Joint reward \\
\midrule
Per-sample cost & $4.7\times$ & $4.7\times$ & $4.7\times$ & $7.7\times$ \\
\midrule
\addlinespace[4pt]
$10^{-1}$ & $0.7\times$  & $1.0\times$  & $1.3\times$  & $1.2\times$ \\
$10^{-2}$ & $8.7\times$  & $3.9\times$  & $7.0\times$  & $8.6\times$ \\
$10^{-3}$ & $34\times$   & $40\times$   & $56\times$   & $47\times$ \\
$10^{-4}$ & $311\times$  & $44\times$   & $156\times$  & $1413\times$ \\
$10^{-5}$ & $1246\times$ & $1655\times$ & ---          & --- \\
\bottomrule
\end{tabular}

\end{table}
We find that for most of the generations, the net speedup of the path-weighted sampler out-weighs the added clock time for guided steps. Additionally, at deeper rarities where Monte Carlo has zero samples within the extreme event set of interest (e.g. for very rare events), the method becomes a point of feasibility rather than speed-up alone. These speed-ups scale with the quality of the guidance itself.

\subsection{Hyperparameter ablation}

We test a range of schedules of $(\lambda(\bar{\sigma}), \delta(\bar{\sigma}))$ on the threshold exceedance problem at a marginal $p^*  = 10^{-3}$. We caveat that the parameters we present were chosen heuristically, and we do not independently tune the hyperparameters for any other setting presented in our paper. While we tune on the marginal hot and dry $\tau_1$ ($p^* = 10^{-3}$) problems, we believe there are further performance improvements that can be made by better schedule tuning. All ablations are done with a single change to the default parameters described in App.~\ref{app:climemu_impl}. We report $\hpE/p^*$ against the Monte Carlo anchor, with errors displayed as the standard errors over the 16 seeds, which are shared across all arms.
\begin{table}[t]
  \centering
  \setlength{\tabcolsep}{4pt}
  \caption{One-knob ablation of the guidance schedules at $p^\ast = 10^{-3}$. We vary a single parameter reference in Table~\ref{tab:climemu-hparams}; the same $16$ seeds are used in every row. Collapsed runs are included: one hot run at $\lambda_{\max} = 2.0$, and two hot and two dry runs at $\delta_{\mathrm{hi}} = 0.25\,\mathrm{SD}$.}
    \label{tab:climemu-recipe-schedule}

  \begin{tabular}{lll rr rr}

    \toprule
    & & & \multicolumn{2}{c}{Hot (\texttt{tas})} & \multicolumn{2}{c}{Dry (\texttt{hurs})} \\
    \cmidrule(lr){4-5}\cmidrule(lr){6-7}
    Knob & Deployed & Ablated & $\hat p/p^\ast$ & CV & $\hat p/p^\ast$ & CV \\
    \midrule
    \emph{Deployed recipe} & & & $0.982 \pm 0.019$ & $0.078$ & $0.996 \pm 0.018$ & $0.072$ \\
    \midrule
    \multicolumn{7}{l}{Tempering schedule $\lambda(\bar\sigma)$}\\
    Peak $\lambda_{\max}$        & $1.25$ & $1.0$ (no overshoot) & $1.010 \pm 0.045$ & $0.180$ & $0.984 \pm 0.023$ & $0.095$ \\
    Peak $\lambda_{\max}$        & $1.25$ & $1.5$                & $0.984 \pm 0.018$ & $0.071$ & $0.984 \pm 0.022$ & $0.090$ \\
    Peak $\lambda_{\max}$        & $1.25$ & $2.0$                & $1.025 \pm 0.051$ & $0.199$ & $0.942 \pm 0.037$ & $0.159$ \\
    Onset $\bar\sigma_{\mathrm{on}}$ & $40$ & $20$ (later)      & $0.937 \pm 0.025$ & $0.106$ & $0.969 \pm 0.034$ & $0.139$ \\
    Onset $\bar\sigma_{\mathrm{on}}$ & $40$ & $80$ (earlier)    & $0.989 \pm 0.022$ & $0.088$ & $0.967 \pm 0.023$ & $0.094$ \\
    \midrule
    \multicolumn{7}{l}{Smoothing width $\delta(\bar\sigma)$}\\
    Transport $\delta_{\mathrm{hi}}$ & $0.5\,\mathrm{SD}$  & $0.25\,\mathrm{SD}$ & $1.088 \pm 0.112$ & $0.411$ & $0.777 \pm 0.084$ & $0.431$ \\
    Floor $\delta_{\mathrm{lo}}$     & $0.01\,\mathrm{SD}$ & $0.05\,\mathrm{SD}$ & $1.009 \pm 0.019$ & $0.074$ & $1.023 \pm 0.014$ & $0.055$ \\
    \bottomrule
  \end{tabular}
\end{table}
\FloatBarrier

\section{Metropolis-adjusted Langevin corrections}
\label{sec:madm_details}
 
We use Metropolis-Adjusted Diffusion Model (MADM, \cite{lam2026metropolis}) for the rejuvenation kernel $M_k$ of Section~\ref{sec:sequential}. 
From the predictor-corrector framework of \cite{song2021scorebasedgenerativemodelingstochastic} the guided score can be used as a predictor and $M$ MADM steps act as our corrector, targeting the tilted marginals $\pi_k$ of Eq.~\eqref{eq:target_sequence}.
The unadjusted Langevin algorithm (ULA) update is,
\begin{align}
\label{eq:mala_proposal}
    \tilde{x} \;=\; x + \tfrac{\rho}{2}\, g(x) + \sqrt{\rho}\, z, && z \sim \N(0, I), && g(x) := \nabla_x \log \pi_k(x) = \sth(x, t_k) + \hg{t_k}(x).
\end{align}
Here $g$ is the guided score. 
ULA accepts all proposals $x \rightarrow \tilde{x}$ and is biased for every finite step size $\rho$, and only leaves $\pi_k$ invariant in the $\rho \to 0$ limit.

A Metropolis-Hastings correction removes this bias while keeping the same proposal: accept $\tilde x$ with probability $\mathcal{A}(x, \tilde{x}) = 1 \wedge \big[ \Gamma_\pi(x, \tilde{x})\, H(x, \tilde{x}) \big]$, otherwise stay at $x$, where the acceptance ratio splits into a target ratio $\Gamma_\pi$ and a proposal ratio $H$. 
The proposal ratio is Gaussian and closed-form,
\begin{equation}
\label{eq:madm_proposal}
    \log H(x, \tilde{x}) \;=\; \frac{1}{2\rho} \big\| \tilde{x} - x - \tfrac{\rho}{2} g(x) \big\|^2 \;-\; \frac{1}{2\rho} \big\| x - \tilde{x} - \tfrac{\rho}{2} g(\tilde{x}) \big\|^2,
\end{equation}
while the target ratio is exactly the tilted-density ratio at $t_k$,
\begin{equation}
\label{app:madm_gamma_main}
    \Gamma_\pi(x, \tilde{x}) \;=\; \frac{\pi_k(\tilde{x})}{\pi_k(x)} \;\propto\; \frac{p_{t_k}(\tilde{x})\, e^{r_{t_k}(\tilde{x})}}{p_{t_k}(x)\, e^{r_{t_k}(x)}}.
\end{equation}
Only $\Gamma_\pi$ remains intractable for a diffusion model: the reward factor $e^{r_{t_k}}$ is available in closed form, but we do not have direct access to the base marginal $p_{t_k}$, only its score $\sth(x, t_k)$. Writing the target ratio as
\begin{equation}
    \log \Gamma_\pi(x, \tilde{x}) \;=\; \underbrace{\log\frac{p_{t_k}(\tilde{x})}{p_{t_k}(x)}}_{\text{intractable}} \;+\; \underbrace{\big(r_{t_k}(\tilde{x}) - r_{t_k}(x)\big)}_{\text{exact}}
\end{equation}
isolates the one hard piece: the reward term is known in closed form and needs no approximation at all. MADM \cite{lam2026metropolis} sidesteps the marginal-density problem by computing the remaining $p_{t_k}$-ratio, under a well trained network where $\sth = \nabla_x \log p_{t_k}$, using a line-integral,
\begin{equation}
\label{eq:madm_line_integral}
    \log \frac{p_{t_k}(\tilde{x})}{p_{t_k}(x)} \;=\; \int_0^1 \big\langle\, \sth\big(x + u(\tilde{x} - x),\, t_k\big),\; \tilde{x} - x \,\big\rangle\, \dd u,
\end{equation}
valid when $\sth$ is a gradient field, which holds exactly for an analytic score but is not enforced by the denoising objective for a learned network.
This is the conservation error of the learned score, which is small in high noise regions, where we apply rejuvenation. 
The line integral can be approximated by quadrature, for instance by Simpson's rule at the midpoint $\bar{x} = (x + \tilde{x})/2$,
\begin{equation}
\label{eq:madm_simpson}
    \widehat{\log \Gamma_\pi} \;=\; \tfrac{1}{6} \big\langle\, \sth(x,t_k) + 4\, \sth(\bar{x},t_k) + \sth(\tilde{x},t_k),\; \tilde{x} - x \,\big\rangle \;+\; \big(r_{t_k}(\tilde{x}) - r_{t_k}(x)\big),
\end{equation}
giving the practical acceptance probability $\widehat{\mathcal{A}}(x,\tilde x) = 1 \wedge \exp\big(\widehat{\log \Gamma_\pi}\big) H(x, \tilde{x})$. 
The endpoint scores $\sth(x,t_k), \sth(\tilde x,t_k)$ and rewards $r_{t_k}(x), r_{t_k}(\tilde x)$ are shared with the proposal and its acceptance test, so the quadrature costs additional \emph{unconditional} score evaluations.

The quadrature carries a truncation error of order $\mathcal{O}(\rho^{5/2})$ on the $p_{t_k}$-ratio, so the adjusted kernel is not exactly $\pi_k$-invariant but is accurate to far higher order than the $\mathcal{O}(\rho)$ bias of the unadjusted chain \cite{lam2026metropolis}.
An exact alternative replaces the Metropolis rule with Barker acceptance, at the cost of additional score evaluations \cite{lam2026metropolis}; but we use the Simpson-rule variant throughout.

We can summarise a single MADM move under the Simpson's-rule variant of \cite{lam2026metropolis} as (i) propose a transition with
Eq.~\eqref{eq:mala_proposal} and (ii) accept the proposal with an acceptance probability approximated with Eqs.~\eqref{eq:madm_proposal} and \eqref{eq:madm_simpson}.
Algorithm~\ref{alg:madm} states this as the rejuvenation kernel $M_k$ called by Algorithm~\ref{alg:rne_ais}.

\begin{algorithm}
\caption{One MADM rejuvenation move at level $k$, targeting $\pi_k \propto p_{t_k} e^{r_{t_k}}$.}
\label{alg:madm}
\begin{algorithmic}[1]
\Require state $x$, level $k$, score $\sth(\cdot,t_k)$, reward $r_{t_k}$, step size $\rho$.
\State $g(x) \gets \sth(x,t_k) + \nabla_x r_{t_k}(x)$ \Comment{Guided score}
\State $\tilde{x} \gets x + \tfrac{\rho}{2}\,g(x) + \sqrt{\rho}\,z$, \quad $z\sim\N(0,I)$ \Comment{Langevin proposal, Eq.~\eqref{eq:mala_proposal}}
\State $\log\widehat{\mathcal{A}} \gets \min\big\{0,\; \widehat{\log\Gamma_\pi}(x,\tilde{x}) + \log H(x,\tilde{x})\big\}$ \Comment{Eqs.~\eqref{eq:madm_simpson} and \eqref{eq:madm_proposal}}
\State With probability $\widehat{\mathcal{A}}$ set $x \gets \tilde{x}$; otherwise keep $x$.
\State \Return $x$
\end{algorithmic}
\end{algorithm}
\FloatBarrier

\section{Related Works Extended}
\label{app:manshausen_comparison}
We compare our method for generating the rare event probabilities against the method introduced in \cite{manshausen2026accurateextremeeventlikelihoods}. 
Both methods guide a pre-trained diffusion model towards the extremes, and reweight the resulting samples to estimate $\pE$ under the model density $p_\theta$. 
The methods differ at the integrator.
The method of \cite{manshausen2026accurateextremeeventlikelihoods} samples with an ODE and estimates the rare-event probability using the Probability Flow ODE (PF-ODE) \cite{song2021scorebasedgenerativemodelingstochastic} for both unguided and guided processes, which allows them to weight each sample by its exact terminal likelihood.  
We sample with a reverse SDE and weight each trajectory by the Girsanov density of the guided path measure, computed against the unguided process. Neither object is available to the other: a stochastic integrator has no PF-ODE likelihood, and a deterministic flow induces path measures that are mutually singular, so Girsanov's theorem is not available.

Resampling mid-trajectory is only available to the path-weights formulation as a terminal-only weight defines no intermediate targets to resample against. During the denoising process, we measure how far the sampled trajectory $\Qm$ deviates from one defined by the tilted process $\Pmr$, and adaptively resample within a Sequential Monte Carlo scheme. Under the operating guidance of \cite{manshausen2026accurateextremeeventlikelihoods}, this yields us a far larger fraction of particles from the rare-event set $E$.

\paragraph{Notation.} The \emph{arrow} on a density is the direction in which it is evaluated:
backward ($\leftarrow$) means along the sampling run itself
(i.e. from a Gaussian prior $\to$ the data distribution); forward ($\rightarrow$) is the noising direction (i.e. data $\to$ prior) obtained via integrating the ODE
from the generated sample back to noise. $K$ is the number of
integrator steps; arrow-free symbols ($\pz$, $\qz$) are the continuous-time
objects, untouched by the discretisation.  The key point is that, at a finite $K$, the two directions assign \emph{different} values to the same density (dependent on the integrator of choice).
\begin{center}
\small
\begin{tabular}{@{}l p{0.72\linewidth}@{}}
\toprule
$\pz,\ \qz$ & model's true density and its guided (tilted) counterpart; target $\pE=\int_{E}\pz$\\
$\qbwd$ & discretised density of the guided draws $x \sim \qbwd$.\\
$\qfwd$ & the same guided density evaluated by the forward ODE run - their denominator\\
$\pbwd$ & discretised density of the \emph{unguided} process - our estimator targets this quantity\\
$\pfwd$ & the unguided density evaluated by a forward ODE - their numerator\\
$\Ez=\log(\pfwd/\pbwd)$ & \emph{arrow gap} of the unguided density: forward and backward evaluation of the same process.\\
$\Eg=\log(\qfwd/\qbwd)$ & arrow gap of the guided density\\
$\Dmap=\log(\pbwd/\pz)$ & the discrete chain's gap to the model\\
\bottomrule
\end{tabular}
\end{center}
The three discrepancies $\Ez,\Eg,\Dmap$ are all $O(1/K)$ discretisation
quantities and vanish as $K\to\infty$.

\subsection{The PF-ODE Estimator}
\label{app:manshausen_method}

Both methods target the same quantity, $\pE$ by importance sampling with a guided proposal: draw $x\sim\qbwd$, then reweight,
\begin{equation}
\label{eq:is_identity_pfode}
    \pE \;=\; \int \pz\,\inds{E
}\,\dd x
    \;=\; \E_{x\sim\qbwd}\!\left[\frac{\pz}{\qbwd}\,\inds{E}\right].
\end{equation}
The two methods differ only in how this ratio is estimated. 
We compute the Girsanov path ratio $\dd\Pm/\dd\Qm$ using transition kernels which is accumulated during the run itself. It is a functional of the whole trajectory $X_{[0,1]}$, not of its endpoint alone.
The method of \cite{manshausen2026accurateextremeeventlikelihoods} is a ratio of two \emph{forward-evaluated} endpoint densities:
\begin{equation}
\label{eq:ours_vs_theirs}
    \text{ours: } \E_{X_{[0,1]}\sim\Qm}\!\Big[\tfrac{\dd\Pm}{\dd\Qm}(X_{[0,1]})\,\inds{E}(X_0)\Big],
    \qquad
    \text{\cite{manshausen2026accurateextremeeventlikelihoods} :  } \E_{x\sim\qbwd}\!\big[\tfrac{\pfwd}{\qfwd}(x)\,\inds{E}(x)\big].
\end{equation}
Both estimators target the integral of Eq.~\eqref{eq:is_identity_pfode}, ours by the tower property in conditional expectations, $\E_\Qm\big[\dd\Pm/\dd\Qm \,\big|\, X_0\big] = \pbwd(X_0)/\qbwd(X_0)$ and theirs by the ratio $\pfwd/\qfwd \approx \pz/\qbwd$. Our path weight uses the backward or denoising process, simulated under the guided chain. Note that importance sampling asks us to divide by the density of the draws, i.e. the \emph{backward} object $\qbwd$ instead of $\qfwd$. We note that both estimators need not have the same variance \cite{del2006sequential, chopin2020introduction}.

The method of \cite{manshausen2026accurateextremeeventlikelihoods} first generates a sample by integrating a deterministic ODE from $t=1$ down to data at $t=0$,
\begin{equation}
    x_0 \;=\; x_1 \;-\; \int_0^1
    \Big(f_t\,x_t \;-\; \tfrac{1}{2}\sigma_t^2
    \big(\sth(x_t,t) + \gamma\,h_t(x_t) \ind{t \in \Delta}\big)\Big)\,\dd t,
\end{equation}
where $h_t$ is their classifier guidance term and $\gamma$ is their guidance strength. 
After sampling, samples that belong to the rare event set of interest $x_E \sim \qbwd \in E$ are selected and their likelihoods are evaluated: from each sample $x_E$, they re-integrate the same dynamics in the noising direction $t = 0\to1$, accumulating the integral,
\begin{equation}
\label{eq:cnf_identity}
    \log\pfwd(x_E) \;=\; \log p_1(x_1)
    \;+\; \int_0^1 \Big(d\,f_t
    \;-\; \tfrac{1}{2}\sigma_t^2\,\nabla\!\cdot \sth(x_t,t)\Big)\,\dd t,
\end{equation}
where $d$ is the dimension and $x_1$ the endpoint of the re-integration (noise), $\log \qfwd$ is obtained similarly via replacing the score in Eq.~\ref{eq:cnf_identity} with its guided variant, $\sth \to \sth + \gamma h_t(x_t)\ind{t \in \Delta}$.
Their importance weight is given by,
\begin{equation}
\label{eq:their_logweight}
\begin{split}
    \log\frac{p_1(x_1)}{p_1(x_1^{\gamma})} 
    - \frac{1}{2}\int_0^1 \sigma_t^2\,
    \Big(\nabla\!\cdot \sth(x_t,t)
    \;-\; \nabla\!\cdot \sth(x_t^{\gamma},t)\Big)\,\dd t 
    + \frac{\gamma}{2}\int_{\Delta} \sigma_t^2\,
    \nabla\!\cdot h_t(x_t^{\gamma})\,\dd t,
\end{split}
\end{equation}
where $x_t$ and $x_t^{\gamma}$ are the unguided and guided re-integration trajectories from the same sample $x_E$, with endpoints $x_1$ and
$x_1^{\gamma}$.

\paragraph{The assumption of \cite{manshausen2026accurateextremeeventlikelihoods}.} The weight is exact iff (i) $\pfwd = \pz$, or when the forward density equals the true model density, where $\Dmap = 0$, and (ii) $\qfwd = \qbwd$, where the forward re-integration of the guided path reproduces the density of the draws, i.e, $\Eg = 0$. Neither holds at finite $K$ for the \emph{explicit} Euler integrator used, as the reverse of this integrator is the \emph{implicit} Euler which is not used. 
Hence the forward density $\qfwd$ does not retrace the same trajectory as the backward process $\qbwd$.

The bias (the expectation over guided draws in
$E$) of this is given by,
\begin{equation}
\label{eq:manshausen_bias}
    \frac{\hpE}{\pE}
    \;=\; \E\!\left[\exp\big(\Ez-\Eg+\Dmap\big)\right],
\end{equation}
which is unbiased iff the right-hand side is one. This happens trivially, when $\Ez-\Eg=0$ (i.e. no guidance)  \emph{and} $\Dmap=0$, i.e.\ $\pbwd=\pz$. The non-trivial solution requires $\pbwd = \pfwd = \pz$ and $\qfwd=\qbwd$, such that $\Ez = \Eg = 0$ which occurs exactly at $K\to\infty$. 
In a non-analytical setting, $\Dmap$ is unknown and cannot be evaluated, hence, both methods assume $\pbwd = \pz$.  
At finite $K$, an accurate estimator therefore requires the three terms to cancel. 
Within the finite limit, the system becomes unbiased if $\Eg = \Ez + \Dmap$ such that individual biases cancel under addition. 
Our counterpart is a single condition, $\pbwd\to \pz$: the law of the $K$-step chain must approach the model's, or $\Dmap \to 0$. 
We too need large $K$ for this to be satisfied. In the absence of $\pz$ and $\Dmap$, our condition is trivially satisfied, whereas \cite{manshausen2026accurateextremeeventlikelihoods} requires $\Ez = \Eg$ or the cancellation of individual biases.

\subsubsection{Empirical results}
As the laws induced by the discretised ODE and SDE integrators need not coincide generally for a learned score, see \citet{deveney2025closing}, we evaluate the bias using analytical scores where the true rare-event probability $p_0[E]$ is known without any discretisation bias.  We compare these methods using the 2D GMM threshold exceedance problem at $\tau = 10$. As both methods use first order integrators, we use the exponential integrator for both ODE and SDE variants with analytical scores and Doob's guidance on the tested equal $\ell$ discretisation (see App.~\ref{app:em_bias}). In the ODE setting of \cite{manshausen2026accurateextremeeventlikelihoods}, this is equivalent to the DDIM \cite{song2020denoising, lu2022dpm, gonzalez2023seeds} integrator, but under the EDM's parametrisation \cite{karras2022elucidatingdesignspacediffusionbased} this corresponds to the Euler integrator used in \cite{manshausen2026accurateextremeeventlikelihoods}. Additionally, we clip the guidance to ensure that the crossing fraction remains ~$10\%$ for both methods (under no resampling) which mirrors the operational regime of \cite{manshausen2026accurateextremeeventlikelihoods}.  
We calibrate the clipping for both methods at each $K$ separately.
\paragraph{Remark} To isolate the effect of discretisation bias, we provide the true $\nabla \cdot s(x_t, t)$ in Eq.~\eqref{eq:their_logweight} for the PF-ODE computation of \cite{manshausen2026accurateextremeeventlikelihoods} instead of approximating it via the Hutchinson trace estimator. However, in practice the trace estimator contributes variance to the estimate, shrinking with the number of probes in exchange for a larger computational overhead.

\begin{table}[h]
\centering\small
\caption{Signed percentage bias for the rare-event probability estimated against the analytical ground truth (obtained exactly without discretisation) for the 2D GMM threshold exceedance problem at $\tau = 10$. Our method tracks a lower bias at finite discretisation $K$ compared to  \cite{manshausen2026accurateextremeeventlikelihoods}. Positive $\%$ reflects an overestimate and are presented as a percentage against the true rare-event probability.}
\begin{tabular}{llrr}
\toprule
guidance & $K$ & $\pfwd/\qfwd$ (\cite{manshausen2026accurateextremeeventlikelihoods}) $\downarrow$ & ours $\downarrow$\\
\midrule
exact Doob & 50  & $+155.7\%{\scriptstyle\pm0.3}$ & $\mathbf{-2.2\%}{\scriptstyle\pm0.7}$\\
exact Doob & 100 & $+65.3\%{\scriptstyle\pm0.2}$  & $\mathbf{-2.6\%}{\scriptstyle\pm0.4}$\\
exact Doob & 200 & $+31.7\%{\scriptstyle\pm0.2}$  & $\mathbf{-1.9\%}{\scriptstyle\pm0.3}$\\
exact Doob & 500 & $+14.7\%{\scriptstyle\pm0.1}$  & $\mathbf{-1.0\%}{\scriptstyle\pm0.2}$\\
\midrule
clipped Doob, ${\sim}10\%$ cross & 50  & $+42.8\%{\scriptstyle\pm4.2}$ & $\mathbf{+2.7\%}{\scriptstyle\pm2.5}$\\
clipped Doob, ${\sim}10\%$ cross & 100 & $+21.2\%{\scriptstyle\pm3.4}$ & $\mathbf{-2.1\%}{\scriptstyle\pm1.9}$\\
clipped Doob, ${\sim}10\%$ cross & 200 & $+10.7\%{\scriptstyle\pm3.0}$ & $\mathbf{-1.6\%}{\scriptstyle\pm1.8}$\\
clipped Doob, ${\sim}10\%$ cross & 500 & $+4.4\%{\scriptstyle\pm2.8}$  & $\mathbf{-1.9\%}{\scriptstyle\pm2.2}$\\
\bottomrule
\end{tabular}
\label{tab:us_v_manshausen_comparision}

\end{table}

Under the exact Doob's guidance, the discretisation bias from $\pbwd/\qbwd$ survives, as the guidance is employed throughout the trajectory. 
We report the discretisation bias against $K$ in Table~\ref{tab:us_v_manshausen_comparision}. 
Exact guidance highlights that this is a discretisation bias itself which converges as $\pfwd\rightarrow \pbwd$ and $\Dmap = 0$. 
For our method, any error lives in $\Dmap$. 
The discretisation bias of \cite{manshausen2026accurateextremeeventlikelihoods} is dependent on a specific choice of $\gamma$ and $h_t$, reflected in $\Eg$.  
Hence the two proposals in Table~\ref{tab:us_v_manshausen_comparision} give a different estimate for $\pE$. 
Additionally, as we adaptively resample, our crossing fraction remains high even with the clipped guidance at $86.5\%/80.4\%/81.3\%/79.4\%$ (averaged across seeds) for $K = 50/100/200/500$ respectively compared to theirs at $10\%$, which results in a higher particle population concentration on the event of interest. We need not discard $90\%$ of samples in this regime, and this is reflected in a lower seed-level standard error of our method. 

\begin{table}[h]
\centering
\caption{Network evaluations per particle. For a $K$ step denoising process, with $R$ rejuvenation steps and $M$ MCMC moves of Algorithm~\ref{alg:rne_ais}. $P$ denotes the number of Hutchinson's probes in Eq.~\eqref{eq:their_logweight}.}
\label{tab:cost}
\small
\begin{tabular}{lcccc}
\toprule
 & score function & $r$ & $\nabla r$ & JVP \\
\midrule
ours (Alg.~\ref{alg:rne_ais}) & $K{+}2RM$ & $K{+}RM$ & $K{+}RM$ & -- \\
odds ratio (Eq.~\ref{eq:their_logweight}) & $3K$ & $2K$ & $2K$ & $2PK$ \\
\bottomrule
\end{tabular}
\end{table}
\textbf{Remark on computational costs.} Let $N$ be the number of particles, $R$ the number of resampling steps, $M$ the number of MCMC move of Algorithm~\ref{alg:rne_ais}, and $P$ the number of Hutchinson trace estimator probes used per step in Eq.~\eqref{eq:their_logweight}. Both samplers pay a different number of function evaluations per particle, and have a different compute budget. Table~\ref{tab:cost} summarises this. $K$ is fixed and set by the discretisation bias tolerance versus compute. Which  has the same weak-order convergence for both methods (first-order) \cite{gonzalez2023seeds, manshausen2026accurateextremeeventlikelihoods}.

As an illustrative example, take the $\tau = 10$ example of Section~\ref{sec:experiments}, coupled with the guidance term of Eq.~\eqref{eq:lookahead_reward} and the $\Pi$GDM $\Sigma_t$ proxy \cite{song2023pseudoinverseguided}. Averaged over 24 seeds, resampling is triggered $R=3.5$ times, followed by $M = 3$ MADM rejuvenation steps. If we set $P=1$, which is a conservative estimate as typically a larger $P$ results in a lower variance estimate of the Jacobian trace. If we were to count every single score evaluation, reward and reward-gradient evaluation as one call, our estimator uses $2K + 3RM \approx 2032$ function calls per particle, of which rejuvenation moves account for $32$, whereas the odds-ratio estimator with a single probe uses $5 K + 2PK = 7000$. As a reward and its gradient calls are often paired together, at an equal step count \citet{manshausen2026accurateextremeeventlikelihoods} receive $N/3.5$ particles at the same compute budget, which is the comparison drawn in Fig~\ref{fig:table_1_replacement}.

\subsubsection{Detailed Comparison}
\label{app:detailed_comparision_fig1}

Below we collate the tables used to generate Fig.~\ref{fig:table_1_replacement}. In the first two panels, we use a learned score network, wheras the final uses an analytical score with Doob's transition kernels. We motivate why this is selected in Section~\ref{sec:exp_2dgmm}. All hyperparameters are kept the same unless they are ablated. The only exception is that each arm uses the best $\lambda(t)$ tempering parameter as selected in App.~\ref{app:gmm_schedule_ablations}. 

\paragraph{CV vs N.} We keep the same $K = 1000$ and ablate at $\tau = 10$ with learned scores. Table~\ref{tab:fig2_cv_vs_n} summarises these results, where we find that the CV of DireSMC (w/ rejuvenation) outperforms \citet{manshausen2026accurateextremeeventlikelihoods} at all $K$.
\begin{table}[h]
\centering
\small
\setlength{\tabcolsep}{5pt}
\caption{CV of $\hat p$ against particle count $N$ (Fig.~\ref{fig:table_1_replacement}, left), at $\tau=10$, $K=1000$, with the learned score, over 24 seeds. Measured against model's tail mass $p^*$ (Table~\ref{tab:step1_recovery}).}
\label{tab:fig2_cv_vs_n}
\begin{tabular}{@{}r c ccc@{}}
\toprule
 & & \multicolumn{3}{c}{DireSMC} \\
\cmidrule(l){3-5}
$N$ & \citet{manshausen2026accurateextremeeventlikelihoods} & base & + resampling & + MADM \\
\midrule
100  & 0.631 & 0.999 & 0.611 & \textbf{0.321} \\
250  & 0.385 & 0.506 & 0.394 & \textbf{0.184} \\
500  & 0.259 & 0.349 & 0.303 & \textbf{0.134} \\
750  & 0.206 & 0.266 & 0.168 & \textbf{0.082} \\
1000 & 0.167 & 0.213 & 0.166 & \textbf{0.079} \\
2000 & 0.140 & 0.184 & 0.147 & \textbf{0.074} \\
4000 & 0.094 & 0.161 & 0.076 & \textbf{0.057} \\
\bottomrule
\end{tabular}
\end{table}

\textbf{CV vs Rarity.} We keep the same $K = 1000$ and $N = 1000$ and ablate the rarities using approxiamte guidance and learned scores. We discount the number of particles available based on number of function evaluations compared to DireSMC base. For \citet{manshausen2026accurateextremeeventlikelihoods} this is $3.5\times$ less particles (286) on this problem. Table~\ref{tab:fig2_cv_vs_rarity} summarises these results, where we find that at matched compute all variants of DireSMC outperform \citet{manshausen2026accurateextremeeventlikelihoods}.

\begin{table}[h]
\centering
\small
\setlength{\tabcolsep}{6pt}
\caption{CV of $\hat{p}$ relative to models threshold exceedance probability $p^\star$ (Table~\ref{tab:step1_recovery}).
$\hat p$ across rarities at equal compute (Fig.~\ref{fig:table_1_replacement}, middle), with $K=1000$, the learned score and 24 seeds. We use $N=1000$ and \citet{manshausen2026accurateextremeeventlikelihoods} use $n=N/3.5=286$ (Table~\ref{tab:cost}). All variants of DireSMC outperform \citet{manshausen2026accurateextremeeventlikelihoods}.}
\label{tab:fig2_cv_vs_rarity}
\begin{tabular}{@{}c c c ccc@{}}
\toprule
 & & & \multicolumn{3}{c}{DireSMC} \\
\cmidrule(l){4-6}
$\tau$ & $p^\star$ & \citet{manshausen2026accurateextremeeventlikelihoods} & base & + resampling & + MADM \\
\midrule
7  & $7.13\times10^{-3}$ & 0.129 & 0.070 & 0.093 & \textbf{0.060} \\
8  & $1.38\times10^{-3}$ & 0.167 & 0.092 & 0.093 & \textbf{0.064} \\
9  & $1.87\times10^{-4}$ & 0.221 & 0.127 & 0.159 & \textbf{0.090} \\
10 & $1.72\times10^{-5}$ & 0.349 & 0.213 & 0.166 & \textbf{0.079} \\
\bottomrule
\end{tabular}
\end{table}

\textbf{Bias vs K.} We measure the discretisation bias using analytical scores, Doob's transition kernels of DireSMC with no resampling, selected to isolate the errors introduced in Eq.~\ref{eq:their_logweight}. Table~\ref{tab:fig2_bias_vs_k} summarises these results.

\begin{table}[h]
\centering
\small
\setlength{\tabcolsep}{6pt}
\caption{Discretisation bias against the step count $K$ (Fig.~\ref{fig:table_1_replacement}, right), shown as the mean $\hat p/p^\star \pm$ s.e.\ over 24 seeds. Computed using analytical socres, Doob's transition kernels and withut resampling at $N = 1000$. $p^\star$ is $\pdata[E]$. \citet{manshausen2026accurateextremeeventlikelihoods} are given the exact divergence.}
\label{tab:fig2_bias_vs_k}
\begin{tabular}{@{}c c r cc@{}}
\toprule
$\tau$ & $p^\star$ & $K$ & \citet{manshausen2026accurateextremeeventlikelihoods} & DireSMC \\
\midrule
\multirow{4}{*}{7}  & \multirow{4}{*}{$6.88\times10^{-3}$} & 50  & $1.466 \pm 0.001$ & $\mathbf{0.950} \pm 0.006$ \\
                    &                                      & 100 & $1.233 \pm 0.001$ & $\mathbf{0.972} \pm 0.003$ \\
                    &                                      & 200 & $1.128 \pm 0.001$ & $\mathbf{0.981} \pm 0.003$ \\
                    &                                      & 500 & $1.069 \pm 0.001$ & $\mathbf{0.990} \pm 0.002$ \\
\addlinespace
\multirow{4}{*}{8}  & \multirow{4}{*}{$1.28\times10^{-3}$} & 50  & $1.701 \pm 0.002$ & $\mathbf{0.957} \pm 0.005$ \\
                    &                                      & 100 & $1.336 \pm 0.001$ & $\mathbf{0.970} \pm 0.003$ \\
                    &                                      & 200 & $1.178 \pm 0.001$ & $\mathbf{0.980} \pm 0.003$ \\
                    &                                      & 500 & $1.091 \pm 0.001$ & $\mathbf{0.990} \pm 0.002$ \\
\addlinespace
\multirow{4}{*}{9}  & \multirow{4}{*}{$1.62\times10^{-4}$} & 50  & $2.049 \pm 0.002$ & $\mathbf{0.963} \pm 0.006$ \\
                    &                                      & 100 & $1.473 \pm 0.002$ & $\mathbf{0.972} \pm 0.004$ \\
                    &                                      & 200 & $1.240 \pm 0.001$ & $\mathbf{0.981} \pm 0.003$ \\
                    &                                      & 500 & $1.117 \pm 0.001$ & $\mathbf{0.990} \pm 0.002$ \\
\addlinespace
\multirow{4}{*}{10} & \multirow{4}{*}{$1.40\times10^{-5}$} & 50  & $2.557 \pm 0.003$ & $\mathbf{0.978} \pm 0.007$ \\
                    &                                      & 100 & $1.653 \pm 0.002$ & $\mathbf{0.974} \pm 0.004$ \\
                    &                                      & 200 & $1.317 \pm 0.002$ & $\mathbf{0.981} \pm 0.003$ \\
                    &                                      & 500 & $1.147 \pm 0.001$ & $\mathbf{0.990} \pm 0.002$ \\
\bottomrule
\end{tabular}
\end{table}

\end{document}